\def\1{\bm{1}}
\def\eps{{\epsilon}}
\def\vone{{\bm{1}}}
\def\vb{{\bm{b}}}
\def\ve{{\bm{e}}}
\def\vp{{\bm{p}}}
\def\vr{{\bm{r}}}
\def\vv{{\bm{v}}}
\def\vx{{\bm{x}}}
\def\vy{{\bm{y}}}
\def\vz{{\bm{z}}}
\def\mA{{\bm{A}}}
\def\mD{{\bm{D}}}
\def\mH{{\bm{H}}}
\def\mI{{\bm{I}}}
\def\mQ{{\bm{Q}}}
\def\mPhi{{\bm{\Phi}}}
\DeclareMathAlphabet{\mathsfit}{\encodingdefault}{\sfdefault}{m}{sl}
\SetMathAlphabet{\mathsfit}{bold}{\encodingdefault}{\sfdefault}{bx}{n}
\def\gE{{\mathcal{E}}}
\def\gG{{\mathcal{G}}}
\def\gH{{\mathcal{H}}}
\def\gM{{\mathcal{M}}}
\def\gN{{\mathcal{N}}}
\def\gO{{\mathcal{O}}}
\def\gP{{\mathcal{P}}}
\def\gQ{{\mathcal{Q}}}
\def\gS{{\mathcal{S}}}
\def\gT{{\mathcal{T}}}
\def\gV{{\mathcal{V}}}
\def\sR{{\mathbb{R}}}
\newcommand{\R}{\mathbb{R}}
\DeclareMathOperator*{\argmax}{arg\,max}
\DeclareMathOperator*{\argmin}{arg\,min}
\newcommand{\vol}{\operatorname{vol}}
\newcommand{\supp}{\operatorname{supp}}
\newcommand{\prox}{\operatorname{prox}}
\newcommand{\N}{\mathcal N}
\newcommand{\vpi}{\bm \pi}
\newcommand{\mPi}{\bm \Pi}
\newcommand{\zero}{\bm 0}
\newcommand*\mean[1]{\overline{#1}}
\newtheorem*{rep@theorem}{\rep@title}
\newcommand{\newreptheorem}[2]{%
\newenvironment{rep#1}[1]{%
 \def\rep@title{#2 \ref{##1}}%
 \begin{rep@theorem}}%
 {\end{rep@theorem}}}
\theoremstyle{plain}
\newtheorem{theorem}{Theorem}[section]
\newtheorem{lemma}[theorem]{Lemma}
\newtheorem{corollary}[theorem]{Corollary}
\theoremstyle{definition}
\newtheorem{definition}[theorem]{Definition}
\theoremstyle{remark}
\newtheorem{remark}[theorem]{Remark}
\title{Accelerated Evolving Set Processes for Local PageRank Computation}
\author{%
Binbin Huang $^1$ \quad Luo Luo $^{1,2}$ \quad Yanghua Xiao $^3$  \quad Deqing Yang $^{1,3}$  \quad Baojian Zhou $^{1,3}$\thanks{Corresponding author} \\
\textsuperscript{1} School of Data Science, Fudan University,\\ 
\textsuperscript{2} Shanghai Key Laboratory for Contemporary Applied Mathematics, \\
\textsuperscript{3} Shanghai Key Laboratory of Data Science, School of Computer Science, Fudan University \\
\texttt{bbhuang24@m.fudan.edu.cn}\\
\texttt{luoluo,shawyh,yangdeqing,bjzhou@fudan.edu.cn}
}
\begin{document}

\maketitle

\vspace{-3mm}

\begin{abstract}

This work proposes a novel framework based on \textit{nested evolving set processes} to accelerate Personalized PageRank (PPR) computation. At each stage of the process, we employ a \textit{localized} inexact proximal point iteration to solve a simplified linear system. We show that the time complexity of such localized methods is upper bounded by $\min\{\tilde{\gO}(R^2/\eps^2), \tilde{\gO}(m)\}$ to obtain an $\eps$-approximation of the PPR vector, where $m$ denotes the number of edges in the graph and $R$ is a constant defined via nested evolving set processes. Furthermore, the algorithms induced by our framework require solving only $\tilde{\gO}(1/\sqrt{\alpha})$ such linear systems, where $\alpha$ is the damping factor. When $1/\eps^2\ll m$, this implies the existence of an algorithm that computes an $\eps$-approximation of the PPR vector with an overall time complexity of $\tilde\gO(R^2 / (\sqrt{\alpha}\eps^2))$, independent of the underlying graph size. Our result resolves an open conjecture from existing literature \citep{zhou2024iterative,fountoulakis2022open}. Experimental results on real-world graphs validate the efficiency of our methods, demonstrating significant convergence in the early stages.
\end{abstract}

\section{Introduction}
\label{sect:introduction}
We study efficient \textit{local} methods for computing the PPR vector $\vpi\in{\R^{n}}$, defined by 
\begin{equation}
\left(\mI-(1-\alpha)(\mI+\mA\mD^{-1})/2)\right)\vpi=\alpha \ve_s, \label{equ:PPR}
\end{equation}
where $\ve_s\in{\R^{n}}$ is the standard basis vector corresponding to the source node $s\in \gV$, and $\alpha \in (0,1)$ is the damping factor. Here, $\mA\in{\R^{n\times n}}$ and $\mD\in{\R^{n\times n}}$ are the adjacency and degree matrices of an undirected graph $\gG(\gV,\gE)$ with $n = |\gV|$ nodes and $m = |\gE|$ edges, respectively. The vector $\vpi$ measures the importance of nodes in $\gV$ from the perspective of the source node $s$, which is the steady-state distribution of a lazy random walk on $\gG$. Specifically, given a precision parameter $\eps$, our goal is to design local algorithms that compute an $\eps$-approximation $\hat{\vpi}$, i.e., one that satisfies $\|\mD^{-1}(\hat{\vpi} - \vpi)\|_\infty \leq \eps$, while avoiding access to the entire graph. 

\citet{andersen2006local} proposed the first local method,  called the Approximate Personalized PageRank (APPR) algorithm, to approximate $\vpi$, achieving a time complexity of $\gO (1/(\alpha\eps))$.  To further characterize the locality of $\vpi$, \citet{fountoulakis2019variational} introduced a variational formulation of Eq.~\eqref{equ:PPR} and applied a proximal gradient method to compute local estimates with a comparable time complexity to APPR.  Both methods critically rely on the monotonically decreasing $\ell_1$-norm of the residual (or gradient) to ensure the time complexity remains locally bounded.

Note that Eq.~\eqref{equ:PPR} can be reformulated as a strongly-convex minimization problem with condition number $1/\alpha$. It is natural to ask whether accelerated local methods can be designed with a time complexity that depends on $1/\sqrt{\alpha}$ \cite{fountoulakis2022open}. However, the main challenge lies in the fact that accelerated methods \cite{nesterov2003introductory,d2021acceleration} typically involve momentum terms, which disrupt the key property, namely the monotonically decreasing $\ell_1$-norm of the residual (or gradient), relied on by existing local algorithms            \cite{andersen2006local,fountoulakis2019variational}. As a result, standard accelerated methods may access up to $n$ nodes per iteration, leading to known upper bounds of $\tilde{\gO}(m/\sqrt{\alpha})$ for solving Eq.~\eqref{equ:PPR}. To preserve monotonicity, \citet{martinez2023accelerated} proposed a subspace-pursuit style algorithm that performs accelerated projected gradient descent (APGD) in each iteration; however, the number of APGD calls required can still be as large as $\gO(|\gS^*|)$, where $\gS^*$ is the support of the optimal solution. Recently, \citet{zhou2024iterative} introduced a localized Chebyshev method inspired by the evolving set process \cite{morris2003evolviing}. However, the proposed method is heuristic, and its convergence remains unknown, as the accelerated local bounds rely heavily on the assumption that the gradient norm decreases at each iteration.

This work develops a locally Accelerated Evolving Set Process (AESP) framework that provably runs $\tilde{\gO}(1/\sqrt{\alpha})$ short evolving set processes instead of a single long one. Our AESP is based on inexact accelerated proximal point iterations to accelerate APPR. Each stage guarantees a monotonic decrease in the $\ell_1$-norm of the gradient by using local methods to solve a regularized PPR linear system with a constant condition number. Hence, it converges faster than APPR in the early stages.

Let $\mean{\vol}(\gS_t)$ and $\mean{\gamma}_t$ denote the average volume and the average $\ell_1$-norm of the gradient ratio of the active nodes processed at the $t$-th round. We show that each evolving set process has a time complexity of $\tilde{\gO} (\mean{\vol}(\gS_t)/\mean{\gamma}_t)$, and that AESP-induced algorithms have a total time complexity of $\tilde{\gO}(\mean{\vol}(\gS_t)/(\sqrt{\alpha}\mean{\gamma}_t))$, matching the accelerated bound conjectured by \citet{zhou2024iterative}. 
Additionally, we prove that $\mean{\vol}(\gS_t)/\mean{\gamma}_t$ admits an upper bound of $\min\{\gO(R^2/\eps^2), 2 m\}$, where $R$ is a constant defined via nested evolving set processes. As a result, the algorithms induced by AESP achieve a time complexity bound that reflects a trade-off between the dependence on the condition number $1/\alpha$ and the per-round time complexity $\gO(R^2/\eps^2)$. The AESP framework is also well-suited for solving the variational formulation of Eq.~\eqref{equ:PPR}, as studied by \citet{fountoulakis2019variational}, with the potential to achieve an accelerated time complexity. 

To summarize,

\begin{itemize}[leftmargin=1.0em]
\item  We propose an Accelerated Evolving Set Process (AESP) framework, which computes an $\eps$-approximation for PPR using $\tilde{\gO}(1/\sqrt{\alpha})$ short evolving set process. Our framework is built upon the inexact proximal point algorithm, and naturally extends to solving the variational formulation of Eq.~\eqref{equ:PPR}. Furthermore, the algorithms induced by AESP are parameter-free. 

\item  Our accelerated methods are guaranteed to converge without any additional assumptions. We establish theoretical guarantees for the induced algorithms with the time complexity of $\tilde{\gO}(\mean{\vol}(\gS_t)/(\sqrt{\alpha}\mean{\gamma}_t))$, which matches the accelerated bound conjectured in the existing literature. This result improves upon existing $\tilde{\gO}(\mean{\vol}(\gS_t)/(\alpha\mean{\gamma}_t))$ from standard local methods. Furthermore, we show that $\mean{\vol}(\gS_t)/\mean{\gamma}_t$ is bounded above by $\min\{\gO(R^2/\eps^2), 2m\}$, which implies that the overall time complexity $\tilde{\gO}(R^2/(\sqrt{\alpha}\eps^2)$ is independent of the underlying graph size when $1/\eps^2 \ll m$.

\item  	Experimental results on large-scale graphs confirm the efficiency of our method. Unlike standard local methods, AESP-based methods demonstrate a significant speed-up during the early stages. Our code is publicly available for review and will be open-sourced upon publication.\footnote{For details on the importance of local PPR computation and related work, see Appendix~\ref{sect:related-work}.}
\end{itemize}

\section{Preliminaries}
\label{sect:preliminaries}

\textbf{Notations and definitions.} Throughout this paper, we assume that the underlying simple graph $\gG(\gV,\gE)$ is undirected and connected, with $n=|\gV|$ nodes and $m=|\gE|$ edges. The adjacency matrix of $\gG$ is denoted by $\mA=[a_{u v}]$, where $a_{u v}=1$ if there exists an edge $(u, v)\in \gE$, and $a_{u v} = 0$ otherwise. The set of all neighbors of a node $v$ is denoted by $\N(v)$. The degree matrix $\mD$ is diagonal and has each entry $D_{v v} = d_v = |\N(v)|$. For $\vx\in \R^n$, the \textit{support} of $\vx$, denoted by $\supp(\vx)$, is the set of its nonzero indices: $\supp(\vx) := \{v\in[n]: x_v\ne 0\}$. The \textit{volume} of a node set $\gS\subseteq \gV$ is defined as the sum of all node degrees in $\gS$, i.e., $\vol(\gS) := \sum_{v\in \gS} d_v$. Note $\vol(\gV) = 2m$. For an integer $T$, we denote $[T] := \{1,2,\ldots,T\}$. 

We say that a differentiable function $g:\R^n\rightarrow \R$ is $\mu$-\textit{strongly convex} if there exists a constant $\mu>0$ such that $\forall \vx, \vy \in \R^n$, $g(\vy) \geq g(\vx)+\left\langle \nabla g(\vx), \vy- \vx\right\rangle+ \mu\|\vx - \vy\|_2^2/2$, 
where $\nabla g(\vx)$ is the gradient of $g$ at $\vx$. 
We say $g:\R^n\rightarrow \R$ is $L$-\textit{smooth} if there exists $L > 0$ such that $\forall \vx, \vy \in \R^n$, $g(\vy)\leq g(\vx) + \left\langle \nabla g(\vx), \vy - \vx\right\rangle +  L\|\vx-\vy\|_2^2/2$. The $u$-th entry of $\nabla g(\vx)$ is denoted as $\nabla_u g(\vx)$. When $g$ is convex and given a smoothing parameter $\eta$, the \textit{proximal mapping} of $g$ at $\vy$ is given by
\begin{equation}
\operatorname{prox}_{g/\eta}(\vy)= \argmin_{\vx \in \R^n} \left\{g(\vx) + \frac{\eta}{2} \|\vx -\vy\|_2^2\right\}, \text{where } \eta > 0. \label{equ:prox-operator-g/eta}
\end{equation}
With a slight abuse of notation, we define the $\mD^{1/2}$-\textit{scaled gradient} of $g$ at $\vx$ as $\nabla g^{1/2}(\vx) := \mD^{1/2} \nabla g(\vx) $, and the $\mD^{-1/2}$-\textit{scaled gradient} as $\nabla g^{- 1/2}(\vx) := \mD^{-1/2}\nabla g(\vx)$.

\subsection{Problem reformulations and properties}

We solve the linear system in Eq.~\eqref{equ:PPR} by reformulating it as the following optimization problem
\begin{equation}
\min_{\vx\in \R^n} \Big\{ f(\vx) \triangleq \frac{1}{2}\vx^\top \mQ \vx - \alpha\vx^\top \mD^{-1/2}\vb \Big\}, \label{equ:f(x)} \tag{\textcolor{blue}{P1}}
\end{equation}
where $\mQ \triangleq  \frac{1+\alpha}{2} \mI - \frac{1-\alpha}{2} \mD^{-1/2} \mA \mD^{-1/2}$, with the eigenvalues satisfying $\lambda(\mQ) \in [\alpha,1]$, and $\vb$ is a sparse vector. The function $f$ is both $\mu$-strongly convex and $L$-smooth, with $\mu=\alpha$ and $L= 1$. The optimal solution of \eqref{equ:f(x)} is denoted by $\vx_f^* := \alpha\mQ^{-1}\mD^{-1/2}\vb$. When $\vb = \ve_s$, it implies $\vpi:= \mD^{1/2}\vx_f^*$.  We define the set of $\eps$-approximation solutions to \eqref{equ:f(x)} as
\begin{equation}
\gP(\eps,\alpha,\vb,\gG) \triangleq \left\{ \vx : \| \mD^{-1/2}(\vx  - \vx_f^*) \|_\infty \leq \eps\right\}. \label{equ:solution-set}
\end{equation}
Based on the above reformulation, we aim to design faster local methods that find $\hat{\vx} \in \gP$. To ensure $\hat{\vx}$ is sparse, prior works  \cite{fountoulakis2019variational,martinez2023accelerated} considered the following variational reformulation 
\begin{equation}
\min_{\vx \in \R^n } \left\{ \psi(\vx) \triangleq f(\vx) + \hat{\eps} \alpha \|\mD^{1/2}\vx\|_1 \right\}. \tag{\textcolor{blue}{P2}}
\label{equ:psi=f(x)+l1} 
\end{equation}
Let $\vx_{\psi}^* \triangleq \argmin_{\vx\in\R^n} \psi(\vx)$ be the optimal solution of \eqref{equ:psi=f(x)+l1}. When $\hat{\eps} = \eps$, the first-order optimal condition implies that $\vx_{\psi}^* \in \gP(\eps, \alpha, \ve_s,\gG)$. The next two lemmas present properties of PPR vectors and the optimal solutions of our reformulated problems.\footnote{Proofs of lemmas and theorems are postponed to the Appendix \ref{sect:a}.}

\begin{lemma}[Properties of $\vpi$]
Define the PPR matrix $\mPi_{\alpha} = \alpha \left(\frac{1+\alpha}{2} \mI - \frac{1-\alpha}{2} \mA\mD^{-1} \right)^{-1}$. Let the estimate-residual pair $(\vp,\vr)$ for Eq.~\eqref{equ:PPR} satisfy $\vr = \ve_s - \mPi_\alpha^{-1}\vp$. Then,
\begin{itemize}[leftmargin=1.0em]
\item  The PPR vector is given by $\vpi = \mPi_\alpha \ve_s$, which is a probability distribution, i.e., $\forall i\in \gV$, $\pi_i > 0$ and $\|\vpi\|_1 = 1$. For $\eps > 0$, the stop condition $\|\mD^{-1} \vr\|_\infty < \eps$ ensures $\|\mD^{-1} (\vp - \vpi)\|_\infty < \eps$.
\item The matrix $\alpha\mQ^{-1}$ is similar to the matrix $\mPi_\alpha$, i.e., $\alpha \mD^{1/2}\mQ^{-1}\mD^{-1/2} = \mPi_\alpha$. Furthermore, the $\ell_1$-norm of $\mPi_\alpha$ satisfies $\|\mPi_\alpha\|_1 = \|\mD^{-1}\mPi_\alpha \mD\|_\infty = 1$. 
\end{itemize}
\label{lemma:properties-ppv}
\end{lemma}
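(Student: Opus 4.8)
The plan is to establish each bullet essentially by unwinding the definition of $\mPi_\alpha$ and exploiting the probabilistic interpretation of the lazy random walk. First I would observe that $\mPi_\alpha = \alpha(\mI - (1-\alpha)\mW)^{-1}$ where $\mW \triangleq \tfrac12(\mI + \mA\mD^{-1})$ is the (column-stochastic) lazy random-walk transition matrix — indeed $\tfrac{1+\alpha}{2}\mI - \tfrac{1-\alpha}{2}\mA\mD^{-1} = \mI - (1-\alpha)\mW$ after collecting terms. Hence $\mPi_\alpha = \alpha\sum_{k\ge 0}(1-\alpha)^k\mW^k$, the Neumann series converging since $\|\mW\|_{1\to 1}=1$ (column-stochastic, nonnegative) and $(1-\alpha)<1$. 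From this series: each entry of $\mPi_\alpha$ is a convergent sum of nonnegative terms, and $\mPi_\alpha\ve_s$ has a strictly positive entry at $s$ (the $k=0$ term); positivity at \emph{every} node then follows from connectedness of $\gG$, since for any $v$ there is a walk of some length $k$ from $s$ to $v$ with positive probability, contributing a positive amount to $(\mPi_\alpha\ve_s)_v$. For $\|\vpi\|_1=1$: apply $\vone^\top$ and use $\vone^\top\mW=\vone^\top$, giving $\vone^\top\mPi_\alpha = \alpha\sum_k(1-\alpha)^k\vone^\top = \vone^\top$, so $\|\vpi\|_1 = \vone^\top\mPi_\alpha\ve_s = 1$.

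For the stopping-condition claim, I would write $\vp - \vpi = \mPi_\alpha\vr$ directly from $\vr = \ve_s - \mPi_\alpha^{-1}\vp$ and $\vpi = \mPi_\alpha\ve_s$. Then $\mD^{-1}(\vp-\vpi) = (\mD^{-1}\mPi_\alpha\mD)(\mD^{-1}\vr)$, so $\|\mD^{-1}(\vp-\vpi)\|_\infty \le \|\mD^{-1}\mPi_\alpha\mD\|_\infty\,\|\mD^{-1}\vr\|_\infty$; it remains to show the matrix norm equals $1$, which is exactly the content of the second bullet and is established next, so strictness transfers.

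For the second bullet, the similarity $\alpha\mD^{1/2}\mQ^{-1}\mD^{-1/2} = \mPi_\alpha$ follows by conjugating $\mQ = \tfrac{1+\alpha}{2}\mI - \tfrac{1-\alpha}{2}\mD^{-1/2}\mA\mD^{-1/2}$: note $\mD^{1/2}\mQ\mD^{-1/2} = \tfrac{1+\alpha}{2}\mI - \tfrac{1-\alpha}{2}\mA\mD^{-1} = \tfrac1\alpha\mPi_\alpha^{-1}$, hence inverting gives the claim. For the norm identity, $\|\mPi_\alpha\|_1$ (the induced $\ell_1\to\ell_1$ norm, i.e.\ max absolute column sum) equals $\|\mD^{-1}\mPi_\alpha\mD\|_\infty$ (max absolute row sum) by the standard duality $\|\mM\|_1 = \|\mM^\top\|_\infty$ together with $\mPi_\alpha^\top = \mD(\mD^{-1}\mPi_\alpha\mD)\mD^{-1}\cdot(\text{something})$ — more cleanly, since $\mW$ is column-stochastic, $\mD^{-1}\mW\mD$ is row-stochastic (this is the reversibility/detailed-balance symmetry $\mD^{-1}\mA\mD^{-1}$ being symmetric), so $\mD^{-1}\mPi_\alpha\mD = \alpha\sum_k(1-\alpha)^k(\mD^{-1}\mW\mD)^k$ is a nonnegative matrix whose rows each sum to $\alpha\sum_k(1-\alpha)^k = 1$; thus its $\infty$-norm is $1$. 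The equality $\|\mPi_\alpha\|_1 = \|\mD^{-1}\mPi_\alpha\mD\|_\infty$ then follows because $\|\mD^{-1}\mPi_\alpha\mD\|_\infty = \|(\mD^{-1}\mPi_\alpha\mD)^\top\|_1 = \|\mD\mPi_\alpha^\top\mD^{-1}\|_1$, and one checks $\mD\mPi_\alpha^\top\mD^{-1} = \mPi_\alpha$ using the symmetry of $\mD^{-1/2}\mA\mD^{-1/2}$ (equivalently, $\mPi_\alpha$ and $\mD^{-1}\mPi_\alpha\mD$ are transposes of one another up to this reversibility conjugation), so both norms equal $1$.

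I expect the only mildly delicate point to be bookkeeping the three related matrices ($\mW$, $\mPi_\alpha$, $\mD^{-1}\mPi_\alpha\mD$, and the symmetric $\mQ$) and making the row-sum/column-sum duality airtight via reversibility; everything else is a short Neumann-series computation plus connectedness for strict positivity. No step requires anything beyond linear algebra and the stated setup, so I would present it as a direct verification rather than invoking external theorems.
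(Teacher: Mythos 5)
Your proposal is correct and its load-bearing steps coincide with the paper's own proof: the stopping-condition bound via $\vpi - \vp = \mPi_\alpha \vr$ and $\|\mD^{-1}(\vp-\vpi)\|_\infty \le \|\mD^{-1}\mPi_\alpha\mD\|_\infty\|\mD^{-1}\vr\|_\infty$, and the identity $\|\mD^{-1}\mPi_\alpha\mD\|_\infty = \|\mPi_\alpha^\top\|_\infty = \|\mPi_\alpha\|_1$ via the symmetry of $\mA$. Where you go further is in actually justifying the facts the paper merely asserts: the paper states that $\vpi$ is a probability distribution "directly from the definition," gives no explicit argument for the similarity $\alpha\mD^{1/2}\mQ^{-1}\mD^{-1/2}=\mPi_\alpha$, and writes $\|\mPi_\alpha\|_1=1$ without exhibiting nonnegativity or the unit column sums. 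Your Neumann-series expansion $\mPi_\alpha=\alpha\sum_{k\ge 0}(1-\alpha)^k\mW^k$ with $\mW=\tfrac12(\mI+\mA\mD^{-1})$ column-stochastic supplies all of these (positivity from connectedness, $\|\vpi\|_1=1$ from $\vone^\top\mW=\vone^\top$, and the norm value from stochasticity), so your write-up is the more self-contained of the two.

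Two small slips to fix before writing it out: the residual identity gives $\vp-\vpi=-\mPi_\alpha\vr$ (harmless after taking norms), and the conjugation should read $\mD^{1/2}\mQ\mD^{-1/2}=\tfrac{1+\alpha}{2}\mI-\tfrac{1-\alpha}{2}\mA\mD^{-1}=\alpha\,\mPi_\alpha^{-1}$, not $\tfrac{1}{\alpha}\mPi_\alpha^{-1}$; inverting the corrected identity yields exactly $\alpha\mD^{1/2}\mQ^{-1}\mD^{-1/2}=\mPi_\alpha$, whereas your stated constant would produce an extra factor of $\alpha^2$. Neither affects the structure of the argument.
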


\begin{lemma}[Properties of $\vx_f^*$ and $\vx_\psi^*$]
Denote the gradient of $f$ at $\vx$ as $\nabla f(\vx) := \mQ \vx - \alpha\mD^{-1/2}\vb$ and optimal solution $\vx_f^* = \alpha\mQ^{-1}\mD^{-1/2}\vb$ satisfying $\vpi : = \mD^{1/2} \vx_f^*$. Define $\vp = \mD^{1/2}\vx$. Then,
\begin{itemize}[leftmargin=1.0em]
\item The stop condition $\|\mD^{-1/2}\nabla f(\vx)\|_\infty < \alpha\eps$ implies $\|\mD^{-1}(\vp - \vpi)\|_\infty < \eps.$
\item The objective $f$ is $\mu$-strongly convex and $L$-smooth with two constants $\mu=\alpha$ and $L=1$. When $\hat{\eps} = \eps$, then $\vx_\psi^* \in \gP(\eps,\alpha,\ve_s,\gG)$ and the solution is sparse, i.e., $|\supp(\vx_\psi^*)| \leq 1/\hat{\eps}$.
\end{itemize}
\label{lemma:properties-f}
\end{lemma}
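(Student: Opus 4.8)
The plan is to treat the two items separately; both reduce to Lemma~\ref{lemma:properties-ppv} together with the first-order optimality of the reformulated problems, so no new machinery is needed. For the stopping criterion, I would first rewrite the gradient in terms of the error: since $\nabla f(\vx) = \mQ\vx - \alpha\mD^{-1/2}\vb$ and $\mQ\vx_f^* = \alpha\mD^{-1/2}\vb$, we get $\nabla f(\vx) = \mQ(\vx - \vx_f^*)$, hence $\vx - \vx_f^* = \mQ^{-1}\nabla f(\vx)$. Writing $\vp = \mD^{1/2}\vx$ and $\vpi = \mD^{1/2}\vx_f^*$, this gives
\[
\mD^{-1}(\vp - \vpi) = \mD^{-1/2}(\vx - \vx_f^*) = \big(\mD^{-1/2}\mQ^{-1}\mD^{1/2}\big)\,\mD^{-1/2}\nabla f(\vx).
\]
By Lemma~\ref{lemma:properties-ppv}, $\alpha\mD^{1/2}\mQ^{-1}\mD^{-1/2} = \mPi_\alpha$, so $\mD^{-1/2}\mQ^{-1}\mD^{1/2} = \tfrac1\alpha\mD^{-1}\mPi_\alpha\mD$, and $\|\mD^{-1}\mPi_\alpha\mD\|_\infty = 1$ by the same lemma. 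Submultiplicativity of the operator $\infty$-norm then yields $\|\mD^{-1}(\vp - \vpi)\|_\infty \le \tfrac1\alpha\|\mD^{-1/2}\nabla f(\vx)\|_\infty$, from which the claimed implication is immediate.

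For the second item, strong convexity and smoothness are immediate: $f$ is a quadratic with Hessian $\mQ$, whose eigenvalues lie in $[\alpha,1]$, so $\mu = \alpha$ and $L = 1$. For the membership $\vx_\psi^* \in \gP(\eps,\alpha,\ve_s,\gG)$ with $\hat\eps = \eps$, I would read the subgradient optimality condition of \eqref{equ:psi=f(x)+l1}, namely $\vzero \in \nabla f(\vx_\psi^*) + \hat\eps\alpha\,\mD^{1/2}\,\partial\|\mD^{1/2}\vx_\psi^*\|_1$, coordinatewise: it forces $|\nabla_v f(\vx_\psi^*)| \le \hat\eps\alpha\, d_v^{1/2}$ for every $v$, i.e. $\|\mD^{-1/2}\nabla f(\vx_\psi^*)\|_\infty \le \hat\eps\alpha$. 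Plugging $\vx = \vx_\psi^*$ into the bound of the previous paragraph and recalling the definition of $\gP$ in \eqref{equ:solution-set} gives $\vx_\psi^* \in \gP(\eps,\alpha,\ve_s,\gG)$.

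The sparsity bound is the delicate part, and I would organize it around the PPR residual $\vr_\psi \triangleq \ve_s - \mPi_\alpha^{-1}\vp_\psi$ with $\vp_\psi \triangleq \mD^{1/2}\vx_\psi^*$. First establish $\vx_\psi^* \ge \vzero$ by a positive-part exchange argument: writing $\vx_\psi^* = \vx_+ - \vx_-$ with $\vx_\pm \ge \vzero$ of disjoint support, the $\ell_1$-penalty and the linear term $-\alpha\vx^\top\mD^{-1/2}\ve_s$ do not increase when passing from $\vx_\psi^*$ to $\vx_+$ (using $\mD^{-1/2}\ve_s \ge \vzero$), and neither does the quadratic term, because the off-diagonal entries of $\mQ$ are nonpositive while $\vx_+^\top\vx_- = 0$ and $\vx_-^\top\mQ\vx_- \ge 0$; uniqueness of the minimizer of the strongly convex $\psi$ then forces $\vx_\psi^* = \vx_+ \ge \vzero$. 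Next, using $\mD^{1/2}\mQ\mD^{-1/2} = \alpha\mPi_\alpha^{-1}$ one checks $\mD^{1/2}\nabla f(\vx_\psi^*) = -\alpha\vr_\psi$; combined with the optimality condition rewritten as $\mD^{1/2}\nabla f(\vx_\psi^*) = -\hat\eps\alpha\,\mD\vw^*$, where $\|\vw^*\|_\infty \le 1$ and $w_v^* = \sign((\vp_\psi)_v)$ on the support, this gives $\vr_\psi = \hat\eps\,\mD\vw^*$; in particular $|(\vr_\psi)_v| = \hat\eps\, d_v$ for all $v \in \supp(\vx_\psi^*)$, and $\vr_\psi \ge \vzero$ once $\vp_\psi \ge \vzero$. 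Finally, the conservation identity $\vp_\psi + \mPi_\alpha\vr_\psi = \vpi$, the column-stochasticity $\vone^\top\mPi_\alpha = \vone^\top$ (inherited from $\mA\mD^{-1}$), and $\|\vpi\|_1 = 1$ give $\|\vr_\psi\|_1 = 1 - \|\vp_\psi\|_1 \le 1$. Combining, $\hat\eps\,\vol(\supp(\vx_\psi^*)) = \sum_{v \in \supp(\vx_\psi^*)} |(\vr_\psi)_v| \le \|\vr_\psi\|_1 \le 1$, and since each degree is at least one, $|\supp(\vx_\psi^*)| \le \vol(\supp(\vx_\psi^*)) \le 1/\hat\eps$.

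The main obstacle is exactly this last item. The first item, the smoothness/strong-convexity claim, and the $\gP$-membership are all short norm and optimality manipulations; the sparsity bound is what is genuinely nontrivial, because it needs (a) the nonnegativity of $\vx_\psi^*$, which rests on the sign pattern of $\mQ$ together with uniqueness of the minimizer, and (b) the residual bound $\|\vr_\psi\|_1 \le 1$, which requires the conservation identity and the column-stochasticity of $\mPi_\alpha$ rather than any crude submultiplicative estimate (a naive bound only gives $|\supp(\vx_\psi^*)| \le 2m$).
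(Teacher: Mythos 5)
Your proof is correct. For the first bullet and the eigenvalue claim it coincides with the paper's own argument: write $\nabla f(\vx)=\mQ(\vx-\vx_f^*)$, pass through the similarity $\alpha\mD^{1/2}\mQ^{-1}\mD^{-1/2}=\mPi_\alpha$ from Lemma~\ref{lemma:properties-ppv}, and use $\|\mD^{-1}\mPi_\alpha\mD\|_\infty=1$. Where you genuinely diverge is the second half of item~2: the paper does not prove the membership $\vx_\psi^*\in\gP(\eps,\alpha,\ve_s,\gG)$ or the sparsity bound at all --- it simply states that these ``follow from the result of \citet{fountoulakis2019variational}'' --- whereas you reconstruct the full argument: nonnegativity of $\vx_\psi^*$ by the positive-part exchange exploiting the nonpositive off-diagonal of $\mQ$ and uniqueness of the strongly convex minimizer; the identity $\mD^{1/2}\nabla f(\vx_\psi^*)=-\alpha\vr_\psi$ turning the subgradient condition into $\vr_\psi=\hat\eps\,\mD\vw^*$; and the conservation identity plus column-stochasticity of $\mPi_\alpha$ to get $\|\vr_\psi\|_1\le 1$ and hence $\vol(\supp(\vx_\psi^*))\le 1/\hat\eps$. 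This buys a self-contained lemma at the cost of about a page; the paper buys brevity at the cost of an external dependency. One small point to tighten: your claim that $\vr_\psi\ge\vzero$ ``once $\vp_\psi\ge\vzero$'' conflates two cases --- on $\supp(\vx_\psi^*)$ nonnegativity comes from the sign condition $w_v^*=+1$ (which itself uses $\vx_\psi^*\ge\vzero$), while off the support it comes from $(\vp_\psi)_v=0$ killing the only negative term in $(\ve_s-\mPi_\alpha^{-1}\vp_\psi)_v$; both ingredients are present in your write-up, but the one-line justification as stated is not quite the reason in either case.
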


\subsection{Inexact accelerated proximal point framework}

The inexact accelerated proximal point iteration is a well-known technique to improve the convergence rate of ill-conditioned convex optimization problems. It approximately solves a sequence of well-conditioned subproblems using linearly convergent first-order methods, thereby reducing the overall computational cost (see Chapter 5 in \cite{d2021acceleration}). Catalyst \cite{lin2018catalyst} is a representative example of such methods. It employs a base algorithm to approximate the proximal operator, corresponding to solving an auxiliary strongly convex optimization problem. Specifically, starting with initial points $\vy^{(0)} = \vx^{(0)}$, for $t \geq 1$, Catalyst finds an approximate $\vx^{(t)} \approx \prox_{f/\eta}(\vy^{(t-1)})$ for solving \eqref{equ:f(x)}, and $\vx^{(t)} \approx \prox_{\psi/\eta}(\vy^{(t-1)})$ for solving \eqref{equ:psi=f(x)+l1}, where the $\prox$ operator is defined in Eq.~\eqref{equ:prox-operator-g/eta}. Given a smoothing parameter $\eta$ and an accuracy $\varphi>0$, if $\vx^{(t)}$ is guaranteed in the set of $\varphi$-approximations of the proximal operator $\prox_{f/\eta}(\vy^{(t-1)})$ denoted by $\gH(\varphi) \triangleq\left\{\vz \in \R^n : h(\vz)- h^* \leq \varphi\right\}$ with $h(\vz)=f(\vz) + \frac{\eta}{2}\| \vx - \vz\|_2^2$ and $h^*$ is the minimum of $h$. Then, $\vx^{(t)}$ attains $\gO(\varphi)$ precision by updating $\vy^{(t)} =\vx^{(t)} + \beta_t (\vx^{(t)} - \vx^{(t-1)})$ where $\{\beta_t\}_{t\geq 0}$ are momentum weights. However, directly applying this method still results in the standard accelerated time complexity of $\tilde{\gO}(m/\sqrt{\alpha})$.
The next section shows how to significantly reduce this bound to a local one using the AESP framework.

\section{Accelerated Evolving Set Processes}
\label{sect:main-results}

This section presents our main results. We first introduce the nested ESP and propose two local inexact proximal operators. We then establish the accelerated convergence rate of AESP. Finally, we discuss potential improvements to this rate and its connections to related problems.

\subsection{Nested evolving set process}

\begin{wrapfigure}{l}{.35\textwidth}
\centering
\vspace{-5mm}
\includegraphics[width=.9\linewidth]{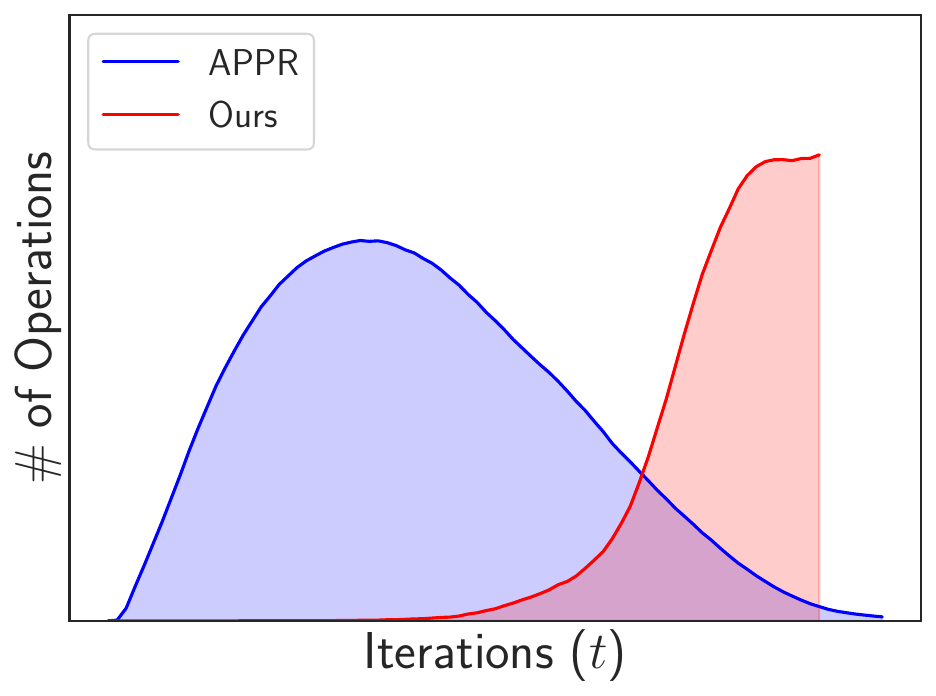}
\vspace{-2mm}
\caption{The comparison of volumes of ESP for APPR and Ours.}

\vspace{-7mm}
\label{fig:evolving-set-process-demo}
\end{wrapfigure}

Our method generates estimates $\{\vx^{(t)}\}_{t\geq 1}$. At each outer-loop iteration $t$, a local solver $\gM$ maintains a sequence of \emph{active sets} $\{\gS^{(k)}_t\}_{k\geq 0}$ over the inner-loop iterations $k$. Updates are restricted to nodes within the active set, which is used to refine the approximation $\vz^{(k)}_t$ in the inner loop.
The next set $\gS^{(k+1)}_t$ is determined solely by $\gS^{(k)}_t$. We refer to this procedure as the \textit{nested evolving set process}, defined as follows.

\begin{definition}[Nested evolving set process (ESP)]
Given the configuration $\theta \triangleq (\alpha, \vb, \gG)$, and a local method $\gM$, the nested evolving set process at outer-loop iteration $t$ generates a sequence of $\{\gS^{(k+1)}_t,\vz^{(k+1)}_t\}_{k\geq 0}$ according to the dynamic system $(\gS^{(k+1)}_t,\vz^{(k+1)}_t) = \mPhi_{\theta,\gM}(\gS^{(k)}_t, \vz^{(k)}_t)$, where $\gS^{(k)}_t \subseteq \gV$ is efficiently maintained using a queue data structure, avoiding accessing the entire graph. We say the process \textit{converges} when $\gS^{(K_t)}_t=\emptyset$ for some $K_t$. After $T$ outer-loop iterations, the generated sequences of active sets and estimation pairs are
\[
\begin{aligned}
(\gS_1^{(0)},\vz^{(0)}_1)\ \to \ \cdots\ & \ \to  \ ({\gS_1^{(K_1)} = \emptyset},\vz^{(K_1)}_1 = \vx^{(1)}),\  t=1;\\
\vdots \quad \quad \quad \quad \quad \quad &  \quad \quad \quad \quad  \vdots \\
(\gS_T^{(0)},\vz^{(0)}_T)\ \to \ \cdots\ & \ \to \ ({\gS_T^{(K_T)} = \emptyset},\vz^{(K_T)}_T = \vx^{(T)}),\ t=T.\\
\end{aligned}
\]
\end{definition}

At each outer-loop $t$, we denote the time complexity of the local solver $\gM$ by $\gT_t^{\gM}$, which dominates the total cost. The total time complexity $\gT$ of the nested ESP framework is then dominated by
\begin{equation}
\gT \triangleq \sum_{t=1}^T\gT^{\gM}_t := K_t\cdot \mean{\vol}(\gS_t), \quad\text{ where } \quad \mean{\vol}(\gS_t) \triangleq \frac{1}{K_t}\sum_{k=0}^{K_t-1} \vol(\gS_t^{(k)}), \label{equ:time-complexity}
\end{equation}
with $\mean{\vol}(\gS_t)$ representing the average volume of the local process at time $t$. Fig.~\ref{fig:evolving-set-process-demo} illustrates how the number of operations evolves during the updates in APPR \cite{andersen2006local} and in our method under this process.

With this nested ESP, accelerated methods can be seamlessly incorporated to improve the efficiency of local PPR computation. Specifically, given the problem configuration $\theta = (\alpha, \vb, \gG)$, at each outer-iteration $t$, we propose the following \textit{localized} Catalyst-style updates 
\begin{equation}
\text{AESP}\qquad \vx^{(t)} = \gM(\varphi_t, \vy^{(t-1)}, \eta, \alpha, \vb, \gG), \qquad \vy^{(t)} = \vx^{(t)} + \beta_t (\vx^{(t)} - \vx^{(t-1)}),
\end{equation}
where the momentum weight $\beta_t=(\alpha_{t-1}\left(1-\alpha_{t-1}\right))/(\alpha_{t-1}^2+\alpha_t)$, and $\alpha_t$ is updated in $(0,1)$ by solving the equation $\alpha_t^2=\left(1-\alpha_t\right) \alpha_{t-1}^2 + \alpha_0^2\alpha_t$ with an initial $\alpha_0$ (see the Scheme 2.2.9 in \cite{nesterov2003introductory}). For $t\geq 1$,  the local operator obtains $\vx^{(t)} \in \gH_t(\varphi_t)$, defined as
\begin{equation}
\vx^{(t)} \in \gH_t(\varphi_t) \triangleq\left\{\vz \in \R^n : h_t(\vz)- h_t^* \leq \varphi_t\right\}, \tag{\textcolor{blue}{C1}} \label{equ:criterion-01}
\end{equation}
where $h_t^*$ is the minimal value of $h_t$, which is the proximal operator objective at $t$-th iteration
\begin{equation}
h_t(\vz) \triangleq f(\vz) + \frac{\eta}{2}\|\vz - \vy^{(t-1)}\|_2^2. \label{equ:h_t(vz)}
\end{equation}
Thus, the minimizer $\vx_t^{*} \triangleq \argmin_{\vz \in \R^n } h_t(\vz)$ is given by $\vx_t^* := \prox_{f/\eta}(\vy^{(t-1)}) = (\mQ + \eta \mI)^{-1} \vb^{(t-1)}$ with $\vb^{(t-1)} = \alpha \mD^{-1/2}\vb + \eta \vy^{(t-1)}$. To characterize the time complexity of the AESP framework, it is convenient to define the following constant
\begin{equation}
R := \max \left\{\|\nabla h_t^{1/2}(\vz_t^{(0)})\|_1 / \|\nabla h_1^{1/2}(\vz_1^{(0)})\|_1 : \forall t \in [T]\right\}. \label{equ:constant-r}
\end{equation}
The following lemma is key to controlling the time complexity of the local algorithm $\gM$.
\begin{lemma}
Let $h_t$ be defined in Eq.~\eqref{equ:h_t(vz)}, and suppose that the initial point \( \vz_t^{(0)} \) of $t$-th process satisfies $\nabla h_t(\vz_t^{(0)}) \ne \zero$. If there exists a local algorithm $\gM$ such that $\|\nabla h_t^{1/2}(\vz_t^{(K_t)})\|_1 < \|\nabla h_t^{1/2}(\vz_t^{(0)}) \|_1$, then for a stopping condition $\| \nabla h_t^{-1/2}(\vz_t^{(k)})\|_\infty < \epsilon_t$ of $\gM$ with 
\begin{equation}
\epsilon_t \triangleq \max\left\{\sqrt{ \frac{(\mu+\eta)\varphi_t}{m}}, \frac{2(\eta+\alpha)\varphi_t}{ \|\nabla h_t^{1/2}(\vz_t^{(0)})\|_1 } \right\}, \text{ where } \varphi_t > 0, \label{equ:eps-t}
\end{equation}
the final solution $\vz_t^{(K_t)}$ is guaranteed in the ball, i.e., $\vz_t^{(K_t)} \in \gH_t(\varphi_t)$ as defined in~\eqref{equ:criterion-01}.
\label{lemma:inner-loop-eps-t}
\end{lemma}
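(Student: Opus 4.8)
The statement is a strong-convexity estimate matched term-by-term to the $\max$ defining $\epsilon_t$. The plan is: (i) convert the stopping gradient bound into a suboptimality bound on $h_t$ via strong convexity; (ii) bound the Euclidean gradient $\|\nabla h_t(\vz)\|_2$ by the $\mD^{-1/2}$-scaled quantity $\|\nabla h_t^{-1/2}(\vz)\|_\infty$ that the stopping rule controls, in two complementary ways; (iii) check each bound against the corresponding entry of $\epsilon_t$ and combine.

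First I would note that $h_t$ in Eq.~\eqref{equ:h_t(vz)} is $(\mu+\eta)$-strongly convex, being the sum of the $\mu$-strongly convex $f$ (Lemma~\ref{lemma:properties-f}, $\mu=\alpha$) and the $\eta$-strongly convex quadratic $\tfrac{\eta}{2}\|\cdot-\vy^{(t-1)}\|_2^2$. Since $\vx_t^*$ minimizes $h_t$, substituting $\vy=\vx_t^*$ into the strong-convexity inequality and minimizing the right-hand side over the displacement $\vx_t^*-\vz$ gives the standard bound
\begin{equation*}
h_t(\vz)-h_t^* \;\le\; \tfrac{1}{2(\mu+\eta)}\,\|\nabla h_t(\vz)\|_2^2 \qquad \text{for all } \vz\in\R^n .
\end{equation*}

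Next, for $\vg=\nabla h_t(\vz)$ I would use two elementary inequalities. The \emph{unconditional} one, $\|\vg\|_2^2=\sum_v d_v\,(g_v/\sqrt{d_v})^2\le \vol(\supp(\vg))\cdot\|\mD^{-1/2}\vg\|_\infty^2\le 2m\,\|\nabla h_t^{-1/2}(\vz)\|_\infty^2$, combines with the display above to give $h_t(\vz)-h_t^*\le \tfrac{m}{\mu+\eta}\|\nabla h_t^{-1/2}(\vz)\|_\infty^2$, which is below $\varphi_t$ once $\|\nabla h_t^{-1/2}(\vz)\|_\infty<\sqrt{(\mu+\eta)\varphi_t/m}$ --- the first entry of $\epsilon_t$ in Eq.~\eqref{equ:eps-t}. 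The \emph{conditional} one, $\|\vg\|_2^2=\sum_v(\sqrt{d_v}|g_v|)(|g_v|/\sqrt{d_v})\le \|\mD^{1/2}\vg\|_1\|\mD^{-1/2}\vg\|_\infty=\|\nabla h_t^{1/2}(\vz)\|_1\,\|\nabla h_t^{-1/2}(\vz)\|_\infty$, evaluated at $\vz=\vz_t^{(K_t)}$ and combined with the hypothesis $\|\nabla h_t^{1/2}(\vz_t^{(K_t)})\|_1<\|\nabla h_t^{1/2}(\vz_t^{(0)})\|_1$ (the only use of the monotonicity assumption, legitimate because $\nabla h_t(\vz_t^{(0)})\ne\zero$ makes $\|\nabla h_t^{1/2}(\vz_t^{(0)})\|_1>0$), gives $h_t(\vz_t^{(K_t)})-h_t^*<\tfrac{1}{2(\mu+\eta)}\|\nabla h_t^{1/2}(\vz_t^{(0)})\|_1\,\|\nabla h_t^{-1/2}(\vz_t^{(K_t)})\|_\infty$, which is below $\varphi_t$ once $\|\nabla h_t^{-1/2}(\vz_t^{(K_t)})\|_\infty<2(\mu+\eta)\varphi_t/\|\nabla h_t^{1/2}(\vz_t^{(0)})\|_1$ --- the second entry of $\epsilon_t$, since $\mu=\alpha$.

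Finally I would combine the branches. Writing $x:=\|\nabla h_t^{-1/2}(\vz_t^{(K_t)})\|_\infty$, both bounds hold simultaneously, so $h_t(\vz_t^{(K_t)})-h_t^*\le\min\{\tfrac{m}{\mu+\eta}x^2,\ \tfrac{1}{2(\mu+\eta)}\|\nabla h_t^{1/2}(\vz_t^{(0)})\|_1\,x\}$. The two expressions inside this $\min$ are strictly increasing in $x\ge0$ and each equals $\varphi_t$ exactly at one of the two arguments of the $\max$ in Eq.~\eqref{equ:eps-t}; hence whenever $x<\epsilon_t=\max\{\cdot,\cdot\}$, the variable $x$ lies below whichever of the two thresholds is larger, so the corresponding expression, and a fortiori the $\min$, is strictly below $\varphi_t$. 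Thus $h_t(\vz_t^{(K_t)})-h_t^*<\varphi_t$, i.e.\ $\vz_t^{(K_t)}\in\gH_t(\varphi_t)$ as in Eq.~\eqref{equ:criterion-01}. I do not anticipate a real obstacle; the only points needing care are that $\vol(\supp(\vg))\le 2m$ in the unconditional branch, and the book-keeping that justifies taking the $\max$ (rather than the $\min$) of the two thresholds --- the $\max$ keeps the inner stopping rule satisfiable, while the guarantee holds regardless of which branch triggers first.
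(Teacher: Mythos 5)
Your proof is correct and follows essentially the same route as the paper's: both arguments bound $h_t(\vz_t^{(K_t)})-h_t^*$ by $\tfrac{1}{2(\mu+\eta)}\|\mD^{1/2}\nabla h_t\|_1\,\|\mD^{-1/2}\nabla h_t\|_\infty$ and by $\tfrac{m}{\mu+\eta}\|\mD^{-1/2}\nabla h_t\|_\infty^2$, match the two cases to the two arguments of the $\max$ in Eq.~\eqref{equ:eps-t}, and invoke the monotonicity hypothesis only in the branch that divides by $\|\nabla h_t^{1/2}(\vz_t^{(0)})\|_1$. The only difference is cosmetic: the paper obtains the intermediate bound from the exact quadratic identity $h_t(\vz)-h_t^*=\tfrac12\nabla h_t(\vz)^\top\tilde{\mQ}^{-1}\nabla h_t(\vz)$ combined with the similarity $\mD^{-1/2}\mPi_{(\eta+\alpha)/(1+\eta)}\mD^{1/2}=(\eta+\alpha)\tilde{\mQ}^{-1}$ and $\|\mPi_{(\eta+\alpha)/(1+\eta)}\|_1=1$, whereas you use the generic strong-convexity inequality $h_t(\vz)-h_t^*\le\tfrac{1}{2(\mu+\eta)}\|\nabla h_t(\vz)\|_2^2$ followed by elementary H\"older splittings of $\|\nabla h_t(\vz)\|_2^2$ --- the constants come out identical.
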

 Lemma \ref{lemma:inner-loop-eps-t} provides a way to find $\vx^{(t)} \in \gH_t(\varphi_t)$ under the condition that $\gM$ satisfies the monotonicity property, $\|\nabla h_t^{1/2}(\vz_t^{(K_t)})\|_1 \leq \| \nabla h_t^{1/2}(\vz_t^{(0)}) \|_1$. The next subsection introduces two operators that satisfy this monotonicity property while maintaining local time complexity.

\subsection{Localized inexact proximal operators}
\label{sect:local-catalyst-p1}

This subsection introduces two localized inexact proximal operators with optimized step sizes, including local gradient descent (\textsc{LocGD}) and an optimized version of APPR (\textsc{LocAPPR}), for computing $\vz_t^{(K_t)} \in \gH_t(\varphi_t)$.\footnote{See implementation details of \textsc{LocGD} (Algorithm \ref{alg:local-gd}) and \textsc{LocAPPR} (Algorithm \ref{alg:local-appr-opt}) in  Appendix \ref{appendix:c}.} Given $\vz_t^{(0)} \in \R^n$, the first local operator is iteratively defined as
\begin{equation}
\hspace{-6mm}{\textsc{LocGD\quad\quad}} \vz_t^{(k+1)} = \vz_t^{(k)}- \frac{2\nabla h_t(\vz_t^{(k)}) \circ \vone_{\gS_t^{k}}}{ 1+ \alpha + 2\eta}, \text{ for } k \geq 0,
\label{equ:local-gd}
\end{equation}
where $\circ$ means element-wise multiplication. For each $u \in \gS_t^{k}$, then $u$-th entry of $\vone_{\gS_t^{k}}$ is $1$, otherwise it is $0$. The active node set $\gS_t^{k}$ is determined by the following activation condition $\gS_t^{k} = \{u : |\nabla_u h_t^{-1/2}(\vz_t^{(k)})| \geq \epsilon_t\}$. The stopping criterion for \textsc{LocGD} is when $\gS_t^{K_t} = \emptyset$, which is $\| \nabla h_t^{-1/2}(\vz) \|_\infty < \epsilon_t$ as stated in Lemma \ref{lemma:inner-loop-eps-t}. To analyze the convergence and time complexity of \textsc{LocGD}, we characterize the sequences $\{ \vol({\gS^{k}_t}) \}_{k\geq 0}$, and $\{\|\nabla h_t^{1/2}(\vz^{(k)}_t)\|_1\}_{k \geq 0}$ generated by $\mPhi_{\theta,\textsc{LocGD}}$. To quantify the ratio of progress, we define the average $\ell_1$-norm of the gradient ratio as 
\begin{equation}
\hspace{-1mm} \mean{\gamma}_t \triangleq \frac{1}{K_t} \sum_{k=0}^{K_t-1} \left\{\gamma_t^{(k)} \triangleq \frac{ \| \nabla h_t^{1/2}(\vz^{(k)}_t)\circ \bm 1_{\gS_t^{(k)}}\|_1}{ \| \nabla h_t^{1/2}(\vz^{(k)}_t)\|_1 } \right\}.  \label{equ:average-vol-st-and-gamma-t}
\end{equation}
When $\gS_t^k = \gV$, convergence is straightforward to observe, yielding $\mean{\gamma}_t = 1$ and $\mean{\vol}(\gS_t)/\mean{\gamma}_t = 2 m$. The quantity $\mean{\vol}(\gS_t)/\mean{\gamma}_t$ is a meaningful measure of time complexity as $\mean{\vol}(\gS_t) / \mean{\gamma}_t \leq 2m$. The following theorem establishes the local convergence rate and time complexity of \textsc{LocGD}.

\begin{theorem}[\textbf{Convergence of \textsc{LocGD}}]
Let $h_t$ be defined in Eq.~\eqref{equ:h_t(vz)}. \textsc{LocGD} (Algorithm \ref{alg:local-gd}) is used to minimize $h_t(\vz)$ and returns $\vz_t^{(K_t)} = \textsc{LocGD}(\varphi_t, \vy^{(t-1)},\eta,  \alpha,\vb,\gG) \in \gH_t(\varphi_t)$. Recall the $\mD^{1/2}$-scaled gradient $\nabla h_t^{1/2}(\vz_t^{(k)}) := \mD^{1/2}\nabla h_t(\vz_t^{(k)})$. For $k\geq 0$, the scaled gradient satisfies 
\[
\big\|\nabla h_t^{1/2}(\vz_t^{(k+1)})\big\|_1 \leq \big(1-\tau\gamma_t^{(k)}\big)\big\|\nabla h_t^{1/2}(\vz_t^{(k)})\big\|_1,
\]
where $\tau:=\frac{2(\alpha+\eta)}{1+\alpha+2\eta}$ and $\gamma_t^{(k)}$ is the ratio defined in Eq.~\eqref{equ:average-vol-st-and-gamma-t}. Assume  $\eps_t$ and stop condition are defined in Eq.~\eqref{equ:eps-t} of Lemma \ref{lemma:inner-loop-eps-t}, then the run time $\gT_t^{\textsc{LocGD}}$, as defined in Eq.~\eqref{equ:time-complexity}, is bounded by
\begin{equation}
\gT_t^{\textsc{LocGD}} \leq  \min\left\{ \frac{\mean{\vol}(\gS_t)}{\tau\mean{\gamma}_t} \log\frac{C_{h_t}^0}{C_{h_t}^{K_t}}, \frac{ C_{h_t}^0 - C_{h_t}^{K_t} }{\tau \eps_t} \right\}, \nonumber
\end{equation}
where $C_{h_t}^i = \|\nabla h_t^{1/2}(\vz_t^{(i)})\|_1$ denote constants. Furthermore, $\mean{\vol}(\gS_t)/\mean{\gamma}_t \leq \min \left\{ C_{h_t}^0/\eps_t, 2 m  \right\}$.
\label{thm:convergence-loc-gd}
\end{theorem}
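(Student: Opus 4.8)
The plan is to proceed in three stages: (i) establish the per-iteration contraction of the scaled gradient; (ii) convert that contraction into the two run-time bounds via the definition of $\mean\gamma_t$ and $\mean{\vol}(\gS_t)$; (iii) derive the bound on $\mean{\vol}(\gS_t)/\mean\gamma_t$.

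\textbf{Step 1: Per-iteration contraction.} Starting from the \textsc{LocGD} update \eqref{equ:local-gd}, I would write $\vz_t^{(k+1)} - \vz_t^{(k)} = -\frac{2}{1+\alpha+2\eta}\,\nabla h_t(\vz_t^{(k)})\circ \vone_{\gS_t^{k}}$ and plug this into $\nabla h_t(\vz) = (\mQ+\eta\mI)\vz - \vb^{(t-1)}$, giving $\nabla h_t(\vz_t^{(k+1)}) = \nabla h_t(\vz_t^{(k)}) - \frac{2}{1+\alpha+2\eta}(\mQ+\eta\mI)\big(\nabla h_t(\vz_t^{(k)})\circ\vone_{\gS_t^{k}}\big)$. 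Passing to the $\mD^{1/2}$-scaled gradient and taking the $\ell_1$-norm, the key is that $\mD^{1/2}(\mQ+\eta\mI)\mD^{-1/2}$ is (up to the constant $\frac{1+\alpha+2\eta}{2}$) close to a substochastic-type operator: concretely $\mD^{1/2}\mQ\mD^{-1/2} = \frac{1+\alpha}{2}\mI - \frac{1-\alpha}{2}\mA\mD^{-1}$, and $\|\mA\mD^{-1}\|_1 \le 1$. Splitting the coordinates into $\gS_t^k$ and its complement, the coordinates outside $\gS_t^k$ are unchanged, while on $\gS_t^k$ the diagonal term $\frac{2}{1+\alpha+2\eta}\cdot\frac{1+\alpha+2\eta}{2} = 1$ exactly cancels the old mass, leaving only the off-diagonal $\ell_1$-mass $\le \frac{1-\alpha}{1+\alpha+2\eta}\|\nabla h_t^{1/2}(\vz_t^{(k)})\circ\vone_{\gS_t^k}\|_1$ plus the cross-terms; carefully bounding the total $\ell_1$-norm yields $\|\nabla h_t^{1/2}(\vz_t^{(k+1)})\|_1 \le \|\nabla h_t^{1/2}(\vz_t^{(k)})\|_1 - \tau\|\nabla h_t^{1/2}(\vz_t^{(k)})\circ\vone_{\gS_t^k}\|_1$ with $\tau = \frac{2(\alpha+\eta)}{1+\alpha+2\eta}$. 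Dividing by $\|\nabla h_t^{1/2}(\vz_t^{(k)})\|_1$ and recognizing the ratio $\gamma_t^{(k)}$ gives the stated contraction. \emph{This is the main obstacle}: keeping track of the sign structure and the $\ell_1$-operator-norm bookkeeping so that the diagonal cancellation is exact and the remaining terms collapse to exactly $\tau\gamma_t^{(k)}$ requires care (this is essentially where the earlier APPR-style arguments of \citet{andersen2006local} are adapted to the regularized operator $\mQ+\eta\mI$).

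\textbf{Step 2: Run-time bounds.} For the first bound, unrolling the contraction gives $C_{h_t}^{K_t} \le \prod_{k=0}^{K_t-1}(1-\tau\gamma_t^{(k)})\,C_{h_t}^0 \le \exp(-\tau\sum_k \gamma_t^{(k)})\,C_{h_t}^0 = \exp(-\tau K_t \mean\gamma_t)\,C_{h_t}^0$, so $K_t \le \frac{1}{\tau\mean\gamma_t}\log(C_{h_t}^0/C_{h_t}^{K_t})$; multiplying by $\mean{\vol}(\gS_t)$ and using $\gT_t^{\textsc{LocGD}} = K_t\cdot\mean{\vol}(\gS_t)$ from \eqref{equ:time-complexity} gives the first term. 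For the second bound, I would instead sum the \emph{additive} form of Step 1: $C_{h_t}^0 - C_{h_t}^{K_t} \ge \tau\sum_{k=0}^{K_t-1}\|\nabla h_t^{1/2}(\vz_t^{(k)})\circ\vone_{\gS_t^k}\|_1$. Now on the active set $\gS_t^k$ every coordinate of the \emph{unscaled-by-}$\mD^{-1/2}$ gradient exceeds $\eps_t$ in the $\mD^{-1/2}$-scaled sense, i.e. $|\nabla_u h_t^{-1/2}(\vz_t^{(k)})| \ge \eps_t$, hence $\|\nabla h_t^{1/2}(\vz_t^{(k)})\circ\vone_{\gS_t^k}\|_1 = \sum_{u\in\gS_t^k} d_u |\nabla_u h_t(\vz_t^{(k)})|/\sqrt{d_u}\cdot\sqrt{d_u} \ge \eps_t\sum_{u\in\gS_t^k} d_u = \eps_t\vol(\gS_t^k)$ — wait, more directly: $|\nabla_u h_t^{1/2}| = d_u|\nabla_u h_t| = d_u|\nabla_u h_t^{-1/2}|\cdot\sqrt{d_u}/\sqrt{d_u}$; the clean statement is $|\nabla_u h_t^{1/2}(\vz)| = \sqrt{d_u}\,|\nabla_u h_t^{-1/2}(\vz)|\cdot\sqrt{d_u} \ge \eps_t d_u$ is not quite right either, so I would instead use $|\nabla_u h_t^{1/2}(\vz)| \ge \eps_t$ whenever $u\in\gS_t^k$ only after noting $|\nabla_u h_t^{1/2}| \ge |\nabla_u h_t^{-1/2}|$ since $d_u\ge 1$, giving $\|\nabla h_t^{1/2}(\vz_t^{(k)})\circ\vone_{\gS_t^k}\|_1 \ge \eps_t|\gS_t^k| \ge \eps_t$; summing over $k$ yields $C_{h_t}^0 - C_{h_t}^{K_t} \ge \tau K_t \eps_t$, so $K_t \le (C_{h_t}^0 - C_{h_t}^{K_t})/(\tau\eps_t)$, and since $\vol(\gS_t^k)$ is never needed to exceed this we get $\gT_t^{\textsc{LocGD}} \le (C_{h_t}^0-C_{h_t}^{K_t})/(\tau\eps_t)$ by the more careful per-coordinate accounting (each unit of volume processed consumes at least $\tau\eps_t$ of gradient budget). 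Taking the minimum of the two bounds gives the displayed inequality.

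\textbf{Step 3: Bounding $\mean{\vol}(\gS_t)/\mean\gamma_t$.} From $\mean{\vol}(\gS_t)\cdot K_t = \gT_t^{\textsc{LocGD}}$ and the second run-time bound, $\mean{\vol}(\gS_t)\cdot K_t \le (C_{h_t}^0 - C_{h_t}^{K_t})/(\tau\eps_t) \le C_{h_t}^0/(\tau\eps_t)$; combined with $K_t\mean\gamma_t \ge \frac{1}{\tau}\log(C_{h_t}^0/C_{h_t}^{K_t})$ this is a little indirect, so instead I would argue directly: $\tau\mean\gamma_t K_t\|\cdot\| $-type telescoping gives $\sum_k \|\nabla h_t^{1/2}(\vz_t^{(k)})\circ\vone_{\gS_t^k}\|_1 \le (C_{h_t}^0 - C_{h_t}^{K_t})/\tau \le C_{h_t}^0/\tau$, while also $\sum_k \|\nabla h_t^{1/2}(\vz_t^{(k)})\circ\vone_{\gS_t^k}\|_1 \ge \eps_t\sum_k \vol(\gS_t^k) = \eps_t K_t \mean{\vol}(\gS_t)$ (using $|\nabla_u h_t^{1/2}|\ge \eps_t$ on the active set and $\ge$ in fact the stronger $\sum_{u\in\gS_t^k} d_u|\nabla_u h_t^{-1/2}|$... the clean bound here is simply $\|\cdot\|_1 \ge \eps_t \vol(\gS_t^k)$ via $|\nabla_u h_t^{1/2}(\vz)| = d_u |\nabla_u h_t(\vz)| = \sqrt{d_u}\,|\nabla_u h_t^{-1/2}(\vz)|\cdot\sqrt{d_u}$ and $|\nabla_u h_t^{-1/2}(\vz)|\ge\eps_t$, so the $u$-th term is $\ge \eps_t d_u$); dividing, $\mean{\vol}(\gS_t) \le \frac{C_{h_t}^0}{\tau\eps_t K_t}$. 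Meanwhile $\mean\gamma_t \ge \frac{\sum_k \|\nabla h_t^{1/2}(\vz_t^{(k)})\circ\vone_{\gS_t^k}\|_1}{K_t\, C_{h_t}^0}$ is too weak; instead use $\mean\gamma_t K_t = \sum_k \gamma_t^{(k)} \ge \frac{1}{\tau}\log(C_{h_t}^0/C_{h_t}^{K_t}) \ge$ a constant when the process makes progress. The cleanest route, which I would adopt, is the direct double-counting: $\eps_t\,\mean{\vol}(\gS_t)\, K_t \le \sum_k\|\nabla h_t^{1/2}(\vz_t^{(k)})\circ\vone_{\gS_t^k}\|_1$ and $\tau\sum_k\|\nabla h_t^{1/2}(\vz_t^{(k)})\circ\vone_{\gS_t^k}\|_1 = \tau\,\mean\gamma_t K_t\cdot(\text{weighted avg of }C_{h_t}^{(k)}) \le \mean\gamma_t K_t\, C_{h_t}^0$ (since $C_{h_t}^{(k)}$ is nonincreasing), so dividing out $K_t$ gives $\mean{\vol}(\gS_t)/\mean\gamma_t \le C_{h_t}^0/\eps_t$. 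The bound $\mean{\vol}(\gS_t)/\mean\gamma_t \le 2m$ follows from the worst case $\gS_t^k = \gV$ noted before the theorem (then $\mean\gamma_t = 1$, $\mean{\vol}(\gS_t) = 2m$), giving $\min\{C_{h_t}^0/\eps_t, 2m\}$. Finally I would double-check that substituting $\eps_t$ from \eqref{equ:eps-t} into $C_{h_t}^0/\eps_t$ recovers the $\gO(R^2/\eps^2)$ scaling claimed elsewhere in the paper — this is where the constant $R$ from \eqref{equ:constant-r} enters, via $C_{h_t}^0 = \|\nabla h_t^{1/2}(\vz_t^{(0)})\|_1 \le R\,\|\nabla h_1^{1/2}(\vz_1^{(0)})\|_1$ and the choice of $\varphi_t$.
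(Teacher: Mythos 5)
Your route is the paper's route: the update gives $\nabla h_t(\vz_t^{(k+1)}) = \nabla h_t(\vz_t^{(k)}) - \tfrac{2}{1+\alpha+2\eta}\tilde{\mQ}\big(\nabla h_t(\vz_t^{(k)})\circ\vone_{\gS_t^{(k)}}\big)$, the scaled operator satisfies $\tfrac{2}{1+\alpha+2\eta}\mD^{1/2}\tilde{\mQ}\mD^{-1/2} = \mI - \tfrac{1-\alpha}{1+\alpha+2\eta}\mA\mD^{-1}$ with $\|\mA\mD^{-1}\|_1=1$, and the two run-time bounds come from unrolling the multiplicative form and telescoping the additive form of the contraction, exactly as in the paper's appendix. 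However, two steps as written have genuine gaps.

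First, in Step 2 you talk yourself out of the inequality you actually need. Since $\nabla_u h_t^{1/2}(\vz) = \sqrt{d_u}\,\nabla_u h_t(\vz) = d_u\,\nabla_u h_t^{-1/2}(\vz)$ and the activation rule is $|\nabla_u h_t^{-1/2}(\vz_t^{(k)})|\ge\eps_t$, every active coordinate contributes at least $\eps_t d_u$, so $\|\nabla h_t^{1/2}(\vz_t^{(k)})\circ\vone_{\gS_t^{(k)}}\|_1 \ge \eps_t\vol(\gS_t^{(k)})$. This is precisely what converts the telescoped decrease $C_{h_t}^0 - C_{h_t}^{K_t} \ge \tau\sum_k\|\nabla h_t^{1/2}(\vz_t^{(k)})\circ\vone_{\gS_t^{(k)}}\|_1$ into a bound on $\gT_t^{\textsc{LocGD}}=\sum_k\vol(\gS_t^{(k)})$. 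The fallback you adopt instead, $|\nabla_u h_t^{1/2}|\ge|\nabla_u h_t^{-1/2}|\ge\eps_t$ hence a lower bound of $\eps_t|\gS_t^{(k)}|$, only controls $K_t$, not the volume-weighted cost; and the closing phrase ``each unit of volume consumes at least $\tau\eps_t$ of budget'' is exactly the displaced inequality above, not an independent accounting argument. You do write the correct chain later in Step~3 (modulo the stray ``$d_u|\nabla_u h_t|$'', which should read $\sqrt{d_u}\,|\nabla_u h_t|$), so the ingredient is present, but Step~2 as stated does not prove the second run-time bound.

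Second, the bound $\mean{\vol}(\gS_t)/\mean{\gamma}_t\le 2m$ does not follow from checking the extreme case $\gS_t^{(k)}=\gV$: a priori $\mean{\gamma}_t$ could be far smaller than $\mean{\vol}(\gS_t)/(2m)$. The paper proves the per-iteration inequality $\vol(\gS_t^{(k)})\le 2m\,\gamma_t^{(k)}$ by a mediant argument: the degree-normalized gradient magnitude is at least $\eps_t$ on $\gS_t^{(k)}$ and at most $\eps_t$ off it, so
\[
\frac{\|\nabla h_t^{1/2}(\vz_t^{(k)})\circ\vone_{\gS_t^{(k)}}\|_1}{\vol(\gS_t^{(k)})}\;\ge\;\frac{\|\nabla h_t^{1/2}(\vz_t^{(k)})\|_1}{\vol(\gV)}=\frac{\|\nabla h_t^{1/2}(\vz_t^{(k)})\|_1}{2m},
\]
which rearranges to $\gamma_t^{(k)}\ge \vol(\gS_t^{(k)})/(2m)$; summing over $k$ gives the claim. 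Some argument of this type is needed to close Step~3.
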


Since the Hessian of $h_t$ is $\mQ + \eta \mI$ and its eigenvalues $\lambda(\mQ + \eta\mI) \in [\eta + \alpha, \eta + 1]$, the condition number of the shifted linear system is $(\eta + 1)/(\eta + \alpha)$, which is smaller than $1/\alpha$. Hence, the time complexity per round improves from $\gO(1/(\alpha \eps_t))$ to $\gO(1/(\tau \eps_t))$. In our later analysis, we show that for $\alpha < 0.5$ and $\eta = 1 - 2\alpha$, then $\tau = 2/3$, meaning that each local process is independent of $1/\alpha$. 

Following the same analysis as \textsc{LocGD}, we introduce an optimized version of APPR with online updates. For $u_i \in \gS_t^k = \{u_1,u_2,\ldots,u_{|\gS_t^k|}\}$, the optimized APPR updates are
\begin{equation}
\hspace{-2mm}{\textsc{LocAPPR}\quad} \vz_t^{(k_{i+1})} = \vz_t^{(k_{i})} - \frac{2 \nabla h_t(\vz_t^{(k_{i})}) \circ \vone_{\{u_i\}} }{1+\alpha + 2 \eta}, \label{equ:loc-appr}
\end{equation}
where $k_{i} = k+ (i-1)/|\gS_t^k|$ for $i=1,2,\ldots,|\gS_t^k|$. The convergence analysis of \textsc{LocAPPR} follows a similar approach to that of \textsc{LocGD} as stated in Theorem \ref{thm:convergence-local-appr} of the Appendix \ref{sect:a}.

\subsection{Time complexity analysis and AESP-PPR}

This subsection presents the overall time complexity of the AESP framework. First, we analyze the number of outer-loop iterations required to achieve $f(\vx^{(T)}) - f(\vx_f^*) \leq \mu\eps^2/2$, which guarantees $\|\mD^{-1/2}(\vx^{(t)} - \vx_f^*)\|_\infty \leq \eps$. We derive the iteration complexity of AESP in the following lemma.

\begin{lemma}[Outer-loop iteration complexity of AESP]
If each iteration of AESP, presented in Algorithm \ref{alg:local-catalyst-p1}, finds $\vx^{(t)} := \vz_t^{(K_t)}$ using $\gM$, satisfying $h_t(\vz_t^{(K_t)}) - h_t^* \leq \varphi_t := (L+\mu) \|\vb\|_1^2(1-\rho)^t/18$, then the total number of iterations $T$ required to ensure $\hat{\vx} = \textsc{AESP}(\eps,\alpha,\vb,\eta,\gG,\gM) \in \gP(\eps, \alpha,\vb,\gG)$ as defined in Eq.~\eqref{equ:solution-set}, for solving \eqref{equ:f(x)}, satisfies the bound
\begin{equation}
T \leq \frac{1}{\rho} \log\left(\frac{4(L+\mu) \|\vb\|_1^2}{\mu\eps^2(\sqrt{q} - \rho)^2}\right), \text{ where } \rho = 0.9\sqrt{q} \text{ and } q = \frac{\mu}{\mu + \eta}.  \label{equ:total-outer-loops}
\end{equation}
Furthermore, $\varphi_t$ has a lower bound $\varphi_t \geq \mu\eps^2(\sqrt{q}-\rho)^2/72$ for all $t \in [T]$.
\label{lemma:outer-loop-complexity}
\end{lemma}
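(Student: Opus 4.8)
The statement is an inexact accelerated proximal-point (Catalyst-type) convergence estimate for~\eqref{equ:f(x)}, so the plan is: (i) invoke the master convergence bound for inexact accelerated proximal point iterations run with the momentum schedule $\{\alpha_t,\beta_t\}$ of Scheme 2.2.9 in \cite{nesterov2003introductory} initialized at $\alpha_0=\sqrt q$, $q=\mu/(\mu+\eta)$; (ii) bound the initial suboptimality gap $f(\vx^{(0)})-f(\vx_f^*)$ by a graph-size-free quantity; (iii) solve the resulting geometric inequality for $T$; and (iv) convert objective accuracy into membership in $\gP(\eps,\alpha,\vb,\gG)$. For step (i), each subproblem $\min_\vz h_t(\vz)$ has Hessian $\mQ+\eta\mI$ with spectrum in $[\mu+\eta,\,L+\eta]$, so $q$ is exactly its inverse condition number; since $\gM$ returns $\vx^{(t)}\in\gH_t(\varphi_t)$, i.e.\ $h_t(\vx^{(t)})-h_t^*\le\varphi_t$ as in~\eqref{equ:criterion-01}, and $\varphi_t$ decays geometrically as $A(1-\rho)^t$ with $\rho=0.9\sqrt q<\sqrt q$, the standard estimate-sequence argument (\cite{lin2018catalyst}, Ch.~5 of \cite{d2021acceleration}) gives
\[
f(\vx^{(T)})-f(\vx_f^*)\;\le\;\frac{C}{(\sqrt q-\rho)^2}\bigl(f(\vx^{(0)})-f(\vx_f^*)+cA\bigr)(1-\rho)^T
\]
for absolute constants $C,c$: the accumulated subproblem errors, each weighted by an inverse estimate-sequence coefficient of order $(1-\sqrt q)^{-t}$, form a geometric series of ratio $(1-\rho)/(1-\sqrt q)>1$ whose sum is $\gO\bigl(A/(\sqrt q-\rho)^2\bigr)$.

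Next I would carry out step (ii). Taking $\vx^{(0)}=\zero$ gives $f(\vx^{(0)})=0$, and using $\vx_f^*=\alpha\mQ^{-1}\mD^{-1/2}\vb$ together with $\lambda(\mQ)\in[\alpha,1]$ (so $\mQ^{-1}\preceq\alpha^{-1}\mI$) and $\|\mD^{-1/2}\vb\|_2\le\|\vb\|_2\le\|\vb\|_1$,
\[
f(\vx^{(0)})-f(\vx_f^*)=\tfrac{1}{2}\alpha^2(\mD^{-1/2}\vb)^\top\mQ^{-1}(\mD^{-1/2}\vb)\le\tfrac{1}{2}\mu\|\vb\|_1^2\le\tfrac{1}{4}(L+\mu)\|\vb\|_1^2,
\]
the last step since $\mu=\alpha<1=L$. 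Choosing $A=\tfrac{2}{9}\cdot\tfrac{1}{4}(L+\mu)\|\vb\|_1^2=(L+\mu)\|\vb\|_1^2/18$ (the factor $\tfrac{2}{9}$ being the usual Catalyst error budget) reproduces the prescribed $\varphi_t$ and makes both terms inside the bracket above of order $(L+\mu)\|\vb\|_1^2$, so $f(\vx^{(T)})-f(\vx_f^*)\le C'(L+\mu)\|\vb\|_1^2(1-\rho)^T/(\sqrt q-\rho)^2$ for an absolute $C'$.

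For steps (iii)--(iv): requiring this last bound to be $\le\mu\eps^2/2$ and using $-\log(1-\rho)\ge\rho$ yields $T\le\tfrac{1}{\rho}\log\bigl(2C'(L+\mu)\|\vb\|_1^2/(\mu\eps^2(\sqrt q-\rho)^2)\bigr)$, which is the claimed bound once the constants are tracked so that $2C'=4$. Then $f(\vx^{(T)})-f(\vx_f^*)\le\mu\eps^2/2$, and since $f$ is $\mu$-strongly convex with $\nabla f(\vx_f^*)=\zero$ this gives $\tfrac{1}{2}\mu\|\vx^{(T)}-\vx_f^*\|_2^2\le\mu\eps^2/2$, hence (as $\mD^{-1/2}$ has entries at most $1$) $\|\mD^{-1/2}(\vx^{(T)}-\vx_f^*)\|_\infty\le\|\vx^{(T)}-\vx_f^*\|_2\le\eps$, i.e.\ $\hat\vx=\vx^{(T)}\in\gP(\eps,\alpha,\vb,\gG)$. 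For the lower bound on $\varphi_t$: for $1\le t\le T$ we have $(1-\rho)^t\ge(1-\rho)^T$, and substituting the formula for $T$ (this time bounding $(1-\rho)^{1/\rho}$ from below) shows $(1-\rho)^T\ge\mu\eps^2(\sqrt q-\rho)^2/\bigl(4(L+\mu)\|\vb\|_1^2\bigr)$ up to the required absolute constant, whence $\varphi_t=(L+\mu)\|\vb\|_1^2(1-\rho)^t/18\ge\mu\eps^2(\sqrt q-\rho)^2/72$.

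The hard part will be step (i): instantiating the inexact accelerated proximal point theorem so that (a) subproblem accuracy is measured in objective suboptimality of $h_t$ --- matching~\eqref{equ:criterion-01} --- rather than in a residual norm; (b) the schedule $\{\alpha_t,\beta_t\}$ of the excerpt is exactly the one for which the estimate-sequence recursion closes with $\alpha_0=\sqrt q$; and (c) the geometric aggregation of the errors $\{\varphi_t\}$ is done with the precise constants that land the coefficient $4$ (resp.\ $72$) in the statement. A secondary subtlety: because the initial gap is not known in advance, $\varphi_t$ must be pinned from the \emph{upper} estimate $\tfrac{1}{4}(L+\mu)\|\vb\|_1^2$, which is only legitimate in the additive-error form of the bound used in step (i); and every estimate must remain graph-size-free (only $\|\vb\|_1,\alpha,\eta$ enter), which is precisely what ultimately makes the overall complexity independent of $m$.
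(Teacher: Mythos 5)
Your proposal is correct and follows essentially the same route as the paper: it invokes the same Catalyst master inequality (the paper's Lemma~\ref{thm:catalyst-c1} and its specialization in Corollary~\ref{cor:a6}), aggregates the geometrically decaying errors $\varphi_t$ via the ratio $(1-\rho)/(1-\sqrt q)$, converts $f(\vx^{(T)})-f^*\le\mu\eps^2/2$ to $\ell_\infty$ accuracy by strong convexity, and obtains the lower bound on $\varphi_t$ from the minimality of $T$. The only cosmetic difference is that you bound the initial gap directly via $\mQ^{-1}\preceq\alpha^{-1}\mI$ to get $\tfrac{\mu}{2}\|\vb\|_1^2$, whereas the paper uses $L$-smoothness and $\|\mPi_\alpha\|_1=1$ to get $\tfrac{L}{2}\|\vb\|_1^2$; both land at the same $(L+\mu)\|\vb\|_1^2$ scale, and the untracked absolute constants you flag are exactly those the paper pins down with $\sqrt{1-\rho}-\sqrt{1-\sqrt q}\ge(\sqrt q-\rho)/2$.
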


\begin{figure}[t]
\begin{center}
\begin{minipage}[t]{0.48\textwidth}
\centering
\begin{algorithm}[H]
\caption{\textsc{AESP}($\eps, \alpha, \vb, \eta, \gG, \gM$)}
\begin{algorithmic}[1]
\STATE $\vy^{(0)}=\vx^{(0)} = \zero, c = 1-0.9 \sqrt{\mu/(\mu+\eta)}$
\STATE $T$ is computed in Eq. \eqref{equ:total-outer-loops}
\FOR{$t = 1, 2, \ldots, T$}
\STATE $\varphi_t = (L +\mu)\|\vb\|_1^2 c^t/18$
\STATE $\vx^{(t)} = \gM( \varphi_t, \vy^{(t-1)}, \eta, \alpha,\vb,\gG)$ \\
\STATE // $\gM$ in \textsc{LocAPPR} or \textsc{LocGD}
\IF{$\{v: \eps \alpha \sqrt{d_v} \leq |\nabla_v f(\vx^{(t)})|\} = \emptyset$}
\STATE \textbf{break}
\ENDIF
\STATE $\vy^{(t)} = \vx^{(t)} + \frac{\sqrt{\mu+\eta} - \sqrt{\mu} }{\sqrt{\mu+\eta} + \sqrt{\mu}} \big( \vx^{(t)} - \vx^{(t-1)}\big)$
\ENDFOR
\STATE \textbf{Return} $\hat{\vx} = \vx^{(t)}$
\end{algorithmic}
\label{alg:local-catalyst-p1}
\end{algorithm}
\vspace{-4mm}
\end{minipage}
\hfill
\begin{minipage}[t]{0.48\textwidth}
\centering 
\begin{algorithm}[H]
\caption{\textsc{AESP}-PPR($\eps, \alpha, s, \gG, \gM$)}
\begin{algorithmic}[1]
\STATE { $\vy^{(0)}=\vx^{(0)} = \zero$}
\STATE {\scriptsize $T = \lceil  \tfrac{10}{9} \sqrt{ \tfrac{1-\alpha}{\alpha} } \log \tfrac{400(1-\alpha^2)}{\alpha^2 \eps^2} \rceil$ \vspace{.5mm}}
\FOR{$t = 1, 2, \ldots, T$}
\STATE $\varphi_t = \scriptstyle \frac{1+\alpha}{18}(1- \frac{9}{10} \sqrt{\frac{\alpha}{1-\alpha} } )^t$
\STATE // $\gM$ is \textsc{LocAPPR} or \textsc{LocGD}
\STATE $\vx^{(t)} = \gM( \varphi_t,\vy^{(t-1)},1-2\alpha, \alpha,\vb,\gG)$
\IF{$\{v: \eps \alpha \sqrt{d_v} \leq |\nabla_v f(\vx^{(t)})|\} = \emptyset$}
\STATE \textbf{break}
\ENDIF
\STATE {\small $\vy^{(t)} = \vx^{(t)} + \frac{\sqrt{1-\alpha} - \sqrt{\alpha} }{\sqrt{1-\alpha} + \sqrt{\alpha}}( \vx^{(t)} - \vx^{(t-1)})$}
\ENDFOR
\STATE \textbf{Return} $\hat{\vpi} = \mD^{1/2}\vx^{(t)}$
\end{algorithmic}
\label{alg:aesp-ppr}
\end{algorithm}
\end{minipage}
\end{center}
\vspace{-5mm}
\end{figure}

In a practical implementation, our AESP framework is an adaptation of the Catalyst acceleration method applied to local methods, as presented in Algorithm \ref{alg:local-catalyst-p1}. Specifically, Line 7 serves as an early stopping condition since $T$ represents the worst-case number of iterations required. This stopping condition follows directly from Lemma~\ref{lemma:properties-f}, i.e., $\{v: \eps \alpha \sqrt{d_v} \leq | \nabla_v f(\vx^{(t)})| \} = \emptyset$, which implies $\| \nabla f^{- \frac{1}{2} }(\vx^{(t)})\|_\infty \leq \eps\alpha$. Line 9 updates the sequence $\{\beta_t\}_{t\geq 1}$ using $\beta_t = \frac{\sqrt{\mu+\eta} - \sqrt{\mu} }{\sqrt{\mu+\eta} + \sqrt{\mu}}$ as $\alpha_t = \alpha_0 = \sqrt{q}$.
The computational cost of verifying this condition is dominated by $\gT_t^{\gM}$. To minimize $T$, the goal is to choose a suitable $\eta$ to maximize $1/(\tau (\mu+\eta))$. When $\alpha < 0.5$, we find that setting $\eta = (L-2\mu)$, though not necessarily optimal, is sufficient for our purposes. Based on this analysis, we now present the total time complexity for solving \eqref{equ:f(x)} using AESP in the following theorem. 

\begin{theorem}[Time complexity of \textsc{AESP}]
Let the simple graph $\gG(\gV,\gE)$ be connected and undirected, and let $f(\vx)$ be defined in~\eqref{equ:f(x)}. Assume the precision $\eps >0$ satisfies $\{i : |b_i| \geq \eps d_i\} \ne \emptyset$ and damping factor $\alpha < 1/2$. Applying $\hat{\vx}=\textsc{AESP}(\eps,\alpha,\vb,\eta,\gG,\gM)$ with $\eta = L - 2\mu$ and $\gM$ be either \textsc{LocGD} or \textsc{LocAPPR}, then AESP presented in Algorithm \ref{alg:local-catalyst-p1}, finds a solution $\hat{\vx}$ such that $\|\mD^{-1/2}(\hat{\vx} - \vx_f^*)\|_\infty \leq \eps$ with the dominated time complexity $\gT$ bounded by
\[
\gT \leq \sum_{t=1}^T  \min\left\{ \frac{\mean{\vol}(\gS_t)}{\tau\mean{\gamma}_t} \log\frac{C_{h_t}^0}{C_{h_t}^{K_t}}, \frac{ C_{h_t}^0 - C_{h_t}^{K_t} }{\tau\eps_t} \right\}, \text{ with } \frac{\mean{\vol}(\gS_t)}{\mean{\gamma}_t} \leq \min\left\{ \frac{C_{h_t}^0}{\eps_t}, 2m\right\},
\]
where $\tau$, $\eps_t$, $C_{h_t}^0$ and $C_{h_t}^{K_t}$ are defined in Theorem \ref{thm:convergence-loc-gd}. Furthermore, $q=\mu/(L - \mu)$ and the number of outer iterations satisfies
\[
T \leq \frac{10}{9 \sqrt{q} } \log\left(\frac{400(L+\mu) \|\vb\|_1^2}{\mu\eps^2q}\right) = \tilde{\gO}\left(\frac{1}{\sqrt{\alpha}}\right).
\]
\label{thm:main-theorem-AESP}
\end{theorem}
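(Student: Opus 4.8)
The plan is to assemble the statement from three ingredients already in hand: the per-round analysis of the local solver $\gM$ (Theorem~\ref{thm:convergence-loc-gd} for \textsc{LocGD}, and Theorem~\ref{thm:convergence-local-appr} in the appendix for \textsc{LocAPPR}), which turns each outer iteration into a short evolving set process; the outer-loop iteration complexity of the Catalyst-type scheme (Lemma~\ref{lemma:outer-loop-complexity}), which fixes $T$ and the accuracy schedule $\{\varphi_t\}$; and the concrete choice $\eta = L-2\mu$, which is where the $\alpha$-dependence gets pinned down. First I would fix an outer index $t$ with warm start $\vz_t^{(0)}=\vy^{(t-1)}$ and check that the solver returns a feasible iterate with a controlled cost: by Theorem~\ref{thm:convergence-loc-gd} (resp.\ Theorem~\ref{thm:convergence-local-appr}) the iterates obey $\|\nabla h_t^{1/2}(\vz_t^{(k+1)})\|_1 \le (1-\tau\gamma_t^{(k)})\|\nabla h_t^{1/2}(\vz_t^{(k)})\|_1$ with $\tau\gamma_t^{(k)}\in[0,1)$, hence $\|\nabla h_t^{1/2}(\vz_t^{(K_t)})\|_1 < \|\nabla h_t^{1/2}(\vz_t^{(0)})\|_1$ whenever the process performs at least one update, which is exactly the monotonicity hypothesis of Lemma~\ref{lemma:inner-loop-eps-t}. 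Therefore, run with the stopping threshold $\eps_t$ of Eq.~\eqref{equ:eps-t}, $\gM$ outputs $\vx^{(t)}:=\vz_t^{(K_t)}\in\gH_t(\varphi_t)$ as in \eqref{equ:criterion-01}, and the same theorem hands me the per-round cost $\gT_t^{\gM}\le \min\{\tfrac{\mean{\vol}(\gS_t)}{\tau\mean{\gamma}_t}\log\tfrac{C_{h_t}^0}{C_{h_t}^{K_t}},\,\tfrac{C_{h_t}^0-C_{h_t}^{K_t}}{\tau\eps_t}\}$ together with $\mean{\vol}(\gS_t)/\mean{\gamma}_t\le\min\{C_{h_t}^0/\eps_t,\,2m\}$.

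Next I would feed these per-round guarantees into the outer-loop lemma. Algorithm~\ref{alg:local-catalyst-p1} uses $\varphi_t=(L+\mu)\|\vb\|_1^2 c^t/18$ with $c=1-0.9\sqrt{\mu/(\mu+\eta)}=1-\rho$, which is the hypothesis $\varphi_t=(L+\mu)\|\vb\|_1^2(1-\rho)^t/18$ of Lemma~\ref{lemma:outer-loop-complexity}; since each $\vx^{(t)}\in\gH_t(\varphi_t)$ by the previous step, the lemma certifies $\hat{\vx}\in\gP(\eps,\alpha,\vb,\gG)$, i.e.\ $\|\mD^{-1/2}(\hat{\vx}-\vx_f^*)\|_\infty\le\eps$ (via $\mu$-strong convexity and $d_v\ge1$ for all $v$), after $T\le\tfrac{1}{\rho}\log\!\big(4(L+\mu)\|\vb\|_1^2/(\mu\eps^2(\sqrt{q}-\rho)^2)\big)$ rounds. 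If the Line~7 early-stopping test fires first, then $\{v:\eps\alpha\sqrt{d_v}\le|\nabla_v f(\vx^{(t)})|\}=\emptyset$ is exactly $\|\mD^{-1/2}\nabla f(\vx^{(t)})\|_\infty<\eps\alpha$, and Lemma~\ref{lemma:properties-f} again gives $\vx^{(t)}\in\gP$; in either case the returned point is feasible and the runtime does not exceed the full-$T$ bound.

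Finally I would substitute $\eta=L-2\mu$ and assemble. With $\mu=\alpha$ and $L=1$ this choice is admissible precisely because $\alpha<1/2$ makes $\eta>0$; then $\mu+\eta=L-\mu$, so $q=\mu/(\mu+\eta)=\mu/(L-\mu)=\alpha/(1-\alpha)$ and $\tau=\tfrac{2(\alpha+\eta)}{1+\alpha+2\eta}=\tfrac{2(1-\alpha)}{3(1-\alpha)}=\tfrac23$ is a universal constant. Plugging $\rho=0.9\sqrt{q}$ and $\sqrt{q}-\rho=0.1\sqrt{q}$ into the bound above gives $T\le\tfrac{10}{9\sqrt{q}}\log\!\big(400(L+\mu)\|\vb\|_1^2/(\mu\eps^2 q)\big)$, and since $1/\sqrt{q}=\sqrt{(1-\alpha)/\alpha}=\Theta(1/\sqrt{\alpha})$ with the logarithm polylogarithmic in $1/\eps$, $1/\alpha$ and $\|\vb\|_1$, this is $\tilde{\gO}(1/\sqrt{\alpha})$. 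Summing the per-round costs from the first step over $t=1,\dots,T$ then produces the displayed bound on $\gT$. The hypotheses $\{i:|b_i|\ge\eps d_i\}\ne\emptyset$ and $\alpha<1/2$ are used exactly here: the former guarantees the problem is non-degenerate (so the initial gradients and the constant $R$ of \eqref{equ:constant-r} are well defined), and the latter guarantees $\eta>0$.

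The part I expect to be most delicate is the two-sided control of the inner accuracies $\varphi_t$: they must decay geometrically so that $T=\tilde{\gO}(1/\sqrt{\alpha})$, yet the induced stopping threshold $\eps_t\asymp\max\{\sqrt{(\mu+\eta)\varphi_t/m},\,(\eta+\alpha)\varphi_t/C_{h_t}^0\}$ must remain large enough that the per-round cost $C_{h_t}^0/\eps_t$ does not explode. Reconciling these is precisely why Lemma~\ref{lemma:outer-loop-complexity} also records the lower bound $\varphi_t\ge\mu\eps^2(\sqrt{q}-\rho)^2/72$, and making the accuracy schedule, the inner solver, and the choice $\eta=L-2\mu$ fit together is where the real care goes. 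The remaining items are routine given the quoted results: handling the degenerate round $K_t=0$ (where $\vz_t^{(0)}$ is already in $\gH_t(\varphi_t)$), checking $\nabla h_t(\vz_t^{(0)})\ne\zero$, and verifying that \textsc{LocAPPR} satisfies the same contraction as \textsc{LocGD}, which is Theorem~\ref{thm:convergence-local-appr}.
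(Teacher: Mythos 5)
Your proposal is correct and follows essentially the same route as the paper: the paper's proof simply sums the per-round bounds from Theorem~\ref{thm:convergence-loc-gd} (and Theorem~\ref{thm:convergence-local-appr} for \textsc{LocAPPR}) over $t=1,\dots,T$, invokes Lemma~\ref{lemma:outer-loop-complexity} for $T$, and substitutes $\eta=L-2\mu$ to get $q=\mu/(L-\mu)=\alpha/(1-\alpha)$ and hence $T=\tilde{\gO}(1/\sqrt{\alpha})$. Your write-up is in fact more careful than the paper's (which compresses all of this into a few sentences), correctly verifying the monotonicity hypothesis needed for Lemma~\ref{lemma:inner-loop-eps-t}, the admissibility of $\eta>0$ under $\alpha<1/2$, the constant $\tau=2/3$, and the arithmetic $(\sqrt{q}-\rho)^2=0.01q$ that produces the factor $400/q$ inside the logarithm.
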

Roughly speaking, Theorem \ref{thm:main-theorem-AESP} indicates that \textsc{AESP} solves Eq.~\eqref{equ:f(x)} in a time complexity of
\[
\gT = \tilde{\gO}\left( \frac{\mean{\vol}(\gS_t)}{ \sqrt{\alpha} \mean{\gamma}_t} \right)  = \tilde{\gO}\left(\frac{1}{ \sqrt{\alpha}\eps_T}\right) = \tilde{\gO}\left(\frac{1}{ \sqrt{\alpha}\eps^2}\right),
\]
where the last equality follows from \( \eps_T = \gO(\eps^2) \). This result is particularly meaningful when \( \eps \geq 1/\sqrt{m} \). As argued in \cite{fountoulakis2022open}, in many real-world applications, it is typical that $1/\eps \ll n$. We now finalize our algorithm and present AESP-PPR for solving Eq.~\eqref{equ:PPR} in the following theorem.

\begin{theorem}[Time complexity of \textsc{AESP-PPR}]
Let the simple graph $\gG(\gV, \gE)$ be connected and undirected, assuming $\alpha < 1/2$. The PPR vector of $s \in \gV$ is defined in Eq.~\eqref{equ:PPR}, and the precision $\eps \in (0, 1 / d_s)$. Suppose $\hat{\vpi}=\textsc{AESP-PPR}(\eps,\alpha,s,\gG, \gM)$ be returned by Algorithm \ref{alg:aesp-ppr}. When $\gM$ is either \textsc{LocGD} (Algorithm \ref{alg:local-gd}) or \textsc{LocAPPR} (Algorithm~\ref{alg:local-appr-opt}), then $\hat{\vpi}$ satisfies $\|\mD^{-1} (\hat{\vpi} - \vpi) \|_\infty \leq \eps$ and AESP-PPR has a dominated time complexity bounded by
\begin{equation}
\gT \leq \min\left\{\tilde{\gO}\left( \frac{\mean{\vol}(\gS_{T_{\max}}) }{\sqrt{\alpha} \mean{\gamma}_{T_{\max}} } \right), \tilde{\gO}\left( \frac{ \max_t C_{h_t}^0}{\sqrt{\alpha} \eps_{T}} \right) \right\} = \min\left\{\tilde{\gO} \left( \frac{ m}{\sqrt{\alpha}} \right), \tilde{\gO} \left( \frac{R^2/\eps^2}{\sqrt{\alpha}} \right)\right\}, \label{equ:final-time-complexity}
\end{equation}
where $T_{\max} := \argmax_{t\in [T]} \mean{\vol}(\gS_{t}) / \mean{\gamma}_{t}$ and $R$ is defined in Eq.~\eqref{equ:constant-r}. \label{thm:main-theorem-AESP-PPR}
\end{theorem}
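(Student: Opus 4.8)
The plan is to assemble Theorem~\ref{thm:main-theorem-AESP-PPR} by specializing Theorem~\ref{thm:main-theorem-AESP} to the PPR configuration $\vb = \ve_s$, $\mu = \alpha$, $L = 1$, and $\eta = L - 2\mu = 1 - 2\alpha$, which is exactly the parameterization used in Algorithm~\ref{alg:aesp-ppr}. First I would verify that the stopping/correctness condition matches: by Lemma~\ref{lemma:properties-f}, the event $\{v : \eps\alpha\sqrt{d_v} \le |\nabla_v f(\vx^{(t)})|\} = \emptyset$ is equivalent to $\|\nabla f^{-1/2}(\vx^{(t)})\|_\infty \le \eps\alpha$, which implies $\|\mD^{-1}(\hat{\vpi} - \vpi)\|_\infty \le \eps$ after the rescaling $\hat{\vpi} = \mD^{1/2}\vx^{(t)}$ and using $\vpi = \mD^{1/2}\vx_f^*$. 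The precision restriction $\eps \in (0, 1/d_s)$ ensures the nonemptiness hypothesis $\{i : |b_i| \ge \eps d_i\} \ne \emptyset$ of Theorem~\ref{thm:main-theorem-AESP} holds at $i = s$ (since $b_s = 1 > \eps d_s$), so the outer-loop count $T$ from Lemma~\ref{lemma:outer-loop-complexity} is well-defined; plugging $q = \mu/(L-\mu) = \alpha/(1-\alpha)$ and $\|\vb\|_1 = 1$ into Eq.~\eqref{equ:total-outer-loops} reproduces the displayed $T = \lceil \tfrac{10}{9}\sqrt{\tfrac{1-\alpha}{\alpha}}\log\tfrac{400(1-\alpha^2)}{\alpha^2\eps^2}\rceil = \tilde{\gO}(1/\sqrt{\alpha})$.

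Next I would handle the per-round time complexity. From Theorem~\ref{thm:convergence-loc-gd} (and its \textsc{LocAPPR} analogue, Theorem~\ref{thm:convergence-local-appr}), each inner process costs at most $\min\{\tfrac{\mean{\vol}(\gS_t)}{\tau\mean{\gamma}_t}\log\tfrac{C_{h_t}^0}{C_{h_t}^{K_t}}, \tfrac{C_{h_t}^0 - C_{h_t}^{K_t}}{\tau\eps_t}\}$ with $\mean{\vol}(\gS_t)/\mean{\gamma}_t \le \min\{C_{h_t}^0/\eps_t, 2m\}$. For $\alpha < 1/2$ and $\eta = 1 - 2\alpha$ one checks $\tau = \tfrac{2(\alpha+\eta)}{1+\alpha+2\eta} = \tfrac{2(1-\alpha)}{3(1-\alpha)} = 2/3$, a constant, so the per-round cost is $\tilde{\gO}(1/\eps_t)$ up to the logarithmic factor. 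Summing over $t \in [T]$ and recognizing that at most $T = \tilde{\gO}(1/\sqrt{\alpha})$ rounds occur, the total is $\tilde{\gO}(1/\sqrt{\alpha}) \cdot \max_t \min\{C_{h_t}^0/\eps_t, 2m\}$; taking the round $T_{\max}$ achieving the largest $\mean{\vol}(\gS_t)/\mean{\gamma}_t$ gives the first (volume-based) bound $\tilde{\gO}(\mean{\vol}(\gS_{T_{\max}})/(\sqrt{\alpha}\,\mean{\gamma}_{T_{\max}}))$, and the crude bound $\mean{\vol}(\gS_t)/\mean{\gamma}_t \le 2m$ yields $\tilde{\gO}(m/\sqrt{\alpha})$.

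For the $R^2/\eps^2$ branch, I would use $\eps_t \triangleq \max\{\sqrt{(\mu+\eta)\varphi_t/m},\ 2(\eta+\alpha)\varphi_t/\|\nabla h_t^{1/2}(\vz_t^{(0)})\|_1\}$ from Lemma~\ref{lemma:inner-loop-eps-t}. The lower bound $\varphi_t \ge \mu\eps^2(\sqrt{q}-\rho)^2/72 = \Omega(\alpha\eps^2)$ from Lemma~\ref{lemma:outer-loop-complexity}, combined with $\varphi_t = \Theta(\eps^2)$ at the final iteration (using $\varphi_T = (L+\mu)\|\vb\|_1^2 c^T/18$ and the choice of $T$, which forces $c^T = \Theta(\alpha\eps^2)$ up to logs), gives $\eps_T = \gO(\eps^2)$. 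Then $C_{h_t}^0/\eps_t \le C_{h_t}^0 \cdot \|\nabla h_1^{1/2}(\vz_1^{(0)})\|_1 / (2(\eta+\alpha)\varphi_t \cdot \|\nabla h_1^{1/2}(\vz_1^{(0)})\|_1/\|\nabla h_t^{1/2}(\vz_t^{(0)})\|_1)$ — more cleanly, the ratio $C_{h_t}^0/\|\nabla h_1^{1/2}(\vz_1^{(0)})\|_1 \le R$ by definition~\eqref{equ:constant-r}, and $\|\nabla h_1^{1/2}(\vz_1^{(0)})\|_1$ is itself $\gO(\alpha\|\vb\|_1) = \gO(\alpha)$ since $\vz_1^{(0)} = \vy^{(0)} = \zero$ makes $\nabla h_1(\zero) = -\alpha\mD^{-1/2}\vb$, so $\|\nabla h_1^{1/2}(\vz_1^{(0)})\|_1 = \alpha\|\vb\|_1$; using $\eps_t \ge \Omega(\alpha\eps^2/C_{h_t}^0)$ from the second term in the max, and $C_{h_t}^0 \le R\alpha\|\vb\|_1 = R\alpha$, one obtains $C_{h_t}^0/\eps_t = \gO((R\alpha)^2/(\alpha\eps^2)) = \gO(R^2/\eps^2)$. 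Multiplying by $T = \tilde{\gO}(1/\sqrt{\alpha})$ gives $\tilde{\gO}(R^2/(\sqrt{\alpha}\eps^2))$, and the $\min$ of the two branches delivers Eq.~\eqref{equ:final-time-complexity}.

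The main obstacle I anticipate is the bookkeeping in the $R^2/\eps^2$ branch: carefully tracking the $\alpha$-dependence of $\varphi_t$, $\eps_t$, and $C_{h_t}^0$ simultaneously so that the $\alpha$ factors cancel to leave a clean $R^2/\eps^2$, rather than an extra stray power of $\alpha$. In particular one must confirm that $\varphi_T = \Theta(\alpha\eps^2)$ (not merely $\gO(\eps^2)$ or $\Omega(\alpha\eps^2)$ separately) so that the second term in the $\max$ defining $\eps_t$ dominates and scales as $\alpha\eps^2/C_{h_t}^0$; this requires matching the geometric decay rate $c = 1 - 0.9\sqrt{q}$ against the chosen $T$ up to the logarithmic slack, which is exactly what Lemma~\ref{lemma:outer-loop-complexity} is engineered to provide. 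The correctness claim $\|\mD^{-1}(\hat{\vpi}-\vpi)\|_\infty \le \eps$ and the outer-iteration count are comparatively routine once the reductions to Lemmas~\ref{lemma:properties-ppv}--\ref{lemma:properties-f} and Theorem~\ref{thm:main-theorem-AESP} are in place.
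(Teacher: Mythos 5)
Your proposal follows the paper's proof essentially step for step: specialize Theorem~\ref{thm:main-theorem-AESP} to $\vb=\ve_s$, $\mu=\alpha$, $L=1$, $\eta=1-2\alpha$ to obtain $T=\tilde{\gO}(1/\sqrt{\alpha})$ and the $2m$ branch, then lower-bound $\eps_T$ via $\varphi_T\geq \mu\eps^2(\sqrt{q}-\rho)^2/72$ from Lemma~\ref{lemma:outer-loop-complexity} and upper-bound $C_{h_t}^0\leq R\,\|\nabla h_1^{1/2}(\zero)\|_1=R\alpha$ via the definition of $R$, exactly as the paper does. The one blemish is the $\alpha$-bookkeeping you yourself flag as delicate: since $(\sqrt{q}-\rho)^2=0.01\,q=\Theta(\alpha)$, the lemma actually gives $\varphi_T=\Theta(\alpha^2\eps^2)$ rather than your stated $\Omega(\alpha\eps^2)$, hence $\eps_T=\Omega(\alpha^2\eps^2/C_{h_T}^0)$; the two powers of $\alpha$ then cancel against $(R\alpha)^2$ to yield precisely $\gO(R^2/\eps^2)$, matching the paper's final bound.
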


The time complexity derived in Eq.~\eqref{equ:final-time-complexity} is significant when $\eps \geq 1/\sqrt{m}$. Compared to ASPR \cite{martinez2023accelerated}, which requires $|\supp(\vx_\psi^*)|$ iterations of APGD, our approach only needs $\gO(1/\sqrt{\alpha})$ local evolving set processes. In contrast to \textsc{LocCH} \cite{zhou2024iterative}, which imposes a strong assumption on the $\mD^{1/2}$-scaled gradient reduction, our method provides a provable stopping criterion and only requires a mild assumption on the bounded level set of the $\mD^{1/2}$-scaled gradient during AESP-PPR updates.

\begin{wrapfigure}{l}{.55\textwidth}
\centering
\vspace{-5mm}
\includegraphics[width=1\linewidth]{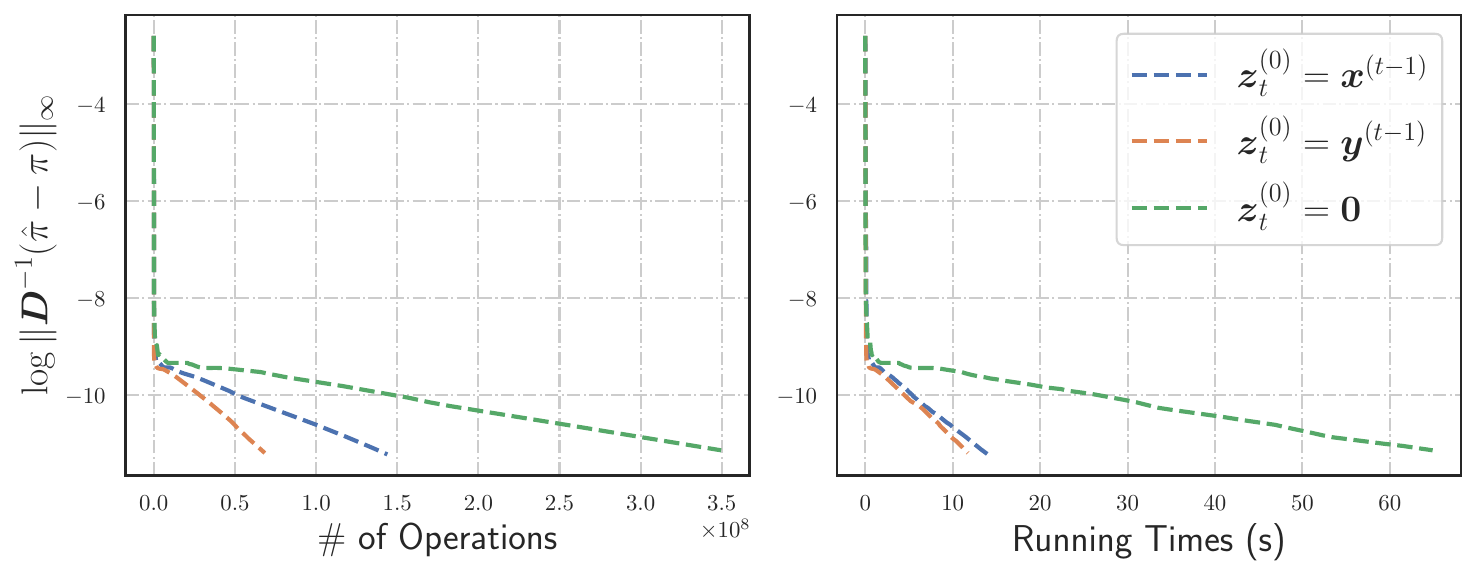}
\vspace{-5mm}
\caption{Convergence of $\log\|\mD^{-1}(\hat{\vpi} -\vpi)\|_\infty$ for AESP-\textsc{LocAPPR} with three different initializations for $\vz_t^{(0)}$ as a function of total operations and running times on the \textit{com-dblp} graph.}
\label{fig:diff-initials}
\vspace{-8mm}
\end{wrapfigure}

\textbf{Initialization of $\vz_t^{(0)}$.}  We consider three possible initialization strategies for $\vz_t^{(0)}$: 1) A cold start with $\vz_t^{(0)}=\zero$; 2) Using the previous estimate $\vz_t^{(0)}=\vx^{(t-1)}$; and 3) Momentum-based initialization, i.e., $\vz_t^{(0)} = \vy^{(t-1)}$. Among these, we find that the momentum-based strategy yields the best overall performance. This choice is well-motivated, since $ \nabla h^{1/2}_t(\vy^{(t-1)}) =- (\alpha \vb - \mPi_\alpha^{-1}  \mD^{1/2}\vy^{(t-1)})$, which corresponds to the negative residual of Eq.~\eqref{equ:PPR} when treating $\mD^{1/2}\vy^{(t-1)}$ as an estimate. Notably, $\mD^{1/2}\vy^{(t-1)} \rightarrow \vpi$ as $t \rightarrow \infty$, justifying this initialization. Fig.~\ref{fig:diff-initials} empirically supports this analysis, showing that it requires the fewest outer-loop iterations.

\begin{wrapfigure}{r}{.6\textwidth}
\centering
\vspace{-3mm}
\includegraphics[width=1\linewidth]{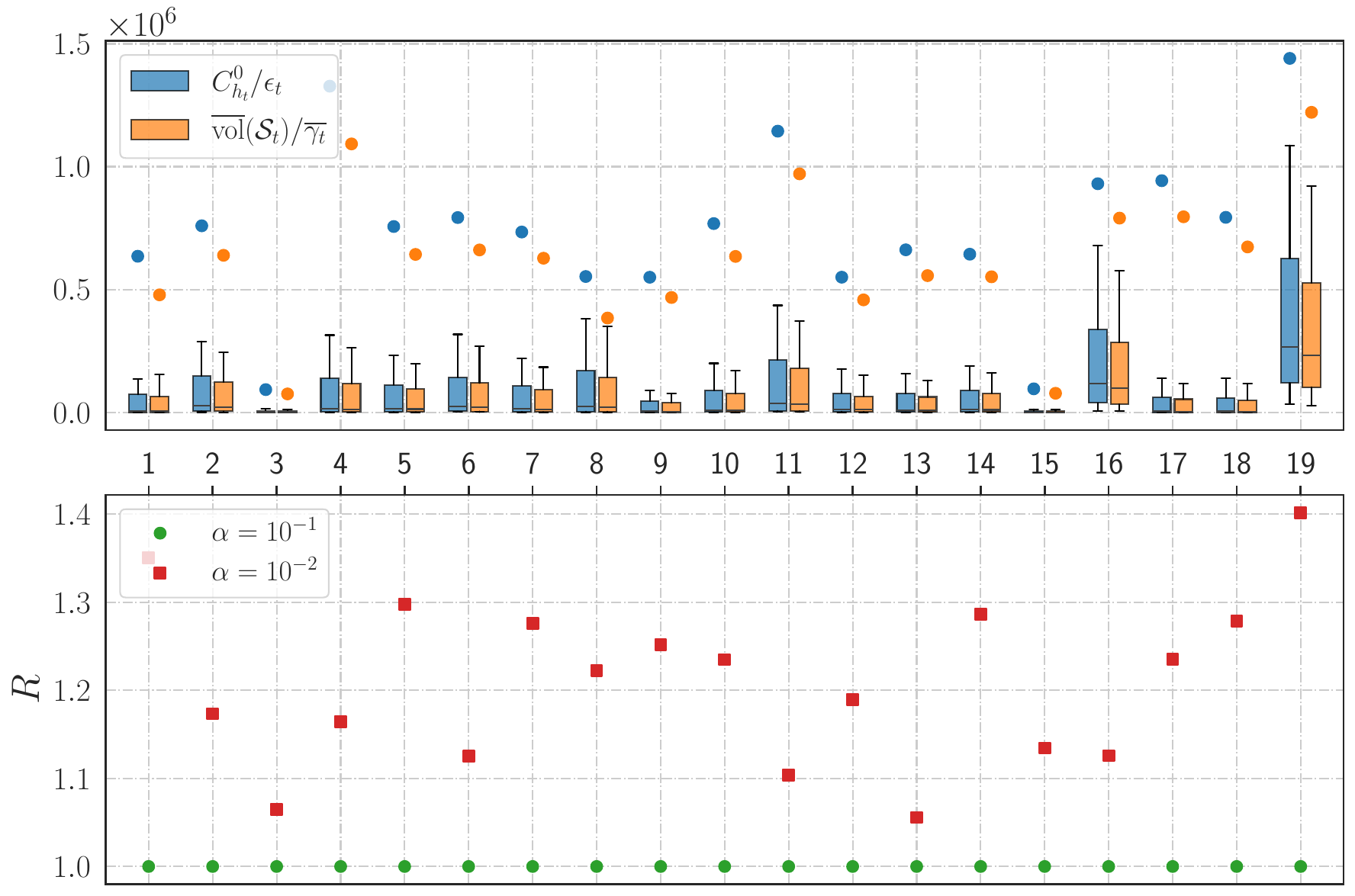}
    \vspace{-5mm}
    \caption{$C_{h_t}^0/\epsilon_t$, $\mean{\vol}(\gS_t)/\mean{\gamma_t}$ and $R$ of AESP-\textsc{LocAPPR} on 19 graphs (in ascending order of $n$) when $\vz_t^{(0)} = \vy^{(t-1)}$.}
    \label{fig:constant-R}
\vspace{-4mm}
\end{wrapfigure}

\textbf{The assumption on the constant $R$.} A limitation of our theoretical analysis is that the constant $R$ is not universally bounded across all configurations $\theta = (\alpha, \vb, \gG)$. In particular, we are unable to express $R$ solely in terms of graph size or input parameters. Nevertheless, empirical results (see Fig.~\ref{fig:constant-R}) consistently show that $R$ remains a small constant and is largely insensitive to the graph size and the condition number, suggesting that this limitation has minimal practical impact. To further upper bound $R$, two possible strategies can be considered: The first is to add a simplex constraint $\Delta := \{\vx: \|\mD^{1/2}\vx\|_1 = 1, \vx \in \R_+^n\}$ to \eqref{equ:f(x)} since $\|\mD^{1/2}\vx^{(t)}\|_1$ remains bounded, the quantity $\|\nabla h_t^{1/2} (\vz_t^{(0)})\|_1$ can also be kept bounded. The projection onto $\Delta$ can be solved in $\gO(|\supp(\vx^{(t)})|\log n)$ time \cite{duchi2008efficient}. The second strategy is to adopt an adaptive restart scheme \cite{o2015adaptive,d2021acceleration}, which can ensure that $\|\nabla h_t^{1/2}(\vz_t^{(0)})\|_1 \leq \|\nabla h_1^{1/2}(\vz_1^{(0)})\|$ throughout the iterations. This may lead to $ R\leq 1$ during adaptive updates.

\subsection{Discussions and related problems}
\label{subsection:discussions}

\textbf{Adaptive strategy for estimating $\eps_t$.} Since $\varphi_T = \gO(\eps^2)$, our conservative estimation of $e_t$ suggests that $1/\eps_T = \gO(1/\eps^2)$. Hence, the time complexity in Eq.~\eqref{equ:final-time-complexity} remains unsatisfactory when $\eps \in [1/\sqrt{m}, 1/m]$. Naturally, one may ask whether the final bound in our time complexity analysis is optimal. We observed that the bound in Lemma~\ref{lemma:inner-loop-eps-t} provides a pessimistic estimation of the objective error. A more careful error estimate can potentially refine this analysis. Specifically, let $\vx_t^{(K_t)}$ be the output of either \textsc{LocGD} or \textsc{LocAPPR}. Then, by Corollary \ref{cor:loc-gd-appr-error-bound}, we have
\begin{equation}
h_t(\vz_t^{(K_t)}) - h_t(\vx_t^*) \leq \frac{\big\|\nabla h_t^{1/2}(\vz_t^{(0)})\big\|_1^2}{(1-\alpha)} \prod_{k=0}^{K_t-1}\left(1-2\gamma_t^{(k)}/3 \right)^2. \label{equ:dynamic-eps-t}
\end{equation}
Inspired by Eq.~\eqref{equ:dynamic-eps-t}, and observing that $\gamma_t^{(k)}$ can be computed in $\vol(\gS_t^{(k)})$ time per iteration, one can propose an adaptive adjustment for $\eps_t$ as follows: We progressively try different precision levels from $\eps_t^{1} = \sqrt{(1-\alpha) \varphi_t /2}, \eps_t^{2} =\sqrt{(1-\alpha) \varphi_t /2^2},\ldots$, to $\eps_t^{s}=\sqrt{(1-\alpha) \varphi_t /m}$, and at each time, verify whether $\|\nabla h_t(\hat{\vx})\|_2 \leq \sqrt{2(1-\alpha) \varphi_t}$ is satisfied. This may potentially reduce the runtime, thereby lowering the time complexity per process.

\textbf{AESP for the variational form of PPR.} Our AESP framework naturally extends to solving~\eqref{equ:psi=f(x)+l1}, where each outer iteration solves the following inexact proximal operator: $\vx^{(t)} \approx \prox_{\psi/\eta}(\vy^{(t-1)})$. We can employ ISTA or greedy coordinate descent to design local operators within the AESP framework. However, whether these standard methods can be effectively localized remains unclear. A previous study by \citet{fountoulakis2019variational} suggested that the monotonicity of the \(\mD^{1/2}\)-scaled gradient of ISTA depends on the non-negativity of the initial $\vz_t^{(0)}$, which it may not be true during the updates. It remains an open question whether one can achieve a time complexity of $\tilde{\gO}(R^2/(\sqrt{\alpha} \hat{\eps}^2))$ without imposing strict non-negativity constraints.

\textbf{Application to other related problems.} Our results or framework can also be applied to other related problems. For example, in the thesis of \citet{lofgren2015efficient} (Section 3.3, Corollary 1), the author proposed a bidirectional PPR algorithm for undirected graphs with a relative error guarantee. If our techniques can be incorporated, then their expected runtime could potentially improve from  
\[
\gO\left( \sqrt{m}/(\alpha \eps) \right) \xrightarrow{ \text{improves to } } \gO\left( \sqrt{m} /(\sqrt{\alpha} \eps)\right).
\]
Additionally, our approach could benefit other problems of single-source PPR estimation, as highlighted in a recent survey by \citet{yang2024efficient}, which shows that many PPR-related computation methods have a time complexity proportional to $1/\alpha$.

\begin{figure}[H]
    \centering
    \vspace{-2mm}
    \includegraphics[width=1\linewidth]{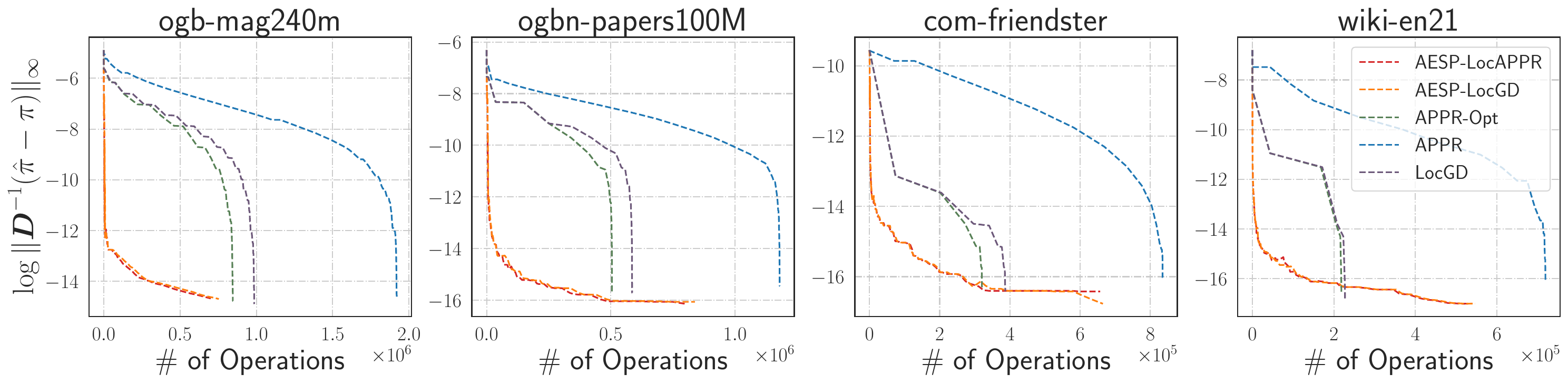}
    \caption{Performance of estimation error reduction, $\log\|\mD^{-1}(\hat{\vpi} -\vpi)\|_\infty$, as a function of operations $\gT$, on the graph \textit{ogb-mag240m}, \textit{ogbn-papers100M}, \textit{com-friendster} and \textit{wiki-en21} with \( \alpha = 0.01 \) and \( \epsilon = 10^{-6} \) where the graph can scale up to $n=244M$ and $m=1.728B$. \vspace{-2mm}}
    \label{fig:p_err_alpha0.01_graph_comparison}
\end{figure}

\section{Experiments}
\label{sect:exp-results}

We conduct experiments on computing PPR for a single source node. We evaluate local methods on real-world graphs to address the following two questions: 1) Does AESP accelerate standard local methods? 2) How do our proposed approaches compare in efficiency with existing local acceleration methods? \textit{Additional experimental results are provided in the Appendix \ref{appendix:c}.} \textit{Our code is publicly available at \href{https://github.com/Rick7117/aesp-local-pagerank}{https://github.com/Rick7117/aesp-local-pagerank}.}

\textbf{AESP achieves early-stage acceleration and overall efficiency.} We begin by conducting experiments on four large-scale real-world graphs, with the number of nodes ranging from 6 million to 240 million. We implement two AESP-PPR variants: AESP-\textsc{LocGD} (where $\gM$ = \textsc{LocGD}) and AESP-\textsc{LocAPPR} (where $\gM$ = \textsc{LocAPPR}). For comparison, we consider three baselines: 1) APPR \cite{andersen2006local}, 2) APPR-opt (APPR with the optimal step size $2/(1+\alpha)$), and 3) \textsc{LocGD} (with the optimal step size $2/(1+\alpha)$). Fig.~\ref{fig:p_err_alpha0.01_graph_comparison} presents our experimental results on four real-world graphs. Among all methods, AESP-\textsc{LocAPPR} is the most efficient due to its online per-coordinate updates. Interestingly, by solving shifted linear systems, AESP-based methods achieve \textit{much faster convergence in the early stages} than the baselines.

 \begin{wrapfigure}{r}{.62\textwidth}
\centering
\vspace{-6mm}
\includegraphics[width=1\linewidth]{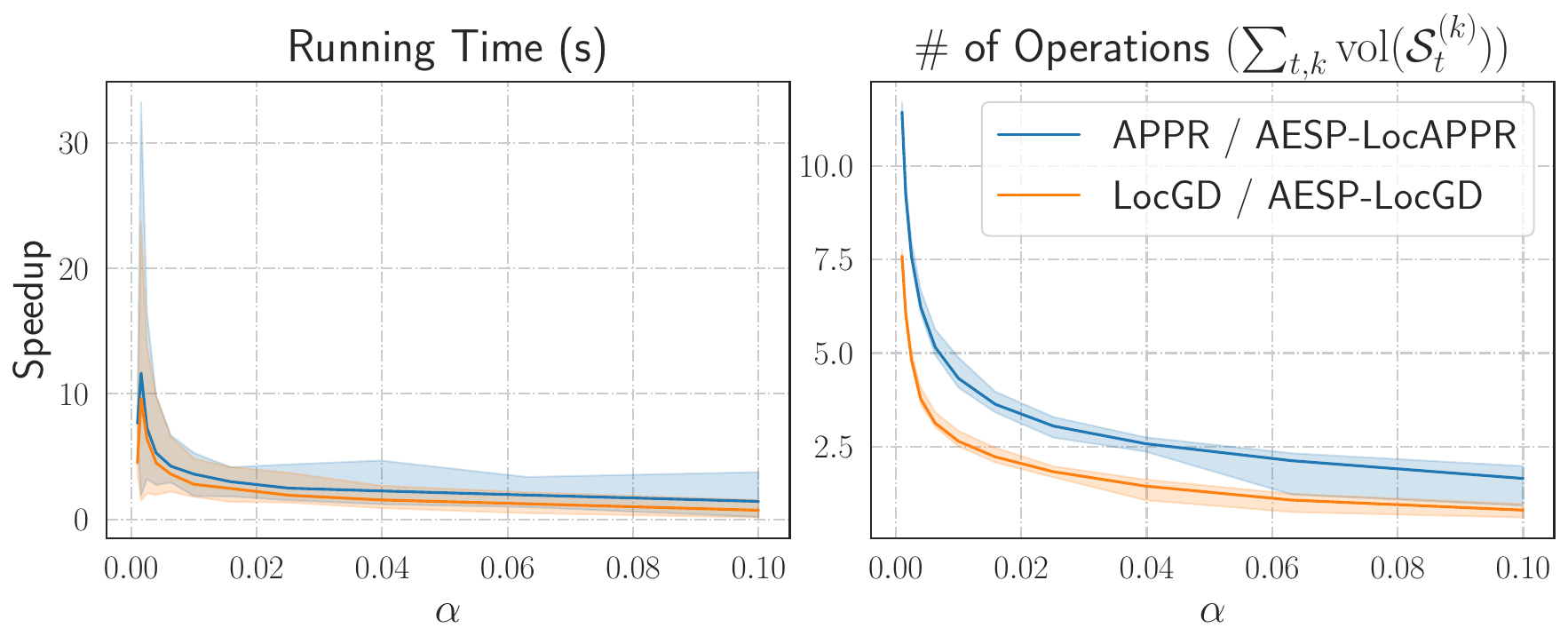}
\vspace{-8mm}
\caption{Speedup of AESP-based methods over standard local solvers (\textsc{LocAPPR}, \textsc{LocGD}) as a function of $\alpha$, on the \textit{com-dblp} graph with \( \epsilon = 0.1/n \) and $\alpha \in (10^{-3},10^{-1})$.}
\label{fig:p_err_speedup}

\vspace{-4mm}
\end{wrapfigure}

\textbf{AESP accelerates standard local methods when $\alpha$ is small.}
We further validate whether \textsc{AESP} effectively accelerates standard local methods such as \textsc{LocGD} and \textsc{APPR}. We fix the precision at $\eps = 10^{-7}$ and vary $\alpha$ from $10^{-3}$ to $10^{-1}$, selecting 50 source nodes $s$ for each $\alpha$ at random. Compared to non-accelerated methods, both AESP-\textsc{LocGD} and AESP-\textsc{LocAPPR} significantly reduce the number of operations and running times required , particularly when $\alpha$ is small.

\section{Conclusions and Discussions}
\label{sect:conclusion}

In this paper, we propose the Accelerated Evolving Set Process (AESP) framework, which leverages an accelerated inexact proximal operator approach to improve the efficiency of Personalized PageRank (PPR) computation.
Our methods provably run in $\tilde{\gO}(1/\sqrt{\alpha})$ iterations, each performing a short evolving set process.
We establish a time complexity of $\tilde{\gO}(\mean{\vol}(\gS_t)/(\sqrt{\alpha} \mean{\gamma}_t))$, and under a mild assumption on the bounded ratio of $\ell_1$-norm-scaled gradients, we show that $\gO(\mean{\vol}(\gS_t)/\mean{\gamma}_t)$ is upper-bounded by $\min\{\gO(R^2/\eps^2), m\}$.
This result demonstrates that our approach is sublinear in time when $\eps > 1/\sqrt{m}$, significantly improving over standard methods.
Our algorithms not only advance local PPR computation but also offer a general-purpose framework that may benefit a wide range of problems, including positive definite linear systems and related tasks in graph analysis.

Despite these advantages, when $\eps < 1/\sqrt{m}$, the local bounds degrade to $\tilde{\gO}(m/\sqrt{\alpha})$.
A limitation of our theoretical analysis is that the constant $R$ is not universally bounded across all configurations $\theta = (\alpha, \vb, \gG)$.
A key open question remains whether the $1/\eps^2$ dependence in our complexity bound can be further reduced to match the conjectured $\tilde{\gO}(1/(\sqrt{\alpha} \eps))$ from existing literature \cite{fountoulakis2022open}, and whether the dependence on the constant $R$ can be entirely eliminated.

\begin{ack}
The authors would like to thank the anonymous reviewers for
their helpful comments. Luo is supported by the Major Key
Project of Pengcheng Laboratory (No. PCL2024A06), 
National Natural Science Foundation of China (No. 12571557), 
National Natural Science Foundation of China (No. 62206058), 
and Shanghai Basic Research Program (23JC1401000). The work of Baojian Zhou is sponsored
by the National Natural Science Foundation of China (No. KRH2305047). The work of Deqing Yang is supported by the Chinese NSF Major
Research Plan (No.92270121), General Program (No.62572129). The computations in this research were performed using the CFFF platform of Fudan University.
\end{ack}

\bibliographystyle{plainnat}
\bibliography{references}

\newpage
\appendix

\section{Missing Proofs}
\label{sect:a}
\allowdisplaybreaks

In this section, we first summarize all notations in Table \ref{tab:notations} for clarity, followed by the presentation of all missing proofs.

\renewcommand{\arraystretch}{1.2}
\begin{table}[ht!]
\caption{Notations}
\centering
\begin{tabular}{p{1.8cm} | p{12cm}}
\toprule
 & \textbf{Description} \\
\hline
$\gG(\gV,\gE)$ & An undirected and connected simple graph with $n=|\gV|$ nodes and $m=|\gE|$ edges.\\
$\alpha$ & The damping factor $\alpha$ which lies in the interval $(0, 1)$. \\
$\mA$ & The adjacency matrix of $\gG$. \\
$\mD$ & The diagonal degree matrix of $\gG$. \\
$\mPi_{\alpha}$ & The PPR matrix defined as $\mPi_{\alpha} = \alpha \left(\frac{1+\alpha}{2} \mI - \frac{1-\alpha}{2} \mA\mD^{-1} \right)^{-1}$.\\
$\mQ$ & The symmetric matrix $\mQ =  \frac{1+\alpha}{2} \mI - \frac{1-\alpha}{2} \mD^{-1/2} \mA \mD^{-1/2}$.\\
$\tilde{\mQ}$ & The shifted matrix of $\mQ$, i.e., $\tilde{\mQ} = \mQ + \eta \mI$.\\
$\supp(\vx)$ & The set of nonzero indices of $\vx \in \sR^n$, i.e., $\supp(\vx) := \{v : x_v\ne 0\}$.\\
$\vol(\gS)$ & The \textit{volume} of $\gS\subseteq \gV$ is the sum of all degrees in $\gS$, that is, $\vol(\gS) = \sum_{v\in \gS} d_v$. \\
$\gS^{(k+1)}_t$ & The set of active nodes at outer-loop iteration $t$ and inner-loop iteration $k$.\\
$K_t$ & The maximum number of iterations of the inner loop at outer-loop iteration $t$.\\
$\mean{\vol}(\gS_t)$ & The average volume of $t$-th local process: $\mean{\vol}(\gS_t) \triangleq \frac{1}{K_t}\sum_{k=0}^{K_t-1} \vol(\gS_t^{(k)})$.\\
$\gamma_t^{(k)}$ & Active ratio at outer-iteration $t$ and inner-loop iteration $k$ defined in Eq. (\ref{equ:average-vol-st-and-gamma-t}).\\
$\mean{\gamma}_t$ & Average active ratio of $t$-th local process defined in Eq. (\ref{equ:average-vol-st-and-gamma-t}). \\
$\ve_s$ & The standard basis vector $\ve_s$ where the $s$-th element is 1, and all other elements are 0.\\
$\vpi$ & The PPR vector $\vpi = \mPi_\alpha \ve_s$. \\
$\vp$ & The $\eps$-approximate PPR vector s.t. $\|\mD^{-1}(\vp-\vpi)\|_{\infty}\leq \eps$.\\
$\vb$ & A sparse vector in Eq. (\ref{equ:f(x)}).\\
$\vb^{(t)}$ & $\vb^{(t)} = \alpha \mD^{-1/2} \vb + \eta \vy^{(t)}$.\\
$\vr$ & Residual of $\eps$-approximate PPV satisfying $\vr = \ve_s - \mPi_\alpha^{-1}\vp$.\\
$\nabla f^{1/2}(\vx)$ & $\mD^{1/2}$-scaled gradient of $f$ at $\vx$, $\nabla f^{1/2}(\vx) = \mD^{1/2}\nabla f(\vx)$. \\
$\nabla f^{-1/2}(\vx)$ & $\mD^{-1/2}$-scaled gradient of $f$ at $\vx$, $\nabla f^{-1/2}(\vx) = \mD^{-1/2}\nabla f(\vx)$. \\
$f$ & Quadratic function defined in Eq. (\ref{equ:f(x)}).\\
$h_t$ & Proximal operator objective at $t$-th iteration defined in Eq. (\ref{equ:h_t(vz)}).\\
$\mu, L$ & Two constants with $\frac{\mu}{2}\|\vx - \vy\|_2^2 \leq f(\vy) - f(\vx)-\left\langle \nabla f(\vx), \vy- \vx\right\rangle \leq \frac{L}{2}\|\vx-\vy\|_2^2$. \\
$\eta$ &  Smoothing parameter for Eq.~\eqref{equ:prox-operator-g/eta}.\\
$q$ & $q = \mu / (\mu + \eta)$.\\
$\rho$ & $\rho = 0.9\sqrt{q}$ \\
$\varphi_t$ & Inner-loop stop criteria of $h_t(\hat{\vx}) - h_t^* \leq \varphi_t$.\\
$\epsilon_t$ & Inner-loop stop criteria of $\|\mD^{-1/2}\nabla h_t(\hat{\vz})\|_{\infty} \leq \eps_t$. \\
$\operatorname{prox}_{g/\eta}(\vy)$ & The proximal point of $\vy$, i.e., $\operatorname{prox}_{g/\eta}(\vy)= \argmin_{\vx \in \R^n} \left\{g(\vx) + \frac{\eta}{2} \|\vx -\vy\|_2^2\right\}$.\\
$C_{h_t}^i$ & $C_{h_t}^i = \|\mD^{1/2}\nabla h_t(\vz_t^{(i)})\|_1$.\\
$\gT_t^{\gM}$ &  Time complexity of $\gM$, which is characterized by $\gT^{\gM}_t = K_t\cdot \mean{\vol}(\gS_t)$.\\
$\gT$ & Total time complexity.\\
\bottomrule
\end{tabular}
\label{tab:notations}
\end{table}

\subsection{Proofs of Lemmas \ref{lemma:properties-ppv} and \ref{lemma:properties-f} }

\begin{replemma}{lemma:properties-ppv}[Properties of $\vpi$]
Define the PPR matrix $\mPi_{\alpha} = \alpha \left(\frac{1+\alpha}{2} \mI - \frac{1-\alpha}{2} \mA\mD^{-1} \right)^{-1}$. Let the estimate-residual pair $(\vp,\vr)$ for Eq.~\eqref{equ:PPR} satisfy $\vr = \ve_s - \mPi_\alpha^{-1}\vp$. Then,
\begin{itemize}[leftmargin=1.0em]
\item  The PPR vector is given by $\vpi = \mPi_\alpha \ve_s$, which is a probability distribution, i.e., $\forall i\in \gV$, $\pi_i > 0$ and $\|\vpi\|_1 = 1$. For $\eps > 0$, the stop condition $\|\mD^{-1} \vr\|_\infty < \eps$ ensures $\|\mD^{-1} (\vp - \vpi)\|_\infty < \eps$.
\item The matrix $\alpha\mQ^{-1}$ is similar to the matrix $\mPi_\alpha$, i.e., $\alpha \mD^{1/2}\mQ^{-1}\mD^{-1/2} = \mPi_\alpha$. Furthermore, the $\ell_1$-norm of $\mPi_\alpha$ satisfies $\|\mPi_\alpha\|_1 = \|\mD^{-1}\mPi_\alpha \mD\|_\infty = 1$. 
\end{itemize}
\end{replemma}

\begin{proof}
The PPR vector is a probability distribution and is given by $\vpi = \mPi_\alpha \ve_s$, which follows directly from the definition in Eq.~\eqref{equ:PPR}. To verify the stopping condition, note that the residual satisfies $\mPi_\alpha \vr = \vpi - \vp$, that is,
\begin{align*}
\mD^{-1}\bm \Pi_\alpha \mD \mD^{-1}\vr &= \mD^{-1} (\vpi - \vp).
\end{align*}
To meet the stop condition $\|\mD^{-1}( \vp - \vpi)\|_\infty \leq \eps$, we note
\begin{align*}
\|\mD^{-1} (\vp - \vpi)\|_\infty &= \| \mD^{-1}\bm \Pi_\alpha \mD\cdot \mD^{-1}\vr \|_\infty \\
&\leq \| \mD^{-1}\bm \Pi_\alpha \mD\|_\infty \cdot \|\mD^{-1}\vr \|_\infty \\
&= \|\mD^{-1}\vr \|_\infty \leq \eps,
\end{align*}
where the second inequality is due to $\|\mD^{-1}\bm \Pi_\alpha \mD\|_\infty = \|\bm \Pi_\alpha\|_1 = 1$. To verify the second item, note that
\begin{align*}
\|\mD^{-1}\bm \Pi_\alpha \mD\|_\infty &= \left\| \alpha \left( \frac{1+\alpha}{2} \mI - \frac{1-\alpha}{2}\mD^{-1}\mA\right)^{-1} \right\|_\infty \\
 &= \left\|\alpha\left(\left( \frac{1+\alpha}{2} \mI -\frac{1-\alpha}{2}\mD^{-1}\mA\right)^{-1}\right)^\top\right\|_1 = \|\bm \Pi_\alpha\|_1 = 1.
\end{align*}
\end{proof}

\begin{replemma}{lemma:properties-f}[Properties of $\vx_f^*$ and $\vx_\psi^*$]
Denote the gradient of $f$ at $\vx$ as $\nabla f(\vx) := \mQ \vx - \alpha\mD^{-1/2}\vb$ and optimal solution $\vx_f^* = \alpha\mQ^{-1}\mD^{-1/2}\vb$ satisfying $\vpi : = \mD^{1/2} \vx_f^*$. Define $\vp = \mD^{1/2}\vx$. Then,
\begin{itemize}[leftmargin=1.0em]
\item The stop condition $\|\mD^{-1/2}\nabla f(\vx)\|_\infty < \alpha\eps$ implies $\|\mD^{-1}(\vp - \vpi)\|_\infty < \eps.$
\item The objective $f$ is $\mu$-strongly convex and $L$-smooth with two constants $\mu=\alpha$ and $L=1$. When $\hat{\eps} = \eps$, then $\vx_\psi^* \in \gP(\eps,\alpha,\ve_s,\gG)$ and the solution is sparse, i.e., $|\supp(\vx_\psi^*)| \leq 1/\hat{\eps}$.
\end{itemize}
\end{replemma}

\begin{proof}
For item 1, note $\vx_f^* = \alpha\mQ^{-1}\mD^{-1/2} \vb = \mD^{-1/2} \mPi_\alpha \mD^{1/2} \mD^{-1/2} \ve_s = \mD^{-1/2}\vpi$. Hence, $\vpi = \mD^{1/2} \vx_f^*$. With $\nabla f(\vx) = \mQ \vx - \alpha\mD^{-1/2}\vb$, we have
\begin{align*}
\|\mD^{-1} \left(\vp - \vpi \right)\|_\infty &= \|\mD^{-1/2} (\vx - \vx_f^*)\|_\infty \\
&= \|\mD^{-1/2} (\mQ^{-1} \nabla f(\vx))\|_\infty \\
&= \|\mD^{-1/2} \mQ^{-1}\mD^{1/2} \mD^{-1/2} \nabla f(\vx)\|_\infty \\
&= \frac{1}{\alpha} \|\mD^{-1} \mPi_\alpha\mD \mD^{-1/2} \nabla f(\vx)\|_\infty \\
&\leq \frac{1}{\alpha} \|\mD^{-1} \mPi_\alpha\mD\|_\infty \cdot \| \mD^{-1/2} \nabla f(\vx)\|_\infty \\
&= \frac{1}{\alpha} \| \mD^{-1/2} \nabla f(\vx)\|_\infty < \eps.
\end{align*}
For item 2, the Hessian of $f$ is $\mH_f = \mQ$ and the eigenvalues of $\lambda(\mA\mD^{-1})$ satisfy $\lambda(\mA\mD^{-1}) \in (-1,1]$, so $\lambda(\mQ) \in [\alpha, 1]$. When $\hat{\eps} = \eps$, then $\vx_\psi^* \in \gP(\eps,\alpha,\ve_s,\gG)$, which follows from the result of \citet{fountoulakis2019variational}.
\end{proof}

The following lemma provides useful results that will be useful in our later proofs.

\begin{lemma}[Gradient Descent for $(\mu,L)$-convex $f$ \cite{de2017worst,uschmajew2022note}]
Let $f$ be $\mu$-strongly convex and $L$-smooth. Consider the gradient descent update
\[
\vx^{(t+1)} = \vx^{(t)} - \frac{2}{\mu + L} \nabla f(\vx^{(t)})
\]
Then, the following properties hold
\begin{itemize}[leftmargin=1.0em]
\item Bounds on initial function error, $\frac{\mu}{2} \| \vx^{(0)} - \vx^* \|_2^2 \leq f(\vx^{(0)}) - f(\vx^*) \leq \frac{L}{2} \| \vx^{(0)} - \vx^* \|_2^2$.
\item Gradient norm bounds on initial gradient, $$\frac{1}{2 L}\| \nabla f(\vx^{(0)})- \nabla f(\vx^*)\|_2^2 \leq f(\vx^{(0)}) - f(\vx^{*}) \leq \frac{1}{2 \mu}\| \nabla f(\vx^{(0)})- \nabla f(\vx^*)\|_2^2.$$
\item Per-iteration reduction in function error, $f(\vx^{(t+1)}) - f^* \leq \left(\frac{L-\mu}{L+\mu}\right)^2(f(\vx^{(t)}) - f^*)$.
\item Per-iteration reduction in estimation error, $\|\vx^{(t+1)} - \vx^*\|_2^2 \leq \left(\frac{L-\mu}{L+\mu}\right)^2 \|\vx^{(t)} - \vx^*\|_2^2$.
\end{itemize}
\label{lemma:gd-strong-convex}
\end{lemma}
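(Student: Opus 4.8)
The plan is to dispatch the four items in order of increasing difficulty, since items~1 and~2 are immediate from the defining inequalities of $\mu$-strong convexity and $L$-smoothness, while items~3 and~4 are the convergence estimates and require more work. For item~1, I would apply the strong-convexity inequality $f(\vy)\ge f(\vx)+\langle\nabla f(\vx),\vy-\vx\rangle+\tfrac{\mu}{2}\|\vx-\vy\|_2^2$ with $\vx=\vx^*$, $\vy=\vx^{(0)}$, and the smoothness inequality $f(\vy)\le f(\vx)+\langle\nabla f(\vx),\vy-\vx\rangle+\tfrac{L}{2}\|\vx-\vy\|_2^2$ with the same substitution; in both cases $\nabla f(\vx^*)=\zero$ removes the linear term, giving the two-sided bound directly. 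For item~2, I would instead \emph{minimize over $\vy$} (keeping $\vx=\vx^{(0)}$) in those same two inequalities: the strong-convexity lower bound, minimized at $\vy=\vx^{(0)}-\tfrac1\mu\nabla f(\vx^{(0)})$, yields $f^*\ge f(\vx^{(0)})-\tfrac1{2\mu}\|\nabla f(\vx^{(0)})\|_2^2$, and the smoothness upper bound (using that $g\le h$ pointwise implies $\min g\le\min h$) yields $f^*\le f(\vx^{(0)})-\tfrac1{2L}\|\nabla f(\vx^{(0)})\|_2^2$; rearranging and writing $\nabla f(\vx^{(0)})=\nabla f(\vx^{(0)})-\nabla f(\vx^*)$ gives the claim.

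For item~4, I would invoke the combined monotonicity/co-coercivity estimate for $(\mu,L)$-functions (Theorem~2.1.12 in \cite{nesterov2003introductory}): $\langle\nabla f(\vx)-\nabla f(\vy),\vx-\vy\rangle\ge\tfrac{\mu L}{\mu+L}\|\vx-\vy\|_2^2+\tfrac1{\mu+L}\|\nabla f(\vx)-\nabla f(\vy)\|_2^2$. Taking $\vy=\vx^*$, expanding $\|\vx^{(t+1)}-\vx^*\|_2^2=\|\vx^{(t)}-\vx^*\|_2^2-\tfrac4{\mu+L}\langle\nabla f(\vx^{(t)}),\vx^{(t)}-\vx^*\rangle+\tfrac4{(\mu+L)^2}\|\nabla f(\vx^{(t)})\|_2^2$, and substituting the lower bound causes the gradient-norm terms to cancel exactly, leaving $\|\vx^{(t+1)}-\vx^*\|_2^2\le\bigl(1-\tfrac{4\mu L}{(\mu+L)^2}\bigr)\|\vx^{(t)}-\vx^*\|_2^2=\bigl(\tfrac{L-\mu}{L+\mu}\bigr)^2\|\vx^{(t)}-\vx^*\|_2^2$, as claimed.

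Item~3 is the delicate one and the main obstacle: the contraction factor $\bigl(\tfrac{L-\mu}{L+\mu}\bigr)^2$ \emph{on function values} is tight, but it does not follow from item~4 combined with items~1--2 (that route loses a factor $L/\mu$), nor from the elementary descent-lemma plus Polyak--{\L}ojasiewicz argument, which only gives the weaker factor $\tfrac{(L-\mu)(L+3\mu)}{(L+\mu)^2}$ because the step size $\tfrac2{\mu+L}$ exceeds $\tfrac1L$ whenever $\mu<L$. To obtain the sharp constant I would rely on the worst-case performance-estimation analysis of \cite{de2017worst,uschmajew2022note}, which certifies precisely this rate for gradient descent with step size $\tfrac2{\mu+L}$ via an SDP/interpolation argument; the two-dimensional quadratic $f(\vx)=\tfrac\mu2 x_1^2+\tfrac L2 x_2^2$, for which each iterate satisfies the inequality with equality, shows the constant cannot be improved. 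Since only items~1, 2, and~4 are used in an essential way in the sequel (to control the shifted subproblems $h_t$ with modulus $\mu+\eta$ and smoothness $L+\eta$), it would also be acceptable to state item~3 with a citation and omit its re-derivation.
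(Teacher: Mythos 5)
Your proposal is correct, and in fact it supplies more than the paper does: the paper states this lemma as a cited auxiliary result from \cite{de2017worst,uschmajew2022note} and gives no proof of its own, so there is no in-paper argument to compare against. Your derivations of items~1 and~2 from the defining inequalities of strong convexity and smoothness (evaluating at $\vy=\vx^{(0)}$, $\vx=\vx^*$ with $\nabla f(\vx^*)=\zero$ for item~1, and minimizing over $\vy$ for item~2) are the standard ones and are valid, as is the item~4 argument via the co-coercivity bound $\langle\nabla f(\vx)-\nabla f(\vy),\vx-\vy\rangle\ge\tfrac{\mu L}{\mu+L}\|\vx-\vy\|_2^2+\tfrac{1}{\mu+L}\|\nabla f(\vx)-\nabla f(\vy)\|_2^2$, where the step size $\tfrac{2}{\mu+L}$ indeed makes the gradient-norm terms cancel and leaves the factor $1-\tfrac{4\mu L}{(\mu+L)^2}=\bigl(\tfrac{L-\mu}{L+\mu}\bigr)^2$. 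Your diagnosis of item~3 is also accurate: the descent-lemma-plus-PL route with step $\tfrac{2}{\mu+L}$ yields only $\tfrac{(L-\mu)(L+3\mu)}{(L+\mu)^2}$, and chaining item~4 with items~1--2 costs a factor $L/\mu$, so the sharp per-iteration function-value contraction genuinely requires the performance-estimation analysis of the cited works --- which is precisely why the paper cites them rather than re-deriving the result. Your closing observation that only items~1, 2, and~4 are used essentially downstream (to control the shifted subproblems $h_t$) matches how the lemma is actually invoked in the paper, so deferring item~3 to the citation is a legitimate choice.
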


\subsection{The proof of Lemma \ref{lemma:inner-loop-eps-t}}

\begin{replemma}{lemma:inner-loop-eps-t}
Let $h_t$ be defined in Eq.~\eqref{equ:h_t(vz)}, and suppose that the initial point \( \vz_t^{(0)} \) of $t$-th process satisfies $\nabla h_t(\vz_t^{(0)}) \ne \zero$. If there exists a local algorithm $\gM$ such that $\|\nabla h_t^{1/2}(\vz_t^{(K_t)})\|_1 < \|\nabla h_t^{1/2}(\vz_t^{(0)}) \|_1$, then for a stopping condition $\| \nabla h_t^{-1/2}(\vz_t^{(k)})\|_\infty < \epsilon_t$ of $\gM$ with 
\begin{equation}
\epsilon_t \triangleq \max\left\{\sqrt{ \frac{(\mu+\eta)\varphi_t}{m}}, \frac{2(\eta+\alpha)\varphi_t}{ \|\nabla h_t^{1/2}(\vz_t^{(0)})\|_1 } \right\}, \text{ where } \varphi_t > 0, \nonumber
\end{equation}
the final solution $\vz_t^{(K_t)}$ is guaranteed in the ball, i.e., $\vz_t^{(K_t)} \in \gH_t(\varphi_t)$ as defined in~\eqref{equ:criterion-01}.
\end{replemma}
\begin{proof}
The proximal objective $h_t$ defined in Eq.~\eqref{equ:h_t(vz)} at $t$-th iteration can be expanded as
\begin{equation}
h_t(\vz) = \frac{1}{2} \vz^{\top}\left(\frac{1+\alpha+2\eta}{2}\mI - \frac{1-\alpha}{2}\mD^{-1/2}\mA\mD^{-1/2}\right)\vz - \vz^{\top}\left(\alpha\mD^{-1/2}\vb+\eta \vy^{(t-1)}\right)+ \frac{\eta}{2} \|\vy^{(t-1)}\|_2^2. \label{equ:h_t-expansion}
\end{equation}
Let $\tilde{\mQ} = \frac{1+\alpha+2\eta}{2}\mI - \frac{1-\alpha}{2}\mD^{-1/2}\mA\mD^{-1/2}$ and $\vb^{(t-1)} = \alpha\mD^{-1/2}\vb+\eta \vy^{(t-1)}$, then
\begin{equation}
\Tilde{\mQ} \vz = \vb^{(t-1)}, \quad \Tilde{\mQ} \triangleq \frac{1+\alpha+2\eta}{2}\mI - \frac{1-\alpha}{2}\mD^{-1/2}\mA\mD^{-1/2}, \quad \vb^{(t-1)} = \alpha\mD^{-1/2}\vb+\eta \vy^{(t-1)}. \label{equ:tilde-Q-bt}
\end{equation}
We know the optimal solution $\vx_t^* = \tilde{\mQ}^{-1}\vb^{(t-1)}$. Note that the objective error can be rewritten as
\begin{align*}
h_t(\vz) - h_t(\vx_t^*) &= \frac{1}{2} \vz^\top \tilde{\mQ} \vz -  \vz^\top \vb^{(t-1)} - \left( \frac{1}{2} {\vx_t^*}^\top \tilde{\mQ} \vx_t^* -  {\vx_t^*}^\top \vb^{(t-1)} \right) \\
&= \frac{1}{2} \vz^\top \tilde{\mQ} \vz -  \vz^\top \vb^{(t-1)} - \left( \frac{1}{2} {\vx_t^*}^\top \tilde{\mQ} \vx_t^* -  {\vx_t^*}^\top \tilde{\mQ} \vx_t^* \right) \\
&= \frac{1}{2} \vz^\top \tilde{\mQ} \vz -  \vz^\top \vb^{(t-1)} + \frac{1}{2} {\vx_t^*}^\top \tilde{\mQ} \vx_t^*\\
&= \frac{1}{2} \vz^\top \tilde{\mQ} \vz - \frac{1}{2} \vz^\top \tilde{\mQ} \vx_t^* - \frac{1}{2} {\vx_t^*}^\top \tilde{\mQ} \vz  + \frac{1}{2} {\vx_t^*}^\top \tilde{\mQ} \vx_t^*\\
&= \frac{1}{2} (\vz - \vx_t^*)^\top \tilde{\mQ} (\vz - \vx_t^*).
\end{align*}
Since $\vz_t^{(K_t)} - \vx_t^* = \tilde{\mQ}^{-1} \nabla h_t(\vz_t^{(K_t)})$, we show $h_t(\vz_t^{(K_t)}) - h_t^*$ can be rewritten in terms of $\nabla h_t(\vz_t^{(K_t)})$
\begin{align*}
h_t(\vz_t^{(K_t)}) - h_t(\vx_t^*) &= \frac{1}{2} (\vz_t^{(K_t)} - \vx_t^*)^\top \tilde{\mQ} (\vz_t^{(K_t)} - \vx_t^*) \\
&= \frac{1}{2} \nabla h_t(\vz_t^{(K_t)})^\top \tilde{\mQ}^{-1}\tilde{\mQ} \tilde{\mQ}^{-1} \nabla h_t(\vz_t^{(K_t)}) \\
&= \frac{1}{2} \nabla h_t(\vz_t^{(K_t)})^\top \tilde{\mQ}^{-1} \nabla h_t(\vz_t^{(K_t)}) \\
&= \frac{1}{2(\eta+\alpha)} \nabla h_t(\vz_t^{(K_t)})^\top \mD^{-1/2} \mPi_{ \frac{\eta+\alpha}{1+\eta} } \mD^{1/2} \nabla h_t(\vz_t^{(K_t)}) \\
&= \frac{1}{2(\eta+\alpha)} (\mD^{-1/2} \nabla h_t(\vz_t^{(K_t)}))^\top  \mPi_{ \frac{\eta+\alpha}{1+\eta} } \mD^{1/2} \nabla h_t(\vz_t^{(K_t)}),
\end{align*}
where the fourth equality is due to the identity $\mD^{-1/2} \mPi_{ \frac{\eta+\alpha}{1+\eta} } \mD^{1/2} = (\eta+\alpha) \tilde{\mQ}^{-1}$. By Hölder's inequality, we have
\begin{align*}
h_t(\vz_t^{(K_t)}) - h_t(\vx_t^*) &\leq \frac{1}{2(\eta+\alpha)} \|\mD^{-1/2} \nabla h_t(\vz_t^{(K_t)}) \|_\infty\cdot \| \mPi_{ \frac{\eta+\alpha}{1+\eta} } \mD^{1/2} \nabla h_t(\vz_t^{(K_t)}) \|_1 \\
&\leq \frac{1}{2(\eta+\alpha)} \|\mD^{-1/2} \nabla h_t(\vz_t^{(K_t)}) \|_\infty\cdot \| \mD^{1/2} \nabla h_t(\vz_t^{(K_t)}) \|_1 \\
&\leq \frac{1}{2(\eta+\alpha)} \|\mD^{-1/2} \nabla h_t(\vz_t^{(K_t)}) \|_\infty\cdot \| \mD^{1/2} \nabla h_t(\vz_t^{(0)}) \|_1 \leq \varphi_t,
\end{align*}
where the last inequality gives the second part of $\eps_t$. On the other hand, we know
\begin{align*}
h_t(\vz_t^{(K_t)}) - h_t(\vx_t^*) &\leq \frac{1}{2(\eta+\alpha)} \|\mD^{-1/2} \nabla h_t(\vz_t^{(K_t)}) \|_\infty\cdot \| \mD^{1/2} \nabla h_t(\vz_t^{(K_t)}) \|_1 \\
&\leq \frac{1}{2(\eta+\alpha)} \|\mD^{-1/2} \nabla h_t(\vz_t^{(K_t)}) \|_\infty\cdot \| \mD\mD^{-1/2} \nabla h_t(\vz_t^{(K_t)}) \|_1 \\
&\leq \frac{1}{2(\eta+\alpha)} \vol(\supp(\nabla h_t(\vz_t^{(K_t)})))\|\mD^{-1/2} \nabla h_t(\vz_t^{(K_t)}) \|_\infty^2 \\
&\leq \frac{m}{\eta+\alpha} \|\mD^{-1/2} \nabla h_t(\vz_t^{(K_t)}) \|_\infty^2 \leq \varphi_t,
\end{align*}
which gives the first part of $\eps_t$.
\end{proof}

\begin{remark}
Choosing a starting point $\vz_t^{(0)}$ is straightforward. For example, if $\vz_t^{(0)} = \zero$, then $\nabla h_t(\vz_t^{(0)}) = \vb^{(t-1)}$. Consequently, the following stopping condition is sufficient:
\[
\|\mD^{-1/2} \nabla h_t(\vz_t^{(K_t)}) \|_\infty \leq \eps_t := \max\left\{\sqrt{ \frac{(\mu+\eta)\varphi_t}{m}}, \frac{2(\eta+\alpha) \varphi_t }{\|\mD^{1/2}\vb^{(t-1)}\|_1} \right\}.
\]
\end{remark}

\subsection{Proofs of Theorems \ref{thm:convergence-loc-gd} and \ref{thm:convergence-local-appr}}

At each iteration $t$ of the AESP framework in Algorithm \ref{alg:local-catalyst-p1}, we solve the inexact proximal operator $\vx^{(t)} \approx \prox_{f/\eta} (\vy^{(t-1)})$, as defined in \eqref{equ:prox-operator-g/eta}, where $f$ is given in \eqref{equ:f(x)}. The following theorem establishes the convergence properties of \textsc{LocGD}.

\begin{reptheorem}{thm:convergence-loc-gd}[\textbf{Convergence of \textsc{LocGD}}]
Let $h_t$ be defined in Eq.~\eqref{equ:h_t(vz)}. \textsc{LocGD} (Algorithm \ref{alg:local-gd}) is used to minimize $h_t(\vz)$ and returns $\vz_t^{(K_t)} = \textsc{LocGD}(\varphi_t, \vy^{(t-1)},\eta,  \alpha,\vb,\gG) \in \gH_t(\varphi_t)$. Recall the $\mD^{1/2}$-scaled gradient $\nabla h_t^{1/2}(\vz_t^{(k)}) := \mD^{1/2}\nabla h_t(\vz_t^{(k)})$. For $k\geq 0$, the scaled gradient satisfies 
\[
\big\|\nabla h_t^{1/2}(\vz_t^{(k+1)})\big\|_1 \leq \big(1-\tau\gamma_t^{(k)}\big)\big\|\nabla h_t^{1/2}(\vz_t^{(k)})\big\|_1,
\]
where $\tau:=\frac{2(\alpha+\eta)}{1+\alpha+2\eta}$ and $\gamma_t^{(k)}$ is the ratio defined in Eq.~\eqref{equ:average-vol-st-and-gamma-t}. Assume  $\eps_t$ and stop condition are defined in Eq.~\eqref{equ:eps-t} of Lemma \ref{lemma:inner-loop-eps-t}, then the run time $\gT_t^{\textsc{LocGD}}$, as defined in Eq.~\eqref{equ:time-complexity}, is bounded by
\begin{equation}
\gT_t^{\textsc{LocGD}} \leq  \min\left\{ \frac{\mean{\vol}(\gS_t)}{\tau\mean{\gamma}_t} \log\frac{C_{h_t}^0}{C_{h_t}^{K_t}}, \frac{ C_{h_t}^0 - C_{h_t}^{K_t} }{\tau \eps_t} \right\}, \nonumber
\end{equation}
where $C_{h_t}^i = \|\nabla h_t^{1/2}(\vz_t^{(i)})\|_1$ denote constants. Furthermore, $\mean{\vol}(\gS_t)/\mean{\gamma}_t \leq \min \left\{ C_{h_t}^0/\eps_t, 2 m  \right\}$.
\end{reptheorem}

\begin{proof}
Recall that $h_t(\vz)$ is defined in Eq. \eqref{equ:h_t-expansion}. Minimizing $h_t(\vz)$ is equivalent to solving the linear system $\tilde{\mQ} \vz = \vb^{(t-1)}$, as defined in Eq.~\eqref{equ:tilde-Q-bt}. Using the iteration of \textit{local gradient descent} in Eq.~\eqref{equ:local-gd}, and noting that $h_t(\vz)$ is $(\mu+\eta)$-strongly convex and $(L+\eta)$-smooth, the optimal step size is given by $2/(\mu+L+2\eta)$, as established in Lemma \ref{lemma:gd-strong-convex}. The updated gradient at iteration $(t+1)$ is then computed as follows.
\[
\begin{aligned}
\nabla h_t(\vz_t^{(k+1)}) &= \Tilde{\mQ}\vz_t^{(k+1)} - \vb^{(t-1)} \\
& = \Tilde{\mQ} \left(\vz_t^{(k)}-\frac{2}{1+\alpha+2\eta}\nabla h_t(\vz_t^{(k)}) \circ \vone_{\gS_t^{(k)}} \right) - \vb^{(t-1)} \\
& = \nabla h_t(\vz_t^{(k)}) - \frac{2}{1+\alpha+2\eta}\Tilde{\mQ}\nabla h_t(\vz_t^{(k)}) \circ \vone_{\gS_t^{(k)}}.
\end{aligned}
\]
For simplicity, recall that we denote the normalized gradient $\nabla h_t^{1/2}(\vz_t^{(k)}) = \mD^{1/2}\nabla h_t(\vz_t^{(k)})$, then we continue to have
\begin{align}
& \left\|\nabla h_t^{1/2} (\vz_t^{(k+1)})\right\|_1 = \left\|\nabla h_t^{1/2}(\vz_t^{(k)}) - \left(\boldsymbol{I}-\frac{1-\alpha}{1+\alpha+2\eta}\mA\mD^{-1}\right)\nabla h_t^{1/2}(\vz_t^{(k)}) \circ \vone_{\gS_t^{(k)}}\right\|_1  \nonumber\\
&\leq \|\nabla h_t^{1/2}(\vz_t^{(k)}) - \nabla h_t^{1/2}(\vz_t^{(k)}) \circ \vone_{\gS_t^{(k)}}\|_1 + \left\|\frac{1-\alpha}{1+\alpha+2\eta}\mA\mD^{-1}\nabla h_t^{1/2}(\vz_t^{(k)}) \circ \vone_{\gS_t^{(k)}}\right\|_1 \nonumber\\
&\leq \|\nabla h_t^{1/2}(\vz_t^{(k)})\|_1 - \|\nabla h_t^{1/2}(\vz_t^{(k)}) \circ \vone_{\gS_t^{(k)}}\|_1 + \frac{1-\alpha}{1+\alpha+2\eta}\|\nabla h_t^{1/2}(\vz_t^{(k)}) \circ \vone_{\gS_t^{(k)}}\|_1 \nonumber\\
& = \|\nabla h_t^{1/2}(\vz_t^{(k)})\|_1 - \frac{2\alpha+2\eta}{1+\alpha+2\eta}\|\nabla h_t^{1/2}(\vz_t^{(k)}) \circ \vone_{\gS_t^{(k)}}\|_1. \label{equ:local-gd-iteration-reduction}
\end{align}
Therefore, associated with the gradient reduction ratio defined in Eq.~\eqref{equ:average-vol-st-and-gamma-t}, we obtain
\[
\|\nabla h_t^{1/2}(\vz_t^{(k+1)})\|_1 \leq \left(1-\frac{2(\alpha+\eta)}{1+\alpha+2\eta}\gamma_t^{(k)}\right)\|\nabla h_t^{1/2}(\vz_t^{(k)})\|_1.
\]
For any $K\geq 1$, the above per-iteration reduction gives us
\[
\|\mD^{1/2}\nabla h_t(\vz_t^{(K)})\|_1 \leq \prod_{k=0}^{K-1}\left(1-\frac{2(\alpha+\eta)}{1+\alpha+2\eta}\gamma_t^{(k)}\right)\|\mD^{1/2}\nabla h_t(\vz_t^{(0)})\|_1.
\]
Specifically, let $K_t$ be the total number of iterations of \textsc{LocGD} called from Local-Catalyst at $t$-th iteration using the precision $\eps_t$.
Denote that $\Bar{\gamma}_t = \frac{1}{K_t}\sum_{k=0}^{K_t-1}\gamma_t^{(k)}$, we can obtain an upper bound of $K_t$ as the following
\begin{align*}
\log\frac{\|\mD^{1/2}\nabla h_t(\vz_t^{(K_t)})\|_1}{\|\mD^{1/2}\nabla h_t(\vz_t^{(0)})\|_1} &\leq \sum_{k=0}^{K_t-1}\log\left(1-\frac{2(\alpha+\eta)}{1+\alpha+2\eta}\gamma_t^{(k)}\right) \leq - \sum_{k=0}^{K_t-1}\frac{2(\alpha+\eta)}{1+\alpha+2\eta}\gamma_t^{(k)}.    
\end{align*}
The above inequality implies $K_t \leq \frac{1+\alpha + 2\eta}{2(\alpha+\eta)\Bar{\gamma}_t}\log\frac{\|\mD^{1/2}\nabla h_t(\vz_t^{(0)})\|_1}{\|\mD^{1/2}\nabla h_t(\vz_t^{(K_t)})\|_1}$. Therefore, we have the time complexity as 
\[
\sum_{k=0}^{K_t-1} \vol(\gS_t^{(k)}) = K_t \cdot \mean{\vol}(\gS_t) \leq \frac{(1+\alpha+2\eta)\mean{\vol}(\gS_t)}{2(\alpha+\eta)\mean{\gamma}_t} \log\frac{\|\mD^{1/2}\nabla h_t(\vz_t^{(0)})\|_1}{\|\mD^{1/2}\nabla h_t(\vz_t^{(K_t)})\|_1}.
\]

On the other hand, from inequality of inequality~\eqref{equ:local-gd-iteration-reduction}, we know that 
\begin{align*}
\frac{2\alpha+2\eta}{1+\alpha+2\eta} \cdot \eps_t \cdot \vol(\gS_t^{(k)}) &\leq \frac{2\alpha+2\eta}{1+\alpha+2\eta}\|\nabla h_t^{1/2}(\vz_t^{(k)}) \circ \vone_{\gS_t^{(k)}}\|_1 \\
&\leq \|\nabla h_t^{1/2}(\vz_t^{(k)})\|_1 - \|\nabla h_t^{1/2}(\vz_t^{(k+1)})\|_1 . 
\end{align*}
The total runtime can also be bounded as
\begin{align*}
\gT_{t}^{\textsc{LocGD}} &= \sum_{k=0}^{K_t - 1} \vol(\gS_t^{(k)}) \leq \frac{1+\alpha+2\eta}{2(\alpha+\eta)\eps_t} \sum_{k=0}^{K_t -1} \left( \|\nabla h_t^{1/2}(\vz_t^{(k)})\|_1 - \|\nabla h_t^{1/2}(\vz_t^{(k+1)})\|_1 \right) \\
&= \frac{1+\alpha+2\eta}{2(\alpha+\eta)\eps_t} \left(\|\nabla h_t^{1/2}(\vz_t^{(0)})\|_1 - \|\nabla h_t^{1/2}(\vz_t^{(K_t)})\|_1\right).
\end{align*}
Combining the two bounds above, we establish the first part of the theorem. To verify that the ratio serves as a lower bound for $\|\nabla h_t^{1/2}(\vz_t^{(0)})\|_1/\eps_t$, note that for any $u_i \in \gS_t^{(k)}$, we have $\nabla h_t(\vz_t^{(k)}){u_i} > \eps_t \sqrt{d_{u_i}}$. This further leads to 
\[
\begin{aligned}
\eps_t \vol(\gS_t^{(k)}) = \eps_t \sum_{i=1}^{|\gS_t^{(k)}|}d_{u_i} <  \sum_{i=1}^{|\gS_t^{(k)}|} \big|\sqrt{d_{u_i}}\nabla_{u_i}h_t(\vz_t^{(k)})\big| = \gamma_k \|\mD^{1/2}\nabla h_t(\vz_t^{(k)})\|_1.
\end{aligned}
\]
Since $\|\mD^{1/2}\nabla h_t(\vz_t^{(0)})\|_1\geq \|\mD^{1/2}\nabla h_t(\vz_t^{(1)})\|_1\geq \cdots \geq \|\mD^{1/2}\nabla h_t(\vz_t^{(K_t)})\|_1$, this leads to 
\[
\begin{aligned}
\eps_t \frac{1}{K_t}\sum_{k=0}^{K_t-1}\vol(\gS_t^{(k)}) &\leq \frac{1}{K_t}\sum_{k=0}^{K_t-1} \gamma_t^{(k)} \|\mD^{1/2}\nabla h_t(\vz_t^{(k)}) \|_1 \leq \frac{1}{K_t}\sum_{k=0}^{K_t-1} \gamma_t^{(k)} \|\mD^{1/2}\nabla h_t(\vz_t^{(0)})\|_1 \\  
\Rightarrow \quad \frac{\mean{\vol}(\gS_t)}{\mean{\gamma}_t} &< \frac{\|\mD^{1/2}\nabla h_t(\vz_t^{(0)})\|_1}{\eps_t}.
\end{aligned}
\]

On the other hand, $\frac{\mean{\vol}(\gS_t)}{\mean{\gamma}_t} \leq 2 m$. To see this, by Eq.~\eqref{equ:average-vol-st-and-gamma-t}, note for each iteration $k$, 
\begin{align*}
\gamma_t^{(k)} \triangleq \frac{ \| \nabla h_t^{1/2}(\vz^{(k)}_t)\circ \bm 1_{\gS_t^{(k)}}\|_1}{ \| \nabla h_t^{1/2}(\vz^{(k)}_t)\|_1 }
\end{align*}
For $u \in \gS_t^{(k)}$, we have $|\nabla_u h_t^{1/2}(\vz^{(k)}_t)| \geq \eps_t d_u$ and for $v \in \gV \backslash \gS_t^{(k)}$, we have $|\nabla_v h_t^{1/2}(\vz^{(k)}_t)| \geq \eps_t d_v$, which means
\begin{align*}
\frac{|\nabla_u h_t^{1/2}(\vz^{(k)}_t)| }{d_u} &\geq \eps_t \geq \frac{|\nabla_v h_t^{1/2}(\vz^{(k)}_t)| }{d_v} \\
\quad &\Rightarrow\quad \frac{\|\nabla h_t^{1/2}(\vz^{(k)}_t)\circ \bm 1_{\gS_t^{(k)}}\|_1 }{ \vol(\gS_t^{(k)})} \geq \eps_t \geq \frac{\|\nabla h_t^{1/2}(\vz^{(k)}_t) \circ \vone_{\gV\backslash\gS_t^{(k)}}\|_1 }{ \vol(\gV\backslash \vol(\gS_t^{(k)})) }.
\end{align*}
Given $a,b,c,d > 0$ and $\frac{a}{b} > \frac{c}{d}$, we have $\frac{a}{b} > \frac{a+c}{b+d}$. Then, 
\begin{align*}
\frac{\|\nabla h_t^{1/2}(\vz^{(k)}_t)\circ \bm 1_{\gS_t^{(k)}}\|_1 }{ \vol(\gS_t^{(k)})} &\geq \frac{\|\nabla h_t^{1/2}(\vz^{(k)}_t) \circ \vone_{\gV\backslash\gS_t^{(k)}}\|_1 + \|\nabla h_t^{1/2}(\vz^{(k)}_t)\circ \bm 1_{\gS_t^{(k)}}\|_1}{ \vol(\gV\backslash \vol(\gS_t^{(k)})) + \vol(\gS_t^{(k)})} \\
&= \frac{\|\nabla h_t^{1/2}(\vz^{(k)}_t)\|_1}{\vol(\gV)} = \frac{\|\nabla h_t^{1/2}(\vz^{(k)}_t)\|_1}{2 m} \\
\quad &\Rightarrow \quad \frac{\mean{\vol}(\gS_t)}{\mean{\gamma}_t} \leq 2 m. 
\end{align*}
Hence, we prove two upper bounds of $\frac{\mean{\vol}(\gS_t)}{\mean{\gamma}_t}$.
\end{proof}

The following theorem establishes the convergence and time complexity of \textsc{LocAPPR}. We first define a similar active node ratio for \textsc{LocAPPR} as the following
\begin{equation}
\hspace{-1mm} \mean{\gamma}_t \triangleq \frac{1}{K_t} \sum_{k=0}^{K_t-1} \sum_{i=1}^{|\gS_t^{(k)}|} \left\{ \gamma_t^{(k_i)} \triangleq \frac{ \| \nabla h_t^{1/2}(\vz^{(k_i)}_t)\circ \bm 1_{\{u_i\}}\|_1}{ \| \nabla h_t^{1/2}(\vz^{(k_i)}_t)\|_1 } \right\},\label{equ:average-gamma-t-appr}
\end{equation}
where $k_i = k + (i-1)/|\gS_t^{(k)}|$ for $i=1,2,\ldots,|\gS_t^{(k)}|$.

\begin{theorem}[The convergence and time complexity of \textsc{LocAPPR}]
Let $h_t$ be defined in Eq.~\eqref{equ:h_t(vz)}. The \textsc{LocAPPR} algorithm, implemented as in Algorithm \ref{alg:local-appr-opt}, is used to minimize $h_t(\vz)$. Recall the $\mD^{1/2}$-scaled gradient $\nabla h_t^{1/2}(\vz_t^{(k)}) := \mD^{1/2}\nabla h_t(\vz_t^{(k)})$. For $k\geq 0$, the scaled gradient satisfies the following reduction property:
\[
\big\|\nabla h_t^{1/2}(\vz_t^{(k+1)})\big\|_1 \leq \left(1-\tau \sum_{i=1}^{|\gS_t^{(k)}|} \gamma_t^{(k_i)}\right)\big\|\nabla h_t^{1/2}(\vz_t^{(k)})\big\|_1,
\]
where the constant $\tau:=\frac{2(\alpha+\eta)}{1+\alpha+2\eta}$ and $\gamma_t^{(k_i)}$ is the active node ratio defined in Eq.~\eqref{equ:average-gamma-t-appr}. Assume the precision $\eps_t$ and stop condition is defined in \eqref{equ:eps-t} of Lemma \ref{lemma:inner-loop-eps-t} , then the returned satisfies
\[
\vz_t^{(K_t)} = \textsc{LocAPPR}(\varphi_t,\eta, \vy^{(t-1)}, \alpha,\vb, \gG) \in \gH_t(\varphi_t).
\]
The time complexity $\gT_t^{\textsc{LocAPPR}}$, as defined in Eq.~\eqref{equ:time-complexity}, is bounded by
\begin{equation}
\gT_t^{\textsc{LocAPPR}} \leq  \min\left\{ \frac{\mean{\vol}(\gS_t)}{\tau\mean{\gamma}_t} \log\frac{C_{h_t}^0}{C_{h_t}^{K_t}}, \frac{ C_{h_t}^0 - C_{h_t}^{K_t} }{\tau \eps_t} \right\}, \nonumber
\end{equation}
where $C_{h_t}^i = \|\nabla h_t^{1/2}(\vz_t^{(i)})\|_1$ denote constants at iteration $t$. Furthermore, $\mean{\vol}(\gS_t)/\mean{\gamma}_t$ has the following upper bound
\begin{equation}
\frac{\mean{\vol}(\gS_t)}{\mean{\gamma}_t} \leq \min \left\{ \frac{C_{h_t}^0}{\eps_t}, 2 m  \right\}. \nonumber
\end{equation}
\label{thm:convergence-local-appr}
\end{theorem}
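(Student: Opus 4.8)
The plan is to reuse the proof of Theorem~\ref{thm:convergence-loc-gd} almost verbatim, the only genuinely new point being that one outer inner-iteration $k$ of \textsc{LocAPPR} is a sequence of $|\gS_t^{(k)}|$ single-coordinate gradient steps instead of one masked batch step. First I would recall that minimizing $h_t$ is the same as solving $\tilde{\mQ}\vz = \vb^{(t-1)}$ with $\tilde{\mQ},\vb^{(t-1)}$ as in Eq.~\eqref{equ:tilde-Q-bt}, that $h_t$ is $(\mu+\eta)$-strongly convex and $(L+\eta)$-smooth, and that the step size $2/(1+\alpha+2\eta)$ in Eq.~\eqref{equ:loc-appr} equals the optimal $2/(\mu+L+2\eta)$ from Lemma~\ref{lemma:gd-strong-convex}.

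The core estimate is the single-coordinate reduction. From $\vz_t^{(k_{i+1})} = \vz_t^{(k_i)} - \tfrac{2}{1+\alpha+2\eta}\nabla h_t(\vz_t^{(k_i)})\circ\vone_{\{u_i\}}$ and $\nabla h_t(\vz) = \tilde{\mQ}\vz - \vb^{(t-1)}$ one gets $\nabla h_t(\vz_t^{(k_{i+1})}) = \nabla h_t(\vz_t^{(k_i)}) - \tfrac{2}{1+\alpha+2\eta}\tilde{\mQ}(\nabla h_t(\vz_t^{(k_i)})\circ\vone_{\{u_i\}})$; multiplying by $\mD^{1/2}$ and using $\mD^{1/2}\tilde{\mQ}\mD^{-1/2} = \tfrac{1+\alpha+2\eta}{2}\mI - \tfrac{1-\alpha}{2}\mA\mD^{-1}$ puts it in exactly the shape handled in the derivation of~\eqref{equ:local-gd-iteration-reduction}, but with the singleton mask $\vone_{\{u_i\}}$. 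The triangle inequality plus column-stochasticity of $\mA\mD^{-1}$ (so $\|\mA\mD^{-1}\vw\|_1 \le \|\vw\|_1$) then gives $\|\nabla h_t^{1/2}(\vz_t^{(k_{i+1})})\|_1 \le \|\nabla h_t^{1/2}(\vz_t^{(k_i)})\|_1 - \tau\,|\nabla_{u_i} h_t^{1/2}(\vz_t^{(k_i)})| = (1-\tau\gamma_t^{(k_i)})\|\nabla h_t^{1/2}(\vz_t^{(k_i)})\|_1$ with $\tau = \tfrac{2(\alpha+\eta)}{1+\alpha+2\eta}$, which is the \textsc{LocGD} step specialized to one active node. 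Chaining these over the $|\gS_t^{(k)}|$ coordinate steps of inner-iteration $k$ and linearizing the product of factors (equivalently, taking logarithms and using $\log(1-x)\le -x$) yields the stated reduction $\|\nabla h_t^{1/2}(\vz_t^{(k+1)})\|_1 \le (1-\tau\sum_{i=1}^{|\gS_t^{(k)}|}\gamma_t^{(k_i)})\|\nabla h_t^{1/2}(\vz_t^{(k)})\|_1$; in particular $C_{h_t}^{K_t} < C_{h_t}^{0}$, so Lemma~\ref{lemma:inner-loop-eps-t} applies once the stopping test of Eq.~\eqref{equ:eps-t} fires and guarantees $\vz_t^{(K_t)} \in \gH_t(\varphi_t)$.

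The two time bounds then follow as in Theorem~\ref{thm:convergence-loc-gd}. Summing $\log C_{h_t}^{k+1} - \log C_{h_t}^{k} \le -\tau\sum_i \gamma_t^{(k_i)}$ over $k = 0,\dots,K_t-1$ and using the \textsc{LocAPPR} average $\mean{\gamma}_t$ from Eq.~\eqref{equ:average-gamma-t-appr} gives $K_t \le \tfrac{1}{\tau\mean{\gamma}_t}\log(C_{h_t}^{0}/C_{h_t}^{K_t})$; since a push of $u_i$ only rewrites the gradient at $u_i$ and its $d_{u_i}$ neighbors, inner-iteration $k$ costs $\vol(\gS_t^{(k)})$, so $\gT_t^{\textsc{LocAPPR}} = \sum_k \vol(\gS_t^{(k)}) = K_t\mean{\vol}(\gS_t) \le \tfrac{\mean{\vol}(\gS_t)}{\tau\mean{\gamma}_t}\log(C_{h_t}^{0}/C_{h_t}^{K_t})$. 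For the second bound, a coordinate $u_i$ is pushed only while it is above threshold, i.e.\ $|\nabla_{u_i} h_t^{1/2}(\vz_t^{(k_i)})| \ge \eps_t d_{u_i}$, so the single-coordinate reduction gives $C_{h_t}^{k} - C_{h_t}^{k+1} \ge \tau\eps_t\sum_i d_{u_i} = \tau\eps_t\vol(\gS_t^{(k)})$, and telescoping over $k$ gives $\gT_t^{\textsc{LocAPPR}} \le (C_{h_t}^{0} - C_{h_t}^{K_t})/(\tau\eps_t)$. Finally, $\gamma_t^{(k_i)} = |\nabla_{u_i} h_t^{1/2}(\vz_t^{(k_i)})|/C_{h_t}^{k_i} \ge \eps_t d_{u_i}/C_{h_t}^{0}$, summed over all pushes, gives $\mean{\vol}(\gS_t)/\mean{\gamma}_t \le C_{h_t}^{0}/\eps_t$, while $\mean{\vol}(\gS_t)/\mean{\gamma}_t \le 2m$ follows from the mediant inequality $a/b \ge c/d \Rightarrow a/b \ge (a+c)/(b+d)$ applied to $\gS_t^{(k)}$ and its complement at the iterate $\vz_t^{(k)}$, exactly as in Theorem~\ref{thm:convergence-loc-gd}.

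I expect the only real difficulty to be the bookkeeping forced by the sequential updates: within inner-iteration $k$ the gradient entries of coordinates not yet processed have already been perturbed by earlier pushes in that same iteration, so the ratios $\gamma_t^{(k_i)}$ must be evaluated at the correct fractional indices $k_i$, the ``pushed only while above $\eps_t$'' invariant underlying $|\nabla_{u_i} h_t^{1/2}(\vz_t^{(k_i)})| \ge \eps_t d_{u_i}$ must be matched to exactly how the active queue in Algorithm~\ref{alg:local-appr-opt} is maintained (including re-insertions), and the reference iterate for the $2m$ bound must be the one at which the active set is (re)read; none of these steps is hard individually, but getting them mutually consistent with the definition of $\mean{\gamma}_t$ in Eq.~\eqref{equ:average-gamma-t-appr} is where the care lies.
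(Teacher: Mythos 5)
Your proposal follows essentially the same route as the paper's proof: the single-coordinate gradient recursion $\nabla h_t(\vz_t^{(k_{i+1})}) = \nabla h_t(\vz_t^{(k_i)}) - \tfrac{2}{1+\alpha+2\eta}\tilde{\mQ}(\nabla h_t(\vz_t^{(k_i)})\circ\vone_{\{u_i\}})$, the triangle-inequality reduction yielding the factor $(1-\tau\gamma_t^{(k_i)})$ per push, the logarithmic bound on $K_t$, the telescoped bound $(C_{h_t}^0-C_{h_t}^{K_t})/(\tau\eps_t)$ via the threshold invariant $|\nabla_{u_i}h_t^{-1/2}(\vz_t^{(k_i)})|\ge\eps_t$, and the mediant-inequality argument for the $2m$ bound, all inherited from the \textsc{LocGD} analysis.

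One step deserves a correction. You claim the chained product $\prod_{i}(1-\tau\gamma_t^{(k_i)})$ can be ``linearized'' to $1-\tau\sum_i\gamma_t^{(k_i)}$ via $\log(1-x)\le -x$; this goes in the wrong direction, since $\prod_i(1-a_i)\ge 1-\sum_i a_i$ and $\log(1-x)\le -x$ only gives $\prod_i(1-a_i)\le e^{-\sum_i a_i}$, which again exceeds $1-\sum_i a_i$. The correct way to reach an additive form is to telescope the absolute reductions, $\|\nabla h_t^{1/2}(\vz_t^{(k+1)})\|_1 \le \|\nabla h_t^{1/2}(\vz_t^{(k)})\|_1 - \tau\sum_i\|\nabla h_t^{1/2}(\vz_t^{(k_i)})\circ\vone_{\{u_i\}}\|_1$, and note that each $\gamma_t^{(k_i)}$ is normalized by $\|\nabla h_t^{1/2}(\vz_t^{(k_i)})\|_1\le\|\nabla h_t^{1/2}(\vz_t^{(k)})\|_1$ --- which still does not yield the displayed sum form with all ratios measured against the iterate $\vz_t^{(k)}$. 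The paper itself only establishes the product form in its proof (and the discrepancy with the stated sum form is an imprecision of the paper, not of your argument), and since every downstream bound --- the $K_t$ estimate, both time-complexity bounds, and the bounds on $\mean{\vol}(\gS_t)/\mean{\gamma}_t$ --- is derived from the product form or the telescoped absolute reduction, nothing else in your proposal is affected.
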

\begin{proof}
Recall that $u_i \in \gS_t^{(k)} =\{u_1,\ldots,u_{|\gS_t|}\}$ and $k_i = k + (i-1)/|\gS_t^{(k)}|$ for $i=1, 2, \ldots,|\gS_t^{(k)}|$. The \textsc{LocSOR} algorithm in Algorithm~\ref{alg:local-appr-opt}  updates as follows: $\vz_t^{(k_{i+1})} = \vz_t^{(k_{i})} - 2 \nabla h_t(\vz_t^{(k_i)}) \circ \bm 1_{\{u_i\}} /(1+\alpha + 2 \eta)$. Then, the gradient is updated as
\begin{align*}
\nabla h_t(\vz_t^{(k_{i+1})}) &= \Tilde{\mQ}\vz_t^{(k_{i+1})} - \vb^{(t-1)} = \Tilde{\mQ} \left(\vz_t^{(k_i)}-\frac{2\nabla h_t(\vz_t^{(k_i)}) \circ \vone_{\{u_i\}} }{1+\alpha+2\eta}\right) - \vb^{(t-1)}  \\
&= \nabla h_t(\vz_t^{(k_i)}) - \frac{2\Tilde{\mQ}\nabla h_t(\vz_t^{(k_i)})\circ \vone_{\{u_i\}}}{1+\alpha+2\eta}.
\end{align*}
Then, for all $i=1,2,\ldots,|\gS_t^{(k)}|$, following  similar steps as in the proof of Theorem \ref{thm:convergence-loc-gd}, we have
\begin{align}
&\left\|\nabla h_t^{1/2}(\vz_t^{(k_{i+1})})\right\|_1 = \left\|\nabla h_t^{1/2}(\vz_t^{(k_i)}) - \left(\boldsymbol{I}-\frac{1-\alpha}{1+\alpha+2\eta}\mA\mD^{-1}\right)\nabla h_t^{1/2}(\vz_t^{(k_i)}) \circ \vone_{\{u_i\}}\right\|_1  \nonumber\\
&\leq \|\nabla h_t^{1/2}(\vz_t^{(k_i)}) - \nabla h_t^{1/2}(\vz_t^{(k_i)}) \circ \vone_{\{u_i\}}\|_1 + \left\|\frac{1-\alpha}{1+\alpha+2\eta}\mA\mD^{-1}\nabla h_t^{1/2}(\vz_t^{(k_i)}) \circ \vone_{\{u_i\}}\right\|_1 \nonumber\\
&\leq \|\nabla h_t^{1/2}(\vz_t^{(k_i)})\|_1 - \|\nabla h_t^{1/2}(\vz_t^{(k_i)}) \circ \vone_{\{u_i\}}\|_1 + \frac{1-\alpha}{1+\alpha+2\eta}\|\nabla h_t^{1/2}(\vz_t^{(k_i)}) \circ \vone_{\{u_i\}}\|_1 \nonumber\\
& = \|\nabla h_t^{1/2}(\vz_t^{(k_i)})\|_1 - \frac{2\alpha+2\eta}{1+\alpha+2\eta}\|\nabla h_t^{1/2}(\vz_t^{(k_i)}) \circ \vone_{\{u_i\}}\|_1. \label{equ:local-appr-iteration-reduction}
\end{align}
Recall the definition of gradient reduction ratio for active nodes is in \ref{equ:average-gamma-t-appr}. Summing over the above equations over $u_i$, we have
\begin{align*}
\left\|\nabla h_t^{1/2}(\vz_t^{(k_{i+1})})\right\|_1 &\leq \|\nabla h_t^{1/2}(\vz_t^{(k_i)})\|_1 - \frac{2\alpha+2\eta}{1+\alpha+2\eta}\|\nabla h_t^{1/2}(\vz_t^{(k_i)}) \circ \vone_{\{u_i\}}\|_1 \\
&= \left(1 - \frac{2(\alpha+\eta) \gamma_t^{(k_i)} }{1+\alpha+2\eta} \right) \|\nabla h_t^{1/2}(\vz_t^{(k_i)})\|_1. \label{equ:appr-sor-rt}    
\end{align*}

For any $k\geq 1$, the above per-iteration reduction gives us
\[
\|\nabla h_t^{1/2}(\vz_t^{(K_t)})\|_1 \leq \prod_{k=0}^{K_t} \prod_{i=1}^{|\gS_t^{(k)}|} \left(1-\frac{2(\alpha+\eta)}{1+\alpha+2\eta}\gamma_t^{(k_i)}\right)\|\nabla h_t^{1/2}(\vz_t^{(0)})\|_1.
\]
Denote that $\Bar{\gamma}_t = \frac{1}{K_t}\sum_{k=0}^{K_t-1}\sum_{i=1}^{|\gS_t^{(k)}|}\gamma_t^{(k_i)}$, we can obtain an upper bound of $K_t$ as the following
\begin{align*}
\log\frac{\|\nabla h_t^{1/2}(\vz_t^{(K_t)})\|_1}{\|\nabla h_t^{1/2}(\vz_t^{(0)})\|_1} &\leq \sum_{k=0}^{K_t-1}\sum_{i=1}^{|\gS_t^{(k)}|}\log\left(1-\frac{2(\alpha+\eta)}{1+\alpha+2\eta}\gamma_t^{(k_i)}\right) \leq - \sum_{k=0}^{K_t-1}\sum_{i=1}^{|\gS_t^{(k)}|}\frac{2(\alpha+\eta)\gamma_t^{(k_i)}}{1+\alpha+2\eta}.    
\end{align*}
The above inequality implies $K_t \leq \frac{1+\alpha + 2\eta}{2(\alpha+\eta)\Bar{\gamma}_t}\log\frac{\|\nabla h_t^{1/2}(\vz_t^{(0)})\|_1}{\|\nabla h_t^{1/2}(\vz_t^{(K_t)})\|_1}$. Therefore, we have the time complexity as 
\[
\sum_{k=0}^{K_t-1} \vol(\gS_t^{(k)}) = K_t \cdot \mean{\vol}(\gS_t) \leq \frac{(1+\alpha+2\eta)\mean{\vol}(\gS_t)}{2(\alpha+\eta)\mean{\gamma}_t} \log\frac{\|\nabla h_t^{1/2}(\vz_t^{(0)})\|_1}{\|\nabla h_t^{1/2}(\vz_t^{(K_t)})\|_1}.
\]
To check the ratio is a upper bound of $\|\nabla h_t(\vz_t^{(0)})\|_1/\eps_t$, note that $\nabla h_t(\vz_t^{(k)})_{u_i} > \eps_t \sqrt{d_{u_i}}$ for $u_i \in \gS_t^{(k)}$,
\[
\begin{aligned}
\eps_t \vol(\gS_t^{(k)}) = \eps_t \sum_{i=1}^{|\gS_t^{(k)}|}d_{u_i} <  \sum_{i=1}^{|\gS_t^{(k)}|} \big|\sqrt{d_{u_i}}\nabla_{u_i}h_t(\vz_t^{(k)})\big| = \|\nabla h_t^{1/2}(\vz_t^{(k)})\|_1.
\end{aligned}
\]
Since $\|\nabla h_t^{1/2}(\vz_t^{(0)})\|_1\geq \|\nabla h_t^{1/2}(\vz_t^{(1)})\|_1\geq \cdots \geq \|\nabla h_t^{1/2}(\vz_t^{(K_t)})\|_1$, this leads to 
\[
\begin{aligned}
\eps_t \frac{1}{K_t}\sum_{k=0}^{K_t-1}\vol(\gS_t^{(k)}) &\leq \frac{1}{K_t}\sum_{k=0}^{K_t-1} \gamma_t^{(k_i)} \|\nabla h_t^{1/2}(\vz_t^{(k)}) \|_1 \leq \frac{1}{K_t}\sum_{k=0}^{K_t-1} \gamma_t^{(k_i)} \|\nabla h_t^{1/2}(\vz_t^{(0)})\|_1,
\end{aligned}
\]
where the above implies that $\frac{\mean{\vol}(\gS_t)}{\mean{\gamma}_t} < \frac{|\nabla h_t^{1/2}(\vz_t^{(0)})|_1}{\eps_t}$. The other upper bound follows a similar argument as in Theorem \ref{thm:convergence-loc-gd}. Moreover, from the inequality in \eqref{equ:local-gd-iteration-reduction}, we obtain that
\begin{align*}
\frac{2\alpha+2\eta}{1+\alpha+2\eta} \cdot \eps_t \cdot \vol(\gS_t^{(k)}) &\leq \frac{2\alpha+2\eta}{1+\alpha+2\eta} \sum_{i=1}^{|\gS_t^{(k)}|} \|\nabla h_t^{1/2}(\vz_t^{(k_i)}) \circ \vone_{\{u_i\}}\|_1 \\
&\leq \|\nabla h_t^{1/2}(\vz_t^{(k)})\|_1 - \|\nabla h_t^{1/2}(\vz_t^{(k+1)})\|_1 . 
\end{align*}
The total runtime can also be bounded as
\begin{align*}
\gT_{t}^{\textsc{LocAPPR}} &= \sum_{k=0}^{K_t - 1} \vol(\gS_t^{(k)}) \leq \frac{1+\alpha+2\eta}{2(\alpha+\eta)\eps_t} \sum_{k=0}^{K_t -1} \left( \|\nabla h_t^{1/2}(\vz_t^{(k)})\|_1 - \|\nabla h_t^{1/2}(\vz_t^{(k+1)})\|_1 \right) \\
&= \frac{1+\alpha+2\eta}{2(\alpha+\eta)\eps_t} \left(\|\nabla h_t^{1/2}(\vz_t^{(0)})\|_1 - \|\nabla h_t^{1/2}(\vz_t^{(K_t)})\|_1\right).
\end{align*}
Combining the two bounds above, we prove the theorem. 
\end{proof}

\subsection{Proof of Lemma \ref{lemma:outer-loop-complexity}}

Before we prove Lemma \ref{lemma:outer-loop-complexity}, we introduce the following two lemmas from \citet{lin2018catalyst} are presented for completeness. Lemma \ref{thm:catalyst-c1} characterizes the error accumulation of $\{\varphi_t\}_{t \geq 0}$ in Catalyst. For completeness, we include the full proof following the argument of \citet{lin2018catalyst}.

\begin{lemma}[Inequality of non-negative sequences]
\label{lemma:non-negative-seq}
Consider a increasing sequence $\{S_t\}_{t \geq 0}$ and two non-negative sequences $\{ a_t\}_{t \geq 0}$ and $\{ u_t \}_{t \geq 0}$ such that for all $t$, $u_t^2 \leq S_t + \sum_{i=1}^t a_i u_i$. Then,
$$
S_t + \sum_{i=1}^t a_i u_i \leq\left(\sqrt{S_t} + \sum_{i=1}^t a_i\right)^2.
$$
\end{lemma}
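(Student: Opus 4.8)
The plan is to write $w_t := S_t + \sum_{i=1}^t a_i u_i$ for the quantity on the left-hand side, and to show as a first step that $\{w_t\}_{t\ge 0}$ is non-decreasing. This is immediate: for $t\ge 1$ one has $w_t - w_{t-1} = (S_t - S_{t-1}) + a_t u_t \ge 0$, because $\{S_t\}$ is increasing by assumption and $a_t,u_t\ge 0$. Combined with the hypothesis $u_k^2 \le w_k$, monotonicity then gives $u_k^2 \le w_k \le w_t$, hence $u_k \le \sqrt{w_t}$, for every $k\le t$ (here $u_k\ge 0$, and $w_t\ge u_t^2\ge 0$ so the square root makes sense; $S_t\ge 0$ is implicit since $\sqrt{S_t}$ appears in the claim).

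Next I would substitute this bound into the definition of $w_t$. Writing $\Lambda_t := \sum_{i=1}^t a_i$, this yields
\[
w_t \;=\; S_t + \sum_{i=1}^t a_i u_i \;\le\; S_t + \Lambda_t \sqrt{w_t},
\]
i.e. $(\sqrt{w_t})^2 - \Lambda_t\sqrt{w_t} - S_t \le 0$. Viewing this as a quadratic inequality in the nonnegative unknown $\sqrt{w_t}$ and bounding it by the larger root, then using the subadditivity $\sqrt{a+b}\le\sqrt a+\sqrt b$ for $a,b\ge 0$, gives
\[
\sqrt{w_t} \;\le\; \frac{\Lambda_t + \sqrt{\Lambda_t^2 + 4S_t}}{2} \;\le\; \frac{\Lambda_t}{2} + \sqrt{\frac{\Lambda_t^2}{4} + S_t} \;\le\; \frac{\Lambda_t}{2} + \frac{\Lambda_t}{2} + \sqrt{S_t} \;=\; \Lambda_t + \sqrt{S_t}.
\]
Squaring both sides produces exactly $w_t \le \big(\sqrt{S_t} + \sum_{i=1}^t a_i\big)^2$, which is the claim.

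I do not expect a genuine obstacle here: the argument is a few lines of elementary algebra. The only point deserving care is the monotonicity step — it is precisely where the assumption that $\{S_t\}$ is increasing enters, and without it the estimate $u_k \le \sqrt{w_t}$ for $k<t$ would fail; the remainder is just the quadratic formula together with concavity/subadditivity of the square root. I would also note that this is the standard accumulated-error bound underlying the analysis of inexact proximal methods (and of Catalyst), so no idea beyond bookkeeping is needed; it will be applied in the proof of Lemma~\ref{lemma:outer-loop-complexity} with $u_t$, $a_t$, $S_t$ instantiated in terms of the inner-loop accuracies $\varphi_t$.
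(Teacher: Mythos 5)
Your proof is correct. Note that the paper itself does not prove this lemma --- it is imported verbatim from \citet{lin2018catalyst} (where it is established by induction on $t$, following Schmidt--Le Roux--Bach), so there is no in-paper argument to match. Your route is a genuinely different and arguably cleaner one: rather than inducting, you observe that $w_t := S_t + \sum_{i=1}^t a_i u_i$ is non-decreasing (this is exactly where the monotonicity of $\{S_t\}$ enters, as you correctly flag), deduce $u_i \le \sqrt{w_t}$ for all $i \le t$, and close the loop with a single quadratic inequality in $\sqrt{w_t}$ plus subadditivity of the square root. The inductive proof in the cited source instead solves a quadratic in $u_t$ at each step and uses $S_{t-1} \le S_t$ to upgrade the induction hypothesis; both arguments use the same two ingredients (monotonicity of $S_t$ and the quadratic formula), but yours avoids the bookkeeping of the induction entirely. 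The only minor points worth making explicit --- and you already address both --- are that $S_t \ge 0$ follows from the $t=0$ instance of the hypothesis together with monotonicity, and that $\sqrt{w_t}$ is well defined since $w_t \ge u_t^2 \ge 0$.
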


\begin{lemma}[Convergence of Catalyst, Theorem 3 in \citet{lin2018catalyst}]
Consider the sequences $\{\vx^{(t)}\}_{t \geq 0}$ and $\{ \vy^{(t)}\}_{t \geq 0}$ produced by Algorithm \ref{alg:local-catalyst-p1} for solving \eqref{equ:f(x)}, assuming that $\vx^{(t)} \in \gH\left(\varphi_t\right)$ defined in \eqref{equ:criterion-01} for all $t \geq 1$, Then,
\begin{equation}
f(\vx^{(t)}) - f^* \leq A_{t-1}\left(\sqrt{\left(1-\alpha_0\right)\left(f\left(\vx^{(0)}\right)-f^*\right)+\frac{\gamma_0}{2}\left\|\vx^*-\vx^{(0)}\right\|^2} + 3 \sum_{j=1}^t \sqrt{\frac{\varphi_j}{A_{j-1}}} \right)^2, \label{equ:catalyst-main-inequality}
\end{equation}
where $ \alpha_0 = \sqrt{q}, \gamma_0=(\eta+\mu) \alpha_0\left(\alpha_0-q\right) $ and $ A_t=\prod_{j=1}^t\left(1-\alpha_j\right)$ with $A_0=1$ and $q= \mu/(\mu+\eta)$.
\label{thm:catalyst-c1}
\end{lemma}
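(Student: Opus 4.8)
The plan is to follow the estimate‑sequence analysis of \citet{lin2018catalyst}, which adapts Nesterov's construction (Scheme 2.2.9 of \citet{nesterov2003introductory}) to \emph{inexact} evaluations of the proximal operator, and to close the resulting recursion with Lemma~\ref{lemma:non-negative-seq}. I would carry this out in three steps.

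First I would analyze a single inexact proximal step. Fix $t\ge 1$ and let $\vx_t^*=\prox_{f/\eta}(\vy^{(t-1)})=\argmin_{\vz}h_t(\vz)$ be the exact proximal point, with $h_t$ as in \eqref{equ:h_t(vz)}. Since $f$ is $\mu$-strongly convex, $h_t$ is $(\mu+\eta)$-strongly convex, so the hypothesis $\vx^{(t)}\in\gH_t(\varphi_t)$, i.e. $h_t(\vx^{(t)})-h_t^*\le\varphi_t$, forces $\tfrac{\mu+\eta}{2}\|\vx^{(t)}-\vx_t^*\|_2^2\le\varphi_t$. Starting from the exact optimality inequality $h_t(\vx)\ge h_t(\vx_t^*)+\tfrac{\mu+\eta}{2}\|\vx-\vx_t^*\|_2^2$ (valid for all $\vx$), applying Cauchy--Schwarz, and using the bound just obtained on $\|\vx^{(t)}-\vx_t^*\|_2$ to replace $\vx_t^*$ by $\vx^{(t)}$ throughout, I would obtain a perturbed prox inequality valid for all $\vx\in\R^n$:
\[
f(\vx^{(t)})+\tfrac{\eta}{2}\|\vx^{(t)}-\vy^{(t-1)}\|_2^2\le f(\vx)+\tfrac{\eta}{2}\|\vx-\vy^{(t-1)}\|_2^2-\tfrac{\mu+\eta}{2}\|\vx-\vx^{(t)}\|_2^2+\varphi_t+\sqrt{2(\mu+\eta)\varphi_t}\,\|\vx-\vx^{(t)}\|_2 .
\]

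Second I would build the estimate sequence. Introduce quadratics $\phi_t(\vx)=\phi_t^*+\tfrac{\gamma_t}{2}\|\vx-\vv_t\|_2^2$ with $\phi_0$ the canonical initialization, $\phi_t=(1-\alpha_t)\phi_{t-1}+\alpha_t\ell_t$ where $\ell_t$ is a strongly convex lower model of $f$ at $\vx^{(t)}$ built from the \emph{inexact} gradient surrogate $\eta(\vy^{(t-1)}-\vx^{(t)})$, and $\gamma_t=(1-\alpha_t)\gamma_{t-1}+\alpha_t\mu$; the prescribed $\gamma_0=(\eta+\mu)\alpha_0(\alpha_0-q)$ and the $\alpha_t$-recursion are exactly the choices that make the estimate-sequence identities go through and reproduce the momentum weight $\beta_t$ of Algorithm~\ref{alg:local-catalyst-p1}. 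Using the Step‑1 inequality, I would prove by induction the two relations $\phi_t(\vx^*)\le f^*+A_t(\phi_0(\vx^*)-f^*)$ and $f(\vx^{(t)})\le\phi_t^*+\xi_t$, where $\xi_0=0$ and $\xi_t\le(1-\alpha_t)\xi_{t-1}+\varphi_t+c\sqrt{\varphi_t}\sqrt{\Delta_t}$ for an absolute constant $c$, with Lyapunov quantity $\Delta_t:=f(\vx^{(t)})-f^*+\tfrac{\gamma_t}{2}\|\vx^*-\vv_t\|_2^2$; here strong convexity of $f$ is used to discard a nonpositive linearization term in $\phi_0(\vx^*)$, which is exactly what turns $\phi_0(\vx^*)-f^*$ into $(1-\alpha_0)(f(\vx^{(0)})-f^*)+\tfrac{\gamma_0}{2}\|\vx^*-\vx^{(0)}\|_2^2$. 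Combining the two relations (with $f(\vx^{(t)})-f^*\le\Delta_t$ and $\phi_t^*\le\phi_t(\vx^*)$) yields a self‑referential recursion $\Delta_t/A_{t-1}\le\Delta_{t-1}/A_{t-2}+\varphi_t/A_{t-1}+c\sqrt{\varphi_t/A_{t-1}}\,\sqrt{\Delta_t/A_{t-1}}$ with $\Delta_0\le S:=(1-\alpha_0)(f(\vx^{(0)})-f^*)+\tfrac{\gamma_0}{2}\|\vx^*-\vx^{(0)}\|_2^2$.

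Third I would close the recursion. Summing over $j=1,\dots,t$ and setting $u_j:=\sqrt{\Delta_j/A_{j-1}}$ turns the recursion into $u_t^2\le S_t+\sum_{j=1}^t a_j u_j$ with $a_j=c\sqrt{\varphi_j/A_{j-1}}$ and $S_t=S+\sum_{j=1}^t\varphi_j/A_{j-1}$ increasing in $t$ — precisely the hypothesis of Lemma~\ref{lemma:non-negative-seq}, which then gives $u_t\le\sqrt{S_t}+\sum_{j=1}^t a_j\le\sqrt S+3\sum_{j=1}^t\sqrt{\varphi_j/A_{j-1}}$ after bounding $\sqrt{\sum_j x_j}\le\sum_j\sqrt{x_j}$ and absorbing $1+c\le 3$. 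Since $f(\vx^{(t)})-f^*\le\Delta_t=A_{t-1}u_t^2$, squaring gives the stated inequality \eqref{equ:catalyst-main-inequality}. I expect the main obstacle to be the bookkeeping of the $\sqrt{\varphi_t}$ perturbation terms from Steps~1--2: they must be organized so that after dividing by $A_{t-1}$ they enter \emph{linearly} in $u_j$ rather than quadratically (otherwise Lemma~\ref{lemma:non-negative-seq} does not apply), and one must simultaneously verify the estimate-sequence identities under the prescribed $\gamma_0$ and $\alpha_t$ so that the distance term in $\Delta_t$ has the right sign and normalization and the multiplicative constant ends up being exactly $3$; the detailed computations are as in \citet{lin2018catalyst}.
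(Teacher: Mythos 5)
Your proposal is sound and reaches the stated bound, but the middle of your argument takes a different route from the paper's. Both proofs share the same two endpoints: (i) a perturbed proximal-point inequality extracted from the $(\mu+\eta)$-strong convexity of $h_t$ together with $h_t(\vx^{(t)})-h_t^*\leq\varphi_t$, and (ii) the non-negative-sequence Lemma~\ref{lemma:non-negative-seq} to absorb the cross terms $\sqrt{\varphi_j}\cdot\sqrt{\Delta_j}$ and produce the constant $3$. Where you diverge is in between: you build a Nesterov-style estimate sequence $\phi_t=(1-\alpha_t)\phi_{t-1}+\alpha_t\ell_t$ and prove two invariants by induction (this is the analysis of the original NIPS~2015 Catalyst paper), whereas the paper works directly with the potential $S_t=(1-\alpha_t)(f(\vx^{(t)})-f^*)+\alpha_t\tfrac{\eta\tau_t}{2}\|\vx^*-\vv^{(t)}\|_2^2$, evaluates the approximate descent condition at the single test point $\vz^{(t)}=\alpha_{t-1}\vx^*+(1-\alpha_{t-1})\vx^{(t-1)}$, and keeps the free parameter $\theta_t$ alive until the end before minimizing over it (you minimize over $\theta_t$ up front in Step~1; the two are equivalent because $\|\vz^{(t)}-\vx^{(t)}\|=\alpha_{t-1}\|\vx^*-\vv^{(t)}\|$). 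The Lyapunov route buys a shorter derivation with no induction over the $\phi_t$ invariants and makes the identification $S_0=(1-\alpha_0)(f(\vx^{(0)})-f^*)+\tfrac{\gamma_0}{2}\|\vx^*-\vx^{(0)}\|^2$ immediate from $\eta\alpha_0\tau_0=\gamma_0$; your estimate-sequence route is more modular (the lower models $\ell_t$ can be swapped out) but requires the extra bookkeeping you flag yourself, in particular justifying where the factor $(1-\alpha_0)$ in the initial constant comes from, which you only gesture at. Neither difference affects correctness, and both recursions collapse to the identical inequality $u_t^2\leq S_t+\sum_{j\leq t}a_ju_j$ with $a_j=2\sqrt{\varphi_j/A_{j-1}}$.
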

\begin{proof}
Let us define the function $
h_t(\vx)=f(\vx)+\frac{\eta}{2}\|\vx-\vy^{(t-1)}\|_2^2$. We show
there exists an approximate sufficient descent condition for $h_t$. Since the solution of the proximal operator defined in \eqref{equ:prox-operator-g/eta} is $p(\vy^{(t-1)})$, the unique minimizer of $h_t$, i.e., $p (\vy^{(t-1)}) = \prox_{h_t/\eta}(\vy^{(t-1)})$. The strong convexity of $h_t$ yields: for any $t \geq 1$, for all $\vx \in \R^n$ and any $\theta_t>0$,
\begin{align*}
h_t(\vx) &\;\stackrel{\mbox{\ding{172}}}{\geq}\;   h_t^* + \frac{\eta+\mu}{2} \|\vx- p(\vy^{(t-1)})\|_2^2 \\
& \;\stackrel{\mbox{\ding{173}}}{\geq}\; h_t^* + \frac{\eta + \mu}{2}\left(1-\theta_t\right)\|\vx-\vx^{(t)}\|_2^2+\frac{\eta+\mu}{2}\left(1-1/\theta_t\right)\|\vx^{(t)} -p(\vy^{(t-1)})\|_2^2 \\
& \;\stackrel{\mbox{\ding{174}}}{\geq}\; h_t(\vx^{(t)})-\varphi_t + \frac{\eta+\mu}{2}\left(1-\theta_t\right)\|\vx-\vx^{(t)}\|_2^2 + \frac{\eta+\mu}{2}\left(1-1/\theta_t\right)\|\vx^{(t)} - p(\vy^{(t-1)})\|_2^2,
\end{align*}
where $\mbox{\ding{172}}$ uses the $(\mu+\eta)$-strong convexity of $h_t$. For $\mbox{\ding{173}}$, note for all $\vx, \vy, \vz \in \R^n$ and $\theta>0$,
\[
\|\vx-\vy\|_2^2 \geq(1-\theta)\|\vx-\vz\|_2^2+\left(1-1/\theta\right)\|\vz-\vy\|_2^2
\]
To see this, $\|\vx-\vy\|_2^2 =\|\vx-\vz+\vz-\vy\|_2^2  =\|\vx-\vz\|_2^2+\|\vz-\vy\|_2^2+2\langle \vx-\vz, \vz-\vy\rangle$
\begin{align*}
2\langle \vx-\vz, \vz-\vy\rangle & = \|\sqrt{\theta}(\vx-\vz)+ (\vz-\vy)/\sqrt{\theta}\|_2^2-\theta\|\vx-\vz\|_2^2- \|\vz-\vy\|_2^2 / \theta \\
& \geq -\theta\|\vx-\vz\|_2^2 - \|\vz-\vy\|_2^2 / \theta.
\end{align*}
The $\mbox{\ding{174}}$ uses $h_t(\vx^{(t)}) - h_t^* \leq \varphi_t$. Moreover, when $\theta_t \geq 1$, the last term is positive and we have
$$
h_t(\vx) \geq h_t (\vx^{(t)} )-\varphi_t+\frac{\eta+\mu}{2}\left(1-\theta_t\right) \|\vx-\vx^{(t)} \|_2^2 .
$$
If instead $\theta_t \leq 1$, the coefficient $\frac{1}{\theta_t}-1 \geq 0$ and we have
$$
-\frac{\eta+\mu}{2}\left(1/\theta_t-1\right)\|\vx^{(t)} - p(\vy^{(t-1)})\|_2^2 \geq-\left(1/\theta_t-1\right)\left(h_t(\vx^{(t)}) - h_t^*\right) \geq-\left(1/\theta_t-1\right) \varphi_t
$$
In this case, we have
$$
h_t(\vx) \geq h_t(\vx^{(t)})-\frac{\varphi_t}{\theta_t}+\frac{\eta+\mu}{2}\left(1-\theta_t\right)\|\vx-\vx^{(t)}\|_2^2.
$$
As a result, we have for all values of $\theta_t>0$,
$$
h_t(\vx) \geq h_t(\vx^{(t)})+\frac{\eta+\mu}{2}\left(1-\theta_t\right)\|\vx-\vx^{(t)}\|_2^2-\frac{\varphi_t}{\min \left\{1, \theta_t\right\}}.
$$
After expanding the expression of $h_t$, we then obtain the approximate descent condition.
\begin{small}
\begin{equation}
f(\vx^{(t)})+\frac{\eta}{2}\|\vx^{(t)}-\vy^{(t-1)}\|_2^2+\frac{\eta+\mu}{2}(1-\theta_t)\|\vx-\vx^{(t)}\|_2^2 \leq f(\vx)+\frac{\eta}{2}\|\vx-\vy^{(t-1)}\|_2^2+\frac{\varphi_t}{\min \left\{1, \theta_t\right\}}. \label{equ:main-reduction}
\end{equation}
\end{small}
Let us introduce a sequence $\left(S_t\right)_{t \geq 0}$ that will act as a Lyapunov function, with
$$
S_t=\left(1-\alpha_t\right)(f(\vx^{(t)})-f^*) + \alpha_t \frac{\eta \tau_t}{2}\|\vx^*- \vv^{(t)}\|_2^2,
$$
where $\vx^*$ is a minimizer of $f,\{\vv^{(t)}\}_{t \geq 0}$ is a sequence defined by $\vv^{(0)}=\vx^{(0)}$ and
$$
\vv^{(t)} = \vx^{(t)}+\frac{1-\alpha_{t-1}}{\alpha_{t-1}}(\vx^{(t)}-\vx^{(t-1)}) \quad \text { for } t \geq 1
$$
and $\{\tau_t\}_{t \geq 0}$ is an auxiliary quantity defined by $\tau_t=\frac{\alpha_t-q}{1-q}$.
The way we introduce these variables allows us to write the following relationship,
$$
\vy^{(t)} = \tau_t \vv^{(t)} + \left(1-\tau_t\right) \vx^{(t)}, \quad \text { for all } t \geq 0 \text {, }
$$
which follows from a simple calculation. Then by setting $\vz^{(t)} =\alpha_{t-1} \vx^* + \left(1-\alpha_{t-1}\right) \vx^{(t-1)}$, the following relations hold for all $t \geq 1$ by $\mu$-strongly convexity of $f$.
$$ 
\begin{aligned}
f(\vz^{(t)}) & \leq \alpha_{t-1} f^*+\left(1-\alpha_{t-1}\right) f (\vx^{(t-1)} ) - \frac{\mu \alpha_{t-1}\left(1-\alpha_{t-1}\right)}{2} \|\vx^*-\vx^{(t-1)} \|_2^2 \\
\vz^{(t)}-\vx^{(t)} & =\alpha_{t-1} (\vx^*-\vv^{(t)} )
\end{aligned}
$$
and also the following one (by expanding out $\vz^{(t)} = \alpha_{t-1} \vx^* + (1-\alpha_{t-1}) \vx^{(t-1)}$).
\begin{align*}
\|\vz^{(t)}-\vy^{(t-1)} \|_2^2 & =\| (\alpha_{t-1}-\tau_{t-1} ) (\vx^*-\vx^{(t-1)} )+\tau_{t-1} (\vx^*-\vv^{(t-1)} )\|_2^2 \\
& =\alpha_{t-1}^2 \|(1-\tau_{t-1}/\alpha_{t-1}) (\vx^*-\vx^{(t-1)})+\frac{\tau_{t-1}}{\alpha_{t-1}} (\vx^*-\vv^{(t-1)} ) \|_2^2 \\
& \leq \alpha_{t-1}^2(1-\tau_{t-1}/\alpha_{t-1}) \|\vx^*-\vx^{(t-1)} \|_2^2+\alpha_{t-1}^2 \frac{\tau_{t-1}}{\alpha_{t-1}} \|\vx^*-\vv^{(t-1)} \|_2^2 \\
& =\alpha_{t-1}\left(\alpha_{t-1}-\tau_{t-1}\right) \|\vx^*-\vx^{(t-1)} \|_2^2+\alpha_{t-1} \tau_{t-1} \|\vx^*-\vv^{(t-1)} \|_2^2,   
\end{align*}
where we used the convexity of the norm and the fact that $\tau_t \leq \alpha_t$. Using the previous relations in Eq.~\eqref{equ:main-reduction} with $\vx=\vz^{(t)}=\alpha_{t-1} \vx^*+\left(1-\alpha_{t-1}\right) \vx^{(t-1)}$, gives for all $t \geq 1$,

$$
\begin{aligned}
& f (\vx^{(t)} )+\frac{\eta}{2} \|\vx^{(t)}-\vy^{(t-1)} \|_2^2+\frac{\eta+\mu}{2}\left(1-\theta_t\right) \alpha_{t-1}^2 \|\vx^*-\vv^{(t)} \|_2^2 \\
& \leq \alpha_{t-1} f^*+\left(1-\alpha_{t-1}\right) f (\vx^{(t-1)} )-\frac{\mu}{2} \alpha_{t-1}\left(1-\alpha_{t-1}\right) \|\vx^*-\vx^{(t-1)} \|_2^2 \\
&+ \frac{\eta \alpha_{t-1}\left(\alpha_{t-1}-\tau_{t-1}\right)}{2}\|\vx^*-\vx^{(t-1)} \|_2^2+\frac{\eta \alpha_{t-1} \tau_{t-1}}{2} \|\vx^*-\vv^{(t-1)} \|_2^2+\frac{\varphi_t}{\min \left\{1, \theta_t\right\}}
\end{aligned}
$$
Remark that for all $t \geq 1$, $\alpha_{t-1}-\tau_{t-1}=\alpha_{t-1}-\frac{\alpha_{t-1}-q}{1-q}=\frac{q\left(1-\alpha_{t-1}\right)}{1-q}=\frac{\mu}{\eta}\left(1-\alpha_{t-1}\right)$, and the quadratic terms $\vx^*-\vx^{(t-1)}$ cancel each other. Then, after noticing that for all $t \geq 1$,
\begin{equation}
\tau_t \alpha_t=\frac{\alpha_t^2-q \alpha_t}{1-q}=\frac{(\eta+\mu)\left(1-\alpha_t\right) \alpha_{t-1}^2}{\eta} \quad (\text{ by the updates of $\alpha_t$}), \nonumber
\end{equation}
which gives $f(\vx^{(t)})-f^*+\frac{\eta+\mu}{2} \alpha_{t-1}^2\|\vx^*-\vv^{(t)}\|_2^2=\frac{S_t}{1-\alpha_t}$. We are left, for all $t \geq 1$, with
\begin{equation}
\frac{1}{1-\alpha_t} S_t \leq S_{t-1}+\frac{\varphi_t}{\min \left\{1, \theta_t\right\}}-\frac{\eta}{2} \|\vx^{(t)}-\vy^{(t-1)} \|_2^2+\frac{(\eta+\mu) \alpha_{t-1}^2 \theta_t}{2} \|\vx^*-\vv^{(t)} \|_2^2 \label{equ:theorem-1-equ-18}
\end{equation}
Using the fact that $\frac{1}{\min \left\{1, \theta_t\right\}} \leq 1+\frac{1}{\theta_t}$, we immediately derive from equation \eqref{equ:theorem-1-equ-18} that
$$
\frac{S_t}{1-\alpha_t}  \leq S_{t-1}+\varphi_t+\frac{\varphi_t}{\theta_t}-\frac{\eta}{2} \|\vx^{(t)}-\vy^{(t-1)} \|_2^2+\frac{(\eta+\mu) \alpha_{t-1}^2 \theta_t}{2} \|\vx^*-\vv^{(t)} \|_2^2 .
$$
We obtain the following by minimizing the right-hand side of the above w.r.t $\theta_t$.
$$
\frac{S_t}{1-\alpha_t}  \leq S_{t-1}+\varphi_t+\sqrt{2 \varphi_t(\mu+\eta)} \alpha_{t-1} \|\vx^*-\vv^{(t)} \|,
$$
and after unrolling the recursion,
$$
\frac{S_t}{A_t} \leq S_0+\sum_{j=1}^t \frac{\varphi_j}{A_{j-1}} + \sum_{j=1}^t \frac{\sqrt{2 \varphi_j(\mu+\eta)} \alpha_{j-1}\left\|\vx^*-\vv^{(j)}\right\|}{A_{j-1}}
$$
We may define $u_j=\sqrt{(\mu+\eta)} \alpha_{j-1}\left\|\vx^*-\vv^{(j)}\right\| / \sqrt{2 A_{j-1}}$ and $a_j = 2 \sqrt{\varphi_j} / \sqrt{A_{j-1}}$. Note $S_t / A_t = S_t / (A_{t-1} (1-\alpha_t))$ where $S_t / (1-\alpha_t) = f(\vx^{(t)}) - f^* + \frac{\eta + \mu}{2} \alpha_{t-1}^2 \| \vx^* - \vv^{(t)}\|_2^2 $. Therefore, we have $u_t^2 \leq \frac{S_t}{A_t}$.
$$
u_t^2 \leq S_0+\sum_{j=1}^t \frac{\varphi_j}{A_{j-1}}+\sum_{j=1}^t a_j u_j \quad \text { for all } \quad t \geq 1.
$$
This allows us to apply Lemma \ref{lemma:non-negative-seq} (Let $S_t' = S_0 + \sum_{j=1}^t \frac{\varphi_j}{A_{j-1}} $, then we have $u_t^2 \leq S_t' + \sum_{j=1}^t a_j u_j \leq (\sqrt{S_t'} + \sum_{j=1}^t a_j )^2 $ meanwhile $S_t' + \sum_{j=1}^t a_j u_j \geq S_t / A_t$ ), which yields
$$
\begin{aligned}
\frac{S_t}{A_t} & \leq\left(\sqrt{S_0+\sum_{j=1}^t \frac{\varphi_j}{A_{j-1}}}+2 \sum_{j=1}^t \sqrt{\frac{\varphi_j}{A_{j-1}}}\right)^2 \leq \left(\sqrt{S_0}+3 \sum_{j=1}^t \sqrt{\frac{\varphi_j}{A_{j-1}}}\right)^2,
\end{aligned}
$$
which provides us the desired result given that $f(\vx^{(t)})-f^* \leq \frac{S_t}{1-\alpha_t}$ and that $\vv^{(0)} = \vx^{(0)}$.
\end{proof}

We now simplify the above theorem for our quadratic problem \eqref{equ:f(x)}. In particular, we prove a variant of Proposition 5 from \citet{lin2018catalyst}, replacing $f(\vx^{(0)}) - f(\vx^{*})$ with its upper bound $L\|\vb\|_1^2/2$.
 
\begin{corollary}
Let $\{\vx^{(t)}\}_{t \geq 0}$ and $\{ \vy^{(t)}\}_{t \geq 0}$ be generated by Algorithm \ref{alg:local-catalyst-p1}, assuming that $\vx^{(t)} \in \gH\left(\varphi_t\right)$ for all $t \geq 1$ where local methods find $\vz_t^{(K_t)}$ such that 
\[
h_t(\vz_t^{(K_t)}) - h_t^* \leq \varphi_t := \frac{(L+\mu)\|\vb\|_1^2(1-\rho)^t}{18}.
\]
Let the objective $f$ be defined in \eqref{equ:f(x)} and assume $\vx^{(0)}=\zero$. 
where $\gamma_0=\mu(1-\sqrt{q})$ and $ A_t=(1-\sqrt{q})^t$ with $A_0=1$ and $q= \mu/(\mu+\eta)$ and $\alpha_0 = \sqrt{q}$. , then the final solution $\vx^{(t)}$ is guaranteed
\begin{equation}
f(\vx^{(t)}) - f^* \leq \frac{2(L+\mu) \|\vb\|_1^2 }{(\sqrt{q} - \rho)^2}  (1-\rho)^{t+1}. \label{equ:f(xt)-error-bound}
\end{equation}
\label{cor:a6}
\end{corollary}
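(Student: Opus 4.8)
The plan is to specialize the general Catalyst estimate of Lemma~\ref{thm:catalyst-c1} to the quadratic~\eqref{equ:f(x)} with $\vx^{(0)}=\zero$ and the schedule $\varphi_t=(L+\mu)\|\vb\|_1^2(1-\rho)^t/18$, and then bound the two ingredients it contains. First I would note that $\alpha_0=\sqrt q$ is a fixed point of $\alpha_t^2=(1-\alpha_t)\alpha_{t-1}^2+\alpha_0^2\alpha_t$, so $\alpha_t\equiv\sqrt q$, $A_t=(1-\sqrt q)^t$, and $\gamma_0=(\eta+\mu)\alpha_0(\alpha_0-q)=\mu(1-\sqrt q)$ since $(\eta+\mu)q=\mu$. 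Thus Lemma~\ref{thm:catalyst-c1} reads $f(\vx^{(t)})-f^*\le A_{t-1}\bigl(\sqrt{S_0}+3\sum_{j=1}^t\sqrt{\varphi_j/A_{j-1}}\bigr)^2$ with $S_0:=(1-\sqrt q)(f(\vx^{(0)})-f^*)+\tfrac{\mu(1-\sqrt q)}{2}\|\vx^*\|_2^2$.

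\textbf{Bounding $S_0$.} Since $\vx^{(0)}=\zero$ we have $f(\vx^{(0)})=0$, so $L$-smoothness of $f$ at $\vx^*$ gives $f(\vx^{(0)})-f^*\le\tfrac{L}{2}\|\vx^*\|_2^2$, whence $S_0\le\tfrac{(L+\mu)(1-\sqrt q)}{2}\|\vx^*\|_2^2$. Using $\vx^*=\vx_f^*=\alpha\mQ^{-1}\mD^{-1/2}\vb$ together with $\lambda_{\min}(\mQ)\ge\alpha$ (Lemma~\ref{lemma:properties-f}) and $d_i\ge1$, one obtains $\|\vx^*\|_2\le\|\mD^{-1/2}\vb\|_2\le\|\vb\|_2\le\|\vb\|_1$, which proves the claimed bound $f(\vx^{(0)})-f^*\le L\|\vb\|_1^2/2$ and also gives $S_0\le\tfrac{(L+\mu)(1-\sqrt q)}{2}\|\vb\|_1^2$. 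Writing $B:=\sqrt{(L+\mu)\|\vb\|_1^2/18}$, this is precisely $\sqrt{S_0}\le 3B\sqrt{1-\sqrt q}$, i.e.\ $\sqrt{S_0}$ behaves as a ``$j=0$'' copy of the generic summand.

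\textbf{Summing and cancelling.} Each summand is $\sqrt{\varphi_j/A_{j-1}}=B\sqrt{1-\sqrt q}\,(\sqrt r)^{\,j}$ with $r:=(1-\rho)/(1-\sqrt q)>1$, because $\rho=0.9\sqrt q<\sqrt q$. Hence
\[
\sqrt{S_0}+3\sum_{j=1}^t\sqrt{\varphi_j/A_{j-1}}\ \le\ 3B\sqrt{1-\sqrt q}\sum_{j=0}^t(\sqrt r)^{\,j}\ \le\ \frac{3B\sqrt{1-\sqrt q}\;r^{(t+1)/2}}{\sqrt r-1}.
\]
Squaring, multiplying by $A_{t-1}=(1-\sqrt q)^{t-1}$, and collecting powers of $(1-\sqrt q)$, the residual factor is $(1-\rho)^{t+1}/[(1-\sqrt q)(\sqrt r-1)^2]$; keeping $(\sqrt r-1)^2=(\sqrt{1-\rho}-\sqrt{1-\sqrt q})^2/(1-\sqrt q)$ as a single object cancels the $1/(1-\sqrt q)$, and the rationalization $\sqrt{1-\rho}-\sqrt{1-\sqrt q}=(\sqrt q-\rho)/(\sqrt{1-\rho}+\sqrt{1-\sqrt q})\ge(\sqrt q-\rho)/2$ (both roots are $\le1$) turns the bound into $36B^2(1-\rho)^{t+1}/(\sqrt q-\rho)^2=2(L+\mu)\|\vb\|_1^2(1-\rho)^{t+1}/(\sqrt q-\rho)^2$, which is~\eqref{equ:f(xt)-error-bound}; the case $t=0$ follows directly from $f(\zero)-f^*\le L\|\vb\|_1^2/2$.

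\textbf{Main obstacle.} The algebra is short, but the geometric-series bookkeeping is the delicate point: because $\rho<\sqrt q$ the quantities $\varphi_j/A_{j-1}$ \emph{increase} in $j$, so the sum is driven by its last term, and one must resist bounding $\sqrt r-1$ by estimating its numerator and denominator separately, since that leaves a spurious $1/(1-\sqrt q)$. Carrying $(\sqrt r-1)^2$ intact lets the two $(1-\sqrt q)$ factors — one from $\sqrt{S_0}$, one from the mismatch between $A_{t-1}$ and $r^{t+1}$ — cancel exactly, and it is the pairing of the constant $18$ in $\varphi_t$ with the coefficient $3$ of Lemma~\ref{thm:catalyst-c1} (so $\sqrt{S_0}$ slots in as the zeroth summand) that produces the clean constant $2$ in~\eqref{equ:f(xt)-error-bound}.
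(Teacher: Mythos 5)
Your proposal is correct and follows essentially the same route as the paper's proof: specialize Lemma~\ref{thm:catalyst-c1} with $\alpha_t\equiv\sqrt q$, bound the initial term by $(1-\sqrt q)\tfrac{L+\mu}{2}\|\vb\|_1^2$ via $\|\vx^{(0)}-\vx_f^*\|_2\le\|\vb\|_1$, sum the increasing geometric series $\zeta^j$ with $\zeta=\sqrt{(1-\rho)/(1-\sqrt q)}$, and finish with $\sqrt{1-\rho}-\sqrt{1-\sqrt q}\ge(\sqrt q-\rho)/2$. The only cosmetic difference is that you bound $\|\vx_f^*\|_2$ through $\lambda_{\min}(\mQ)\ge\alpha$ and $d_i\ge1$ rather than through $\|\mPi_\alpha\|_1=1$ as the paper does; both yield the same estimate.
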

\begin{proof}
We first show the following inequality
\begin{equation}
\| \mD^{-1/2}(\vx^{(t)} - \vx_f^*)\|_\infty \leq \sqrt{\frac{2 (1-\sqrt{q})^{t-1}}{\mu}} \left(\sqrt{(1-\sqrt{q})\left(\frac{L + \mu}{2}\right)\|\vb\|_1^2} + 3 \sum_{j=1}^t \sqrt{\frac{\varphi_j}{A_{j-1}}} \right), \label{equ:upper-bound-obj-error}
\end{equation}
By the definition of $f$ in~\eqref{equ:f(x)}, we know $\vx_f^* = \alpha \mQ^{-1}\mD^{-1/2}\vb$. Since $\vx^{(0)}=\zero$ and $f$ is $L$-smooth, it implies
\begin{align*}
f(\vx^{(0)}) - f(\vx_f^*) \leq \frac{L}{2} \| \vx^{(0)} - \vx_f^* \|_2^2 = \frac{L}{2} \| \alpha\mQ^{-1} \mD^{-1/2}\vb\|_2^2 = \frac{L}{2} \| \mD^{-1/2} \mPi_\alpha \vb \|_2^2 \leq \frac{L}{2} \| \vb\|_1^2,
\end{align*}
where the second equality due to the identity $\mPi_{\alpha} = \alpha\mD^{1/2}\mQ^{-1}\mD^{-1/2}$ and the last inequality is from the fact that $\|\mPi_\alpha \|_1 = 1$ and $\| \vx \|_2 \leq \| \vx \|_1$. When $\alpha_0 = \sqrt{q}, q = \frac{\mu}{\mu+\eta} $, then $\gamma_0 = (\eta + \mu)\alpha_0 (\alpha_0 - q) =  \mu(1-\sqrt{q})$, then it indicates 
\begin{align*}
\sqrt{\left(1-\alpha_0\right)\left(f\left(\vx^{(0)}\right)-f^*\right)+\frac{\gamma_0}{2}\|\vx_f^*-\vx^{(0)}\|_2^2} &\leq \sqrt{(1-\sqrt{q})\left(\frac{L + \mu}{2}\right) \| \vx^{(0)} - \vx_f^*\|_2^2} \\
&\leq  \sqrt{(1-\sqrt{q})\left(\frac{L + \mu}{2}\right)\|\vb\|_1^2}.
\end{align*}
Since $A_{t-1} = (1-\sqrt{q})^{t-1}$, one can simplify Eq.~\eqref{equ:catalyst-main-inequality} of Lemma \ref{thm:catalyst-c1} as the following
\begin{align*}
f(\vx^{(t)}) - f(\vx_f^*) \leq (1-\sqrt{q})^{t-1}\left(\sqrt{(1-\sqrt{q})\left(\frac{L + \mu}{2}\right)\|\vb\|_1^2} + 3 \sum_{j=1}^t \sqrt{\frac{\varphi_j}{A_{j-1}}} \right)^2.
\end{align*}
Let $\varphi_t = (L+\mu) \|\vb\|_1^2(1-\rho)^t/18$, then
\[
3 \sqrt{ \frac{ \varphi_j }{ A_{j-1} } } = \sqrt{ \frac{ 9 \varphi_j }{ (1-\sqrt{q})^{j-1} } } = \sqrt{ \frac{ \frac{L+\mu}{2} \|\vb\|_1^2(1-\rho)^j }{ (1-\sqrt{q})^{j-1} } } = \sqrt{ \frac{ (1-\sqrt{q}) \frac{L+\mu}{2} \|\vb\|_1^2(1-\rho)^j }{ (1-\sqrt{q})^{j} } }.
\]
Follow the same steps as shown in Proposition 5 of \citet{lin2018catalyst}, the right-hand side of Eq.~\eqref{equ:upper-bound-obj-error} is
\begin{align*}
&\sqrt{(1-\sqrt{q})\left(\frac{L + \mu}{2}\right)\|\vb\|_1^2} + 3 \sum_{j=1}^t \sqrt{\frac{\varphi_j}{A_{j-1}}} = \sqrt{(1-\sqrt{q})\left(\frac{L + \mu}{2}\right)\|\vb\|_1^2} \left(1 + \sum_{j=1}^t\left(\sqrt{ \frac{1-\rho}{1-\sqrt{q}} }\right)^j \right) \\
&\leq \sqrt{(1-\sqrt{q})\left(\frac{L + \mu}{2}\right)\|\vb\|_1^2} \frac{\zeta^{t+1}}{\zeta-1}, \text{ where } \zeta = \sqrt{ \frac{1-\rho}{1-\sqrt{q}} }.
\end{align*}
This leads to 
\begin{align*}
f(\vx^{(t)}) - f^* &\leq (1-\sqrt{q})^{t-1} \left( \sqrt{(1-\sqrt{q})\left(\frac{L + \mu}{2}\right)\|\vb\|_1^2} \frac{\zeta^{t+1}}{\zeta-1} \right)^2 \\
&\leq (1-\sqrt{q})^{t} \left(\frac{L + \mu}{2}\right)\|\vb\|_1^2 \left(\frac{\zeta^{t+1}}{\zeta-1} \right)^2 \\
&= \frac{L+\mu}{2} \|\vb\|_1^2 \left( \frac{\zeta}{\zeta-1} \right)^2 ((1-\sqrt{q})\zeta^2)^{t} \\
&= \frac{L+\mu}{2} \|\vb\|_1^2 \left( \frac{ \sqrt{1-\rho} }{ \sqrt{1-\rho} -  \sqrt{1-\sqrt{q}} } \right)^2 (1-\rho)^{t} \\
&= \frac{L+\mu}{2} \|\vb\|_1^2 \left( \frac{ 1 }{ \sqrt{1-\rho} -  \sqrt{1-\sqrt{q}} } \right)^2 (1-\rho)^{t+1} \\
&\leq \frac{L+\mu}{2} \|\vb\|_1^2 \frac{4}{ (\sqrt{q} - \rho)^2 } (1-\rho)^{t+1} = \frac{2(L+\mu)\|\vb\|_1^2}{ (\sqrt{q} - \rho)^2 } (1-\rho)^{t+1},
\end{align*}
where the last inequality uses the fact that $\sqrt{1-\rho} -  \sqrt{1-\sqrt{q}} \geq \frac{ \sqrt{q} - \rho}{2}$. Since $f$ is $\mu$-strongly convex, then
\[
\|\mD^{-1/2}(\vx^{(t)} - \vx_f^*)\|_\infty \leq \|\vx^{(t)} - \vx_f^*\|_2 \leq \sqrt{ \frac{2}{\mu} (f(\vx^{(t)}) - f(\vx_f^*))},
\]
which leads us to have the upper bound in Eq.~\eqref{equ:upper-bound-obj-error}. Then, we have
the following inequality
\[
\|\mD^{-1/2}(\vx^{(t)} - \vx_f^*)\|_\infty \leq \frac{ 2\sqrt{(L+\mu)}\|\vb\|_1}{ \sqrt{\mu} (\sqrt{q} - \rho) } (1-\rho)^{ \frac{t+1}{2}}
\]
\end{proof}

The above theorem implies that if $h_t(\vx^{(t)}) - h_t^* \leq \varphi_t := \frac{(L+\mu)\|\vb\|_1^2(1-\rho)^t}{18}$, then the function value error satisfies $f(\vx^{(t)}) - f^* := \frac{2(L+\mu) \|\vb\|_1^2 }{(\sqrt{q} - \rho)^2}  (1-\rho)^{t+1} = \frac{36\varphi_t (1-\rho)}{(\sqrt{q} - \rho)^2}$. Based on Corollary \ref{cor:a6}, we establish the total iteration complexity for AESP as presented in the following lemma.

\begin{replemma}{lemma:outer-loop-complexity}[Outer-loop iteration complexity of AESP]
If each iteration of AESP, presented in Algorithm \ref{alg:local-catalyst-p1}, finds $\vx^{(t)} := \vz_t^{(K_t)}$ using $\gM$, satisfying $h_t(\vz_t^{(K_t)}) - h_t^* \leq \varphi_t := (L+\mu) \|\vb\|_1^2(1-\rho)^t/18$, then the total number of iterations $T$ required to ensure $\hat{\vx} = \textsc{AESP}(\eps,\alpha,\vb,\eta,\gG,\gM) \in \gP(\eps, \alpha,\vb,\gG)$ as defined in Eq.~\eqref{equ:solution-set}, for solving \eqref{equ:f(x)}, satisfies the bound
\begin{equation}
T \leq \frac{1}{\rho} \log\left(\frac{4(L+\mu) \|\vb\|_1^2}{\mu\eps^2(\sqrt{q}-\rho)^2}\right), \text{ where } \rho = 0.9 \sqrt{q} \text{ and } q = \frac{\mu}{\mu + \eta}. \nonumber
\end{equation}
Furthermore, $\varphi_t$ has a lower bound $\varphi_t \geq \mu\eps^2(\sqrt{q}-\rho)^2/72$ for all $t \in [T]$.
\end{replemma}
\begin{proof}
As $f$ is $\mu$-strongly convex,  then $\frac{\mu}{2} \|\mD^{-1/2} (\vx^{(t)} - \vx^*)\|_\infty^2 \leq \frac{\mu}{2} \| \vx^{(t)} - \vx^*\|_2^2 \leq f(\vx^{(t)}) - f^*$. It is enough to find a minimal integer $T$ such that $f(\vx^{(T)}) - f^* \leq \frac{\mu \eps^2}{2}$. That is, $f(\vx^{(T)}) - f^* \leq \frac{2(L+\mu) \|\vb\|_1^2 }{(\sqrt{q} - \rho)^2}  (1-\rho)^{T+1} \leq \mu \eps^2/2$. We have 
\begin{align*}
&\frac{2(L+\mu)(1-\rho)^{T+1} \|\vb\|_1^2}{(\sqrt{q}-\rho)^2} \leq \frac{\mu\eps^2}{2} \\
\quad &\Rightarrow\quad (1-\rho)^{T+1} \leq \frac{\mu\eps^2(\sqrt{q}-\rho)^2}{4(L+\mu) \|\vb\|_1^2} \quad \Rightarrow \quad T \leq \frac{1}{\rho} \log\left(\frac{4(L+\mu) \|\vb\|_1^2}{\mu\eps^2(\sqrt{q}-\rho)^2}\right).    
\end{align*}
The minimal $T$ satisfies the above inequality means $(1-\rho)^T$ has the following lower bound
\[
(1-\rho)^{T} \geq \frac{\mu\eps^2(\sqrt{q}-\rho)^2}{4(L+\mu) \|\vb\|_1^2}.
\]
Applying Corollary \ref{thm:catalyst-c1}, we know that 
$h_t(\vz_t^{(K_t)}) - h_t^* \leq \varphi_t := \frac{1}{18}(L+\mu)\|\vb\|_1^2(1-\rho)^t$, then $\varphi_T$ is guaranteed lower bounded as
\begin{equation}
72 \varphi_T :=  4(L+\mu) \|\vb\|_1^2 (1-\rho)^{T} \geq \mu\eps^2(\sqrt{q}-\rho)^2 \quad \Rightarrow \quad \varphi_T \geq \frac{\mu\eps^2(\sqrt{q}-\rho)^2}{72}. \nonumber
\end{equation}
\end{proof}

\subsection{Proof of Theorem \ref{thm:main-theorem-AESP} }

\begin{reptheorem}{thm:main-theorem-AESP}[Time complexity of \textsc{AESP}]
Let the simple graph $\gG(\gV,\gE)$ be connected and undirected, and let $f(\vx)$ be defined in~\eqref{equ:f(x)}. Assume the precision $\eps >0$ satisfies $\{i : |b_i| \geq \eps d_i\} \ne \emptyset$ and damping factor $\alpha < 1/2$. Applying $\hat{\vx}=\textsc{AESP}(\eps,\alpha,\vb,\eta,\gG,\gM)$ with $\eta = L - 2\mu$ and $\gM$ be either \textsc{LocGD} or \textsc{LocAPPR}, then AESP presented in Algorithm \ref{alg:local-catalyst-p1}, finds a solution $\hat{\vx}$ such that $\|\mD^{-1/2}(\hat{\vx} - \vx_f^*)\|_\infty \leq \eps$ with the dominated time complexity $\gT$ bounded by
\[
\gT \leq \sum_{t=1}^T  \min\left\{ \frac{\mean{\vol}(\gS_t)}{\tau\mean{\gamma}_t} \log\frac{C_{h_t}^0}{C_{h_t}^{K_t}}, \frac{ C_{h_t}^0 - C_{h_t}^{K_t} }{\tau\eps_t} \right\}, \text{ with } \frac{\mean{\vol}(\gS_t)}{\mean{\gamma}_t} \leq \min\left\{ \frac{C_{h_t}^0}{\eps_t}, 2m\right\},
\]
where $\tau$, $\eps_t$, $C_{h_t}^0$ and $C_{h_t}^{K_t}$ are defined in Theorem \ref{thm:convergence-loc-gd}. Furthermore, $q=\mu/(L - \mu)$ and the number of outer iterations satisfies
\[
T \leq \frac{10}{9 \sqrt{q} } \log\left(\frac{400(L+\mu) \|\vb\|_1^2}{\mu\eps^2q}\right) = \tilde{\gO}\left(\frac{1}{\sqrt{\alpha}}\right).
\]
\end{reptheorem}

\begin{proof}
By the definition of total time complexity $\gT$ in Eq.~\eqref{equ:time-complexity}, we have $\gT = \sum_{t=1}^T \gT_t^{\textsc{LocGD}}$ or $\gT = \sum_{t=1}^T \gT_t^{\textsc{LocAPPR}}$. Hence, the overall time complexity directly follows from Theorem \ref{thm:convergence-loc-gd} and Theorem \ref{thm:convergence-local-appr}. The upper bound on the total iteration complexity $T$ is from Lemma \ref{lemma:outer-loop-complexity}. When $\alpha = \mu < 0.5$, we have $q = \sqrt{\alpha/(1-\alpha)}$, leading to an iteration complexity of $T = \tilde{\gO}(1/\sqrt{\alpha})$.
\end{proof}

\subsection{Proof of Theorem \ref{thm:main-theorem-AESP-PPR}}

The following theorem establishes the time complexity of AESP-PPR (Algorithm \ref{alg:aesp-ppr}).

\begin{reptheorem}{thm:main-theorem-AESP-PPR}[Time complexity of \textsc{AESP-PPR}]
Let the simple graph $\gG(\gV, \gE)$ be connected and undirected, assuming $\alpha < 1/2$. The PPR vector of $s \in \gV$ is defined in Eq.~\eqref{equ:PPR}, and the precision $\eps \in (0, 1 / d_s)$. Suppose $\hat{\vpi}=\textsc{AESP-PPR}(\eps,\alpha,s,\gG, \gM)$ be returned by Algorithm \ref{alg:aesp-ppr}. When $\gM$ is either \textsc{LocGD} (Algorithm \ref{alg:local-gd}) or \textsc{LocAPPR} (Algorithm~\ref{alg:local-appr-opt}), then $\hat{\vpi}$ satisfies $\|\mD^{-1} (\hat{\vpi} - \vpi) \|_\infty \leq \eps$ and AESP-PPR has a dominated time complexity bounded by
\begin{equation}
\gT \leq \min\left\{\tilde{\gO}\left( \frac{\mean{\vol}(\gS_{T_{\max}}) }{\sqrt{\alpha} \mean{\gamma}_{T_{\max}} } \right), \tilde{\gO}\left( \frac{ \max_t C_{h_t}^0}{\sqrt{\alpha} \eps_{T}} \right) \right\} = \min\left\{\tilde{\gO} \left( \frac{ m}{\sqrt{\alpha}} \right), \tilde{\gO} \left( \frac{R^2/\eps^2}{\sqrt{\alpha}} \right)\right\},
\end{equation}
where $T_{\max} := \argmax_{t\in [T]} \mean{\vol}(\gS_{t}) / \mean{\gamma}_{t}$ and $R$ is defined in Eq.~\eqref{equ:constant-r}.
\end{reptheorem}

\begin{proof}
We first analyze the total iteration complexity $T$ required in Algorithm \ref{alg:aesp-ppr}. For the PPR problem defined in~\eqref{equ:PPR}, when $\alpha < 0.5$, $\vb = \ve_s$, $\mu = \alpha$, $L = 1$, $\eta = 1 - 2\alpha$, $q = \sqrt{\alpha/(1-\alpha)}$, and $\rho = 0.9\sqrt{q}$, we seek to determine $t$ such that $(1-\rho)^{t+1} \leq \alpha^2 \eps^2/(400(1-\alpha^2))$. Since $\rho = 0.9 \sqrt{q}$, the total iteration complexity simplifies to
\[
T \leq \frac{1}{\rho} \log\left(\frac{4(L+\mu) \|\vb\|_1^2}{\mu\eps^2(\sqrt{q}-\rho)^2}\right) \leq \left\lceil \frac{1}{\rho} \log \left( \frac{400(1-\alpha^2)}{\alpha^2 \eps^2} \right) \right\rceil =  \left\lceil \frac{10}{9} \sqrt{ \frac{1-\alpha}{\alpha} } \log \left( \frac{400(1-\alpha^2)}{\alpha^2 \eps^2} \right) \right\rceil.
\]

For $t\in [T]$, $\varphi_t := \frac{L+\mu}{18} \|\vb\|_1^2(1-\rho)^t$ and we know that $\frac{2(L+\mu) \|\vb\|_1^2 }{(\sqrt{q} - \rho)^2}  (1-\rho)^{t+1} \leq \frac{\mu\eps^2}{2}$, then $\varphi_t = \frac{L+\mu}{18} \|\vb\|_1^2(1-\rho)^t \leq (\sqrt{q} - \rho )^2 \mu \eps^2 / (72(1-\rho))$. As $\eps_t =\max\{ \sqrt{\frac{(\eta+\alpha)\varphi_t}{m}}, \frac{2(\eta+\alpha)\varphi_t}{\|\nabla h_t^{1/2} (\vz_t^{(0)}) \|_1} \}$ from Lemma \ref{lemma:inner-loop-eps-t}, then for $t \in [T]$, we have
\begin{align*}
\eps_t \geq \frac{2(\eta+\alpha)\varphi_t}{\|\nabla h_t^{1/2} (\vz_t^{(0)}) \|_1} &\geq \frac{2(\eta+\alpha)\varphi_T}{\|\nabla h_t^{1/2} (\vz_t^{(0)}) \|_1}  \\
&= \frac{(\eta+\alpha)(\sqrt{q} - \rho )^2 \mu \eps^2}{36(1-\rho)\|\nabla h_t^{1/2} (\vz_t^{(0)}) \|_1}  = \frac{\alpha^2\eps^2}{3600(1-0.9\sqrt{q})\|\nabla h_t^{1/2} (\vz_t^{(0)}) \|_1},
\end{align*}
where the first equality follows from Lemma \ref{lemma:outer-loop-complexity}, which states that $\varphi_T \geq \frac{\mu\eps^2(\sqrt{q}-\rho)^2}{72}$. By the bounded level set assumption for the scaled gradient, we have $\|\nabla h_t^{1/2} (\vz_t^{(0)})\|_1 \leq R \| \nabla h_1^{1/2}(\vz_1^{(0)})\|_1 = R \|\nabla h_1^{1/2}(\zero)\|_1 = R \| \alpha \ve_s\|_1 = \alpha R$ where we assume that $\vz_1^{(0)} = \zero$. This leads to
\begin{align*}
\frac{ \max_t C_{h_t}^0}{\eps_T} &\leq \frac{3600(1 - 0.9\sqrt{q})\|\nabla h_t^{1/2} (\vz_t^{(0)}) \|_1 \cdot \max_t C_{h_t}^0}{\alpha^2\eps^2} \\
&\leq \frac{3600(1-0.9\sqrt{q})R^2\alpha^2}{\alpha^2\eps^2} = \gO\left(\frac{R^2}{\eps^2}\right)
\end{align*}
Note that $T_{\max} := \argmax_{t\in [T]} \mean{\vol}(\gS_{t}) / \mean{\gamma}_{t}$ and that $\frac{\mean{\vol}(\gS_{t})}{\mean{\gamma}_{t}} \leq \min \left\{ \frac{C_{T_{\max}}}{\eps_{T_{\max}}}, 2 m \right\}$. By combining the two bounds above, we complete the proof of the theorem.
\end{proof}

\subsection{Proof of Corollary \ref{cor:loc-gd-appr-error-bound} }

\begin{corollary}
Let $\vx_t^{(K_t)}$ be the output of either \textsc{LocGD} or \textsc{LocAPPR}, as defined in Algorithm \ref{alg:local-gd} and Algorithm \ref{alg:local-appr-opt}, respectively. Define the objective error as $e_t(\vz_t^{(K_t)}) := h_t(\vz_t^{(K_t)}) - h_t(\vx_t^*)$. Then, the following bound holds
\[
e_t(\vz_t^{(K_t)}) \leq \frac{1}{(1-\alpha)} \cdot \prod_{k=0}^{K_t-1}\left(1-\frac{2\gamma_t^{(k)}}{3} \right)^2\|\nabla h_t^{1/2}(\vz_t^{(0)})\|_1^2.
\]
\label{cor:loc-gd-appr-error-bound}
\end{corollary}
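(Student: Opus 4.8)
The plan is to assemble three ingredients that are already available in the excerpt: the strong-convexity error bound for the quadratic proximal objective $h_t$, the elementary norm chain $\|\cdot\|_2 \le \|\mD^{1/2}\cdot\|_1$, and the per-iteration contraction of the $\mD^{1/2}$-scaled gradient proved for \textsc{LocGD} and \textsc{LocAPPR}; and then to simplify the constants using $\eta = 1-2\alpha$ (equivalently $\eta = L - 2\mu$, the choice used in \textsc{AESP} and \textsc{AESP-PPR}).

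First I would bound the objective error by the squared $\ell_2$-norm of the gradient. Since the Hessian of $h_t$ is $\tilde{\mQ} = \mQ + \eta\mI$ with spectrum in $[\mu+\eta, L+\eta]$ and $\mu = \alpha$, $L = 1$, the function $h_t$ is $(\mu+\eta)$-strongly convex with $\mu + \eta = 1-\alpha$; because $\nabla h_t(\vx_t^*) = \zero$, applying the gradient-norm bound of Lemma \ref{lemma:gd-strong-convex} to $h_t$ gives
\[
e_t(\vz_t^{(K_t)}) = h_t(\vz_t^{(K_t)}) - h_t(\vx_t^*) \le \frac{1}{2(1-\alpha)}\big\|\nabla h_t(\vz_t^{(K_t)})\big\|_2^2.
\]
(The same bound also drops out of the identity $h_t(\vz) - h_t^* = \tfrac12(\vz-\vx_t^*)^\top\tilde{\mQ}(\vz-\vx_t^*)$ established in the proof of Lemma \ref{lemma:inner-loop-eps-t}, after substituting $\vz_t^{(K_t)} - \vx_t^* = \tilde{\mQ}^{-1}\nabla h_t(\vz_t^{(K_t)})$ and bounding by $\lambda_{\min}(\tilde{\mQ})^{-1}$.) Next, since every node of a connected graph has $d_v \ge 1$, for any vector $\vg$ we have $\|\vg\|_2 \le \|\vg\|_1 \le \|\mD^{1/2}\vg\|_1$; taking $\vg = \nabla h_t(\vz_t^{(K_t)})$ and recalling $\nabla h_t^{1/2}(\vz) = \mD^{1/2}\nabla h_t(\vz)$ turns the above into
\[
e_t(\vz_t^{(K_t)}) \le \frac{1}{2(1-\alpha)}\big\|\nabla h_t^{1/2}(\vz_t^{(K_t)})\big\|_1^2.
\]

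Then I would telescope the contraction. By Theorem \ref{thm:convergence-loc-gd} for \textsc{LocGD} (resp.\ Theorem \ref{thm:convergence-local-appr} for \textsc{LocAPPR}, with $\gamma_t^{(k)}$ read as the per-round ratio $\sum_{i=1}^{|\gS_t^{(k)}|}\gamma_t^{(k_i)}$), the scaled gradient satisfies $\|\nabla h_t^{1/2}(\vz_t^{(k+1)})\|_1 \le (1-\tau\gamma_t^{(k)})\|\nabla h_t^{1/2}(\vz_t^{(k)})\|_1$ with $\tau = \tfrac{2(\alpha+\eta)}{1+\alpha+2\eta}$; plugging $\eta = 1-2\alpha$ gives $\alpha+\eta = 1-\alpha$ and $1+\alpha+2\eta = 3(1-\alpha)$, hence $\tau = 2/3$. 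Iterating over $k = 0,\dots,K_t-1$ yields $\|\nabla h_t^{1/2}(\vz_t^{(K_t)})\|_1 \le \prod_{k=0}^{K_t-1}(1-\tfrac{2}{3}\gamma_t^{(k)})\,\|\nabla h_t^{1/2}(\vz_t^{(0)})\|_1$. Squaring this, substituting into the previous display, and using $\tfrac{1}{2(1-\alpha)} \le \tfrac{1}{1-\alpha}$ gives exactly the claimed bound.

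I do not expect a genuine obstacle here: the corollary is essentially a repackaging of results already proven. The only points that require a little care are verifying that the choice $\eta = 1-2\alpha$ pins the contraction factor to precisely $1 - 2\gamma_t^{(k)}/3$, and, for \textsc{LocAPPR}, making sure the per-coordinate updates are grouped so the product index matches the $k$-indexed product in the statement (which is immediate from the per-round reduction in Theorem \ref{thm:convergence-local-appr}).
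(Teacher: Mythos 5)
Your proof is correct and has the same overall architecture as the paper's (bound the objective error by the squared $\mD^{1/2}$-scaled $\ell_1$-norm of the gradient at $\vz_t^{(K_t)}$, then telescope the per-iteration contraction with $\tau=2/3$), but the first step is handled by a genuinely different mechanism. The paper starts from the $(L+\eta)$-smoothness bound $h_t(\vz_t^{(K_t)})-h_t^*\le\frac{L+\eta}{2}\|\vz_t^{(K_t)}-\vx_t^*\|_2^2$, substitutes $\vz_t^{(K_t)}-\vx_t^*=\tilde{\mQ}^{-1}\nabla h_t(\vz_t^{(K_t)})$, and then converts to the scaled $\ell_1$-norm via the PPR-matrix identity $\mD^{-1/2}\mPi_{(\eta+\alpha)/(1+\eta)}\mD^{1/2}=(\eta+\alpha)\tilde{\mQ}^{-1}$ together with $\|\mPi_\nu\|_1\le 1$, arriving at the prefactor $\frac{\eta+L}{2(\eta+\alpha)^2}=\frac{1}{1-\alpha}$. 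You instead use the strong-convexity (Polyak--{\L}ojasiewicz-type) inequality $h_t(\vz)-h_t^*\le\frac{1}{2(\mu+\eta)}\|\nabla h_t(\vz)\|_2^2$ and the elementary chain $\|\vg\|_2\le\|\vg\|_1\le\|\mD^{1/2}\vg\|_1$ (valid since every degree is at least $1$ in a connected graph), which gives the tighter prefactor $\frac{1}{2(1-\alpha)}$ before you deliberately loosen it to match the stated constant. Your route avoids the PPR-matrix machinery entirely and is both more elementary and a factor of $2$ sharper; the paper's route stays consistent with the norm-conversion technique it reuses in Lemma~\ref{lemma:inner-loop-eps-t}. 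Your caveat about reading $\gamma_t^{(k)}$ as the per-round aggregate for \textsc{LocAPPR} is exactly the right point to flag, and it is resolved by the per-round reduction in Theorem~\ref{thm:convergence-local-appr} as you say.
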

\begin{proof}
Recall $\tilde{\mQ} = \frac{1+\alpha+2\eta}{2} \mI - \frac{1-\alpha}{2} \mD^{-1/2}\mA\mD^{-1/2}$, and the target linear system to solve is $\tilde{\mQ}\vz = \vb^{(t-1)}$. Note that
\begin{align*}
\mPi_{ \frac{\eta+\alpha}{1+\eta}} &:= \frac{\eta+\alpha}{1+\eta} \left( \frac{1+ \frac{\eta+\alpha}{1+\eta} }{2} \mI - \frac{1-\frac{\eta+\alpha}{1+\eta}}{2} \mA \mD^{-1} \right)^{-1} \\
&= \frac{\eta+\alpha}{1+\eta} \left( \frac{1+\alpha+2\eta}{2(1+\eta)} \mI - \frac{1-\alpha}{2(1+\eta)} \mA \mD^{-1} \right)^{-1} \\
&= (\eta+\alpha)\left( \frac{1+\alpha+2\eta}{2} \mI - \frac{1-\alpha}{2} \mA \mD^{-1} \right)^{-1}.
\end{align*}
Hence, $\mD^{-1/2} \mPi_{ \frac{\eta+\alpha}{1+\eta} } \mD^{1/2} = (\eta+\alpha) \tilde{\mQ}^{-1}$.  Recall $\vx_t^* = \tilde{\mQ}^{-1}\vb^{(t-1)}$. Let $(\hat{\vz},\hat{\vr})$ be estimate and residual pair, then $\hat{\vz} - \vx_t^* = -\tilde{\mQ}^{-1}\hat{\vr} = \tilde{\mQ}^{-1}\nabla h_t(\hat\vz)$. Since $h_t$ is $L+\eta$-strongly smooth, then for any $\vz \in \R^n$, it implies $h_t(\vz) - h_t(\vx_t^*) \leq \frac{\eta + L}{2} \| \vz - \vx_t^*\|_2^2$. Let $\vz_t^{(K_t)}$ be the estimate returned by \textsc{APPR} or \textsc{LocGD}, then we have
\begin{align*}
h_t(\vz_t^{(K_t)}) - h_t(\vx_t^*) &\leq \frac{\eta + L}{2} \|\vz_t^{(K_t)} - \vx_t^*\|_2^2 \\
&= \frac{\eta+L}{2} \|\tilde{\mQ}^{-1} \nabla h_t(\vz_t^{(K_t)})\|_1^2 \\
&= \frac{(\eta+L)}{2(\eta+\alpha)^2} \|\mD^{-1/2}\mPi_{\frac{\eta+\alpha}{1+\eta}}\mD^{1/2} \nabla h_t(\vz_t^{(K_t)})\|_1^2 \\
&\leq \frac{(\eta+L)}{2(\eta+\alpha)^2} \| \mD^{1/2} \nabla h_t(\vz_t^{(K_t)})\|_1^2,
\end{align*}
where the last inequality is from the fact that for any $\nu > 0$, $\|\mD^{-1/2} \mPi_{\nu} \|_1 \leq 1$. From previous analysis we know that $\|\mD^{1/2}\nabla h_t(\vz_t^{(K_t)}) \|_1 \leq \prod_{k=0}^{K_t-1}(1-\frac{2(\alpha+\eta)}{1+\alpha+ 2\eta} \gamma_t^{(k)})\|\mD^{1/2}\nabla h_t(\vz_t^{(0)})\|_1$. We have a final upper-bound
\begin{align*}
h_t(\vz_t^{(K_t)}) - h_t(\vx_t^*) &\leq \frac{(\eta+L)}{2(\eta+\alpha)^2} \cdot \prod_{k=0}^{K_t-1}\left(1-\frac{2(\alpha+\eta)}{1+\alpha+ 2\eta} \gamma_t^{(k)}\right)^2\|\nabla h_t^{1/2}(\vz_t^{(0)})\|_1^2.
\end{align*}
We derive the bound under the condition that $\eta = 1-2\alpha$.
\end{proof}

\section{Related Work}
\label{sect:related-work}

\textbf{Personalized PageRank.} Personalized PageRank (PPR), initially introduced as a variant of Google’s PageRank \cite{brin1998anatomy}, was further studied in \cite{haveliwala2002topic,jeh2003scaling}. A key property of PPR is that its important entries are concentrated near the source node, allowing for effective retrieval of relevant information even at a lower precision $\eps$. These important entries follow the power-law distribution \cite{gleich2015pagerank,bai2024faster}. Computing $\eps$-approximate PPR vectors is fundamental for analyzing large-scale graph-structured data, with applications in local clustering \cite{andersen2006local,andersen2007using,spielman2013local,macgregor2021local}, modeling diffusion processes \cite{fountoulakis2020p,chen2022,bai2024faster}, and training node embeddings or graph neural networks \cite{klicpera2019predict,gasteiger2019diffusion,bojchevski2020scaling,chien2021adaptive}. Further discussions on PPR-related problems can be found in \cite{kapralov2021efficient,bressan2023sublinear,yang2024efficient,wang2024revisiting,jayaram2024dynamic}.

There exist well-established iterative methods for computing PPR, particularly those based on solving linear systems \cite{saad2003iterative,golub2013matrix,young2014iterative}. Among these, the Conjugate Gradient Method (CGM) and the Chebyshev method \cite{d2021acceleration} are commonly employed for solving the symmetrized form of Eq.~\eqref{equ:PPR}. These approaches typically achieve a time complexity of $\tilde\gO(m/\sqrt{\alpha})$, where $m$ is the number of edges. Further improvements have been made through symmetric diagonally dominant solvers \cite{koutis2011nearly,spielman2014nearly} and \citet{anikin2020efficient} proposed an algorithm for the PageRank problem with a runtime complexity dependent on $|\gV|$. However, we focus on local methods that avoid accessing the entire graph.

\textbf{Local algorithms and accelerations.} Unlike standard solvers, local solvers \cite{andersen2006local,berkhin2006bookmark,kloster2013nearly,rubinfeld2011sublinear,alon2012space} exploit that the big entries of $\vpi$ are concentrated in a small part of the graph. Specifically, \citet{andersen2006local} proposed the Approximate Personalized PageRank (APPR) algorithm, achieving a time complexity of $\gO(1/(\alpha\eps))$. To further characterize the locality of $\vpi$, \citet{fountoulakis2019variational} introduced a variational formulation of \eqref{equ:PPR} and applied a proximal gradient method to compute local estimates with time complexity of $\tilde{\gO}(1/((\alpha+\mu^2)\eps))$, where $\mu > 0$ is a local conductance constant associated with $\gG$. Both methods critically depend on the monotonic reduction of the residual or gradient to ensure convergence. The equivalence between APPR and other methods such as Gauss-Seidel and coordinate descent has been considered \cite{tseng2009coordinate,leventhal2010randomized,nutini2015coordinate,tu2017breaking} but does not focus on local analysis.

The question of whether an algorithm with time complexity $\tilde{\gO}(1/(\sqrt{\alpha}\eps))$ can be achieved using methods such as FISTA \cite{beck2009fast}, linear coupling \cite{allen2014linear}, or other methods \cite{allen2018katyusha,bubeck2015convex} was raised in \citet{fountoulakis2022open}.  However, the difficulty is that algorithms such as FISTA violate the monotonicity property where the volume accessed of per-iteration cannot be bounded properly. The work of \citet{zhou2024iterative} proposes a locally evolving set process for localizing standard iterative methods for solving large-scale linear systems. However, their accelerated convergence rate framework strongly assumes that the residual has a geometric reduction rate, which could not be true in real-world settings. The work of \citet{martinez2023accelerated} employs a nested APGD method, achieving a time complexity of $\tilde{O}(|\gS^*| \tilde{\operatorname{vol}}(\gS^*) /\sqrt{\alpha} +|\gS^*| \operatorname{vol}(\gS^*))$  where $|\gS^*| = |\supp(\vx_\psi^*)|$ (with $\vx_\psi^*$ being the optimal solution of Eq.~\eqref{equ:psi=f(x)+l1}) and $\tilde{\operatorname{vol}}\left(\gS^*\right)$ denoting the number of edges in the induced subgraph from $\gS^*$. The factor $|\gS^*|$ appears in the bound due to the worst-case number of calls required for applying APGD. In contrast, our proposed framework introduces a novel local strategy that provably runs in $1/\sqrt{\alpha}$ outer-loop iterations. Furthermore, we incorporate the Catalyst framework \cite{lin2015universal,lin2018catalyst}, which ensures that each iteration maintains locality, allowing the overall time complexity to be locally bounded.

\section{Implementation Details and More Experimental Results}
\label{appendix:c}

\begin{figure}[t]
\begin{center}
\begin{minipage}[t]{0.48\textwidth}
\centering
\begin{algorithm}[H]
\caption{\textsc{LocGD}$(\varphi_t, \vy^{(t-1)}, \eta, \alpha,\vb,\gG)$}
\begin{algorithmic}[1]
\STATE Initialize: $\vz \gets \vy^{(t-1)}$
\IF{$\| \nabla h_t^{1/2}(\vz_t^{(0)}) \|_1 =0$}
\STATE \textbf{Return} $\vz$
\ENDIF
\STATE $\eps_t = \max\Big\{ \sqrt{\frac{(\mu+\eta)\varphi_t}{m}}, \frac{2(\eta+\alpha)\varphi_t}{\| \nabla h_t^{1/2}(\vz_t^{(0)}) \|_1} \Big\}$
\STATE $\gQ \gets \{u: \eps_t\sqrt{d_u}\leq |\nabla_u h_t(\vz)| \}$
\STATE $k=0$

\WHILE{$\gQ \neq \emptyset$}
    \STATE $\gS_t^{(k)} = []$
    \WHILE{$\gQ \neq \emptyset$}
        \STATE $u \gets \gQ.\operatorname{dequeue}()$
        \STATE $\gS_{t}^{(k)}.\operatorname{append}\left((u, \nabla_u h_t(\vz))\right)$
        \STATE $\vz_u \gets \vz_u -\frac{2}{1+\alpha+2\eta}\nabla_u h_t(\vz) $
        \STATE $\nabla h_t(\vz)_u \gets 0$
    \ENDWHILE
    \FOR{$\left(u, \nabla_u h_t(\vz)\right) \in \gS_t$}
        \FOR{$v \in \gN(u)$}
            \STATE $\nabla_v h_t(\vz) \gets \nabla_v h_t(\vz) + \frac{1-\alpha}{1+\alpha+2\eta} \frac{\nabla_u h_t(\vz)}{\sqrt{d_ud_v}}$
            \IF{$|\nabla_v h_t(\vz)| \geq \eps_t\sqrt{d_v}$ and $v\notin \gQ$}
                \STATE $\gQ.\operatorname{enqueue}(v)$
            \ENDIF
        \ENDFOR
    \ENDFOR
    \STATE $k \gets k + 1$
\ENDWHILE
\STATE \textbf{Return} $\vx^{(t)} \gets \vz$.
\end{algorithmic}
\label{alg:local-gd}
\end{algorithm}
\vspace{-4mm}
\end{minipage}
\hfill
\begin{minipage}[t]{0.48\textwidth}
\centering 
\begin{algorithm}[H]
\caption{\textsc{LocAPPR}$(\varphi_t, \vy^{(t-1)},\eta,  \alpha,\vb,\gG)$}
\begin{algorithmic}[1]
\STATE Initialize: $\vz \gets \vy^{(t-1)}$
\IF{$\| \nabla h_t^{1/2}(\vz_t^{(0)}) \|_1 =0$}
\STATE \textbf{Return} $\vz$
\ENDIF
\STATE $\eps_t \leftarrow \max\Big\{ \sqrt{\frac{(\mu+\eta)\varphi_t}{m}}, \frac{2(\eta+\alpha)\varphi_t}{\| \nabla h_t^{1/2}(\vz_t^{(0)}) \|_1} \Big\}$
\STATE $\gQ \gets \{u: \eps_t\sqrt{d_u}\leq |\nabla_u h_t(\vz)| \}$
\WHILE{$\gQ \ne \emptyset$}
\STATE $u \leftarrow \gQ$.dequeue()
\IF{$|\nabla_u h_t(\vz)| < \eps_t \sqrt{d_u}$}
\STATE \textbf{continue}
\ENDIF
\STATE $\delta \leftarrow \nabla_u h_t(\vz)$
\STATE $z_u \leftarrow z_u - \frac{2 \delta }{1+\alpha+2\eta}$
\FOR{$v \in \N(u)$}
\STATE $\nabla_v h_t(\vz) \leftarrow \nabla_v h_t(\vz) + {\tfrac{1-\alpha}{1+\alpha + 2 \eta} }\cdot \frac{ \delta }{ \sqrt{d_u} \sqrt{d_v}} $
\IF{$|\nabla_v h_t(\vz) | \geq \eps_t \sqrt{d_v}$ \textbf{ and } $v\notin\gQ$}
\STATE $\gQ$.enqueue($v$)
\ENDIF
\ENDFOR
\IF{$|\nabla_u h_t(\vz) | \geq \eps_t \sqrt{d_u}$ \textbf{ and } $u\notin\gQ$ }
\STATE $\gQ$.enqueue($u$)
\ENDIF
\ENDWHILE
\STATE \textbf{Return} $\vx^{(t)} \leftarrow \vz$
\end{algorithmic}
\label{alg:local-appr-opt}
\end{algorithm}
\end{minipage}
\end{center}
\vspace{-5mm}
\end{figure}

Algorithm \ref{alg:local-gd} and Algorithm \ref{alg:local-appr-opt} present \textsc{LocGD} and \textsc{LocAPPR} respectively. They iteratively update the active node set in a queue data structure $\gQ$, ensuring a localized and efficient computation of the PPR estimate.

\subsection{Datasets and Preprocessing}

In our main experiments, we evaluate the proposed method on a medium-scale graph \textit{com-dblp} and four large-scale graphs \textit{ogb-mag240m}, \textit{ogbn-papers100M}, \textit{com-friendster}, and \textit{wiki-en21}. To further investigate the effectiveness and acceleration performance of our approach on different sizes of graphs, we conducted additional experiments on more graphs. we treat all 19 graphs as undirected with unit weights. We remove self-loops and keep the largest connected component when the graph is disconnected. Table \ref{tab:datasets} presents the key statistics of these datasets, including the number of nodes ($n$) and edges ($m$). The largest graph in our extended experiments contains up to 200 million nodes and 1 billion edges, as shown in Table \ref{tab:datasets}.

\begin{table}[htb]
\centering
\caption{Dataset Statistics}
\begin{tabular}{lrrr}
\toprule
Notations & Dataset Name      & $n$         & $m$            \\
\midrule
$\mathcal{G}_1$&\textit{as-skitter} & 1694616 & 11094209 \\
$\mathcal{G}_2$&\textit{cit-patent} & 3764117 & 16511740 \\
$\mathcal{G}_3$&\textit{com-dblp} & 317080 & 1049866 \\
$\mathcal{G}_4$&\textit{com-friendster} & 65608366 & 1806067135 \\
$\mathcal{G}_5$&\textit{com-lj} & 3997962 & 34681189 \\
$\mathcal{G}_6$&\textit{com-orkut} & 3072441 & 117185083 \\
$\mathcal{G}_7$&\textit{com-youtube} & 1134890 & 2987624 \\
$\mathcal{G}_8$&\textit{ogb-mag240m} & 244160499 & 1728364232 \\
$\mathcal{G}_9$&\textit{ogbl-ppa} & 576039 & 21231776 \\
$\mathcal{G}_{10}$&\textit{ogbn-arxiv} & 169343 & 1157799 \\
$\mathcal{G}_{11}$&\textit{ogbn-mag} & 1939743 & 21091072 \\
$\mathcal{G}_{12}$&\textit{ogbn-papers100M} & 111059433 & 1615685450 \\
$\mathcal{G}_{13}$&\textit{ogbn-products} & 2385902 & 61806303 \\
$\mathcal{G}_{14}$&\textit{ogbn-proteins} & 132534 & 39561252 \\
$\mathcal{G}_{15}$&\textit{soc-lj1} & 4843953 & 42845684 \\
$\mathcal{G}_{16}$&\textit{soc-pokec} & 1632803 & 22301964 \\
$\mathcal{G}_{17}$&\textit{sx-stackoverflow} & 2584164 & 28183518 \\
$\mathcal{G}_{18}$&\textit{wiki-en21} & 6216199 & 160823797 \\
$\mathcal{G}_{19}$&\textit{wiki-talk} & 2388953 & 4656682 \\
\bottomrule
\end{tabular}
\label{tab:datasets}
\end{table}

\begin{table}[htb]
\centering
\caption{Operations Needed for five local solvers on 19 graphs datasets.}
\label{tab:opers_19_graphs}
\begin{tabular}{l|rrr|rr}
\toprule
Graph & {APPR} & {APPR Opt} & {LocGD} & {AESP-LocGD} & {AESP-LocAPPR} \\
\midrule
as-skitter        & 3.06e+06 & 1.39e+06 & 1.94e+06 & 1.26e+06   & 1.00e+06     \\
cit-patent        & 3.86e+06 & 1.81e+06 & 2.62e+06 & 1.45e+06   & 1.27e+06     \\
com-dblp          & 6.07e+06 & 3.18e+06 & 4.66e+06 & 1.63e+06   & 1.43e+06     \\
com-friendster    & 8.35e+05 & 3.20e+05 & 3.87e+05 & 6.65e+05   & 6.57e+05     \\
com-lj            & 1.73e+06 & 7.14e+05 & 9.69e+05 & 8.18e+05   & 7.70e+05     \\
com-orkut         & 1.32e+06 & 5.72e+05 & 7.06e+05 & 8.29e+05   & 8.19e+05     \\
com-youtube       & 2.27e+06 & 1.33e+06 & 1.60e+06 & 1.27e+06   & 1.09e+06     \\
ogb-mag240m       & 1.92e+06 & 8.46e+05 & 9.86e+05 & 7.54e+05   & 7.01e+05     \\
ogbl-ppa          & 8.19e+05 & 4.36e+05 & 4.53e+05 & 7.45e+05   & 7.33e+05     \\
ogbn-arxiv        & 1.20e+07 & 5.47e+06 & 8.99e+06 & 2.59e+06   & 2.25e+06     \\
ogbn-mag          & 9.23e+05 & 3.89e+05 & 4.45e+05 & 6.65e+05   & 6.33e+05     \\
ogbn-papers100M   & 1.18e+06 & 5.05e+05 & 5.86e+05 & 8.38e+05   & 7.98e+05     \\
ogbn-products     & 2.00e+06 & 9.73e+05 & 1.30e+06 & 9.11e+05   & 8.89e+05     \\
ogbn-proteins     & 7.55e+05 & 7.73e+05 & 7.60e+05 & 9.20e+05   & 9.20e+05     \\
soc-lj1           & 2.45e+06 & 1.09e+06 & 1.53e+06 & 1.03e+06   & 9.51e+05     \\
soc-pokec         & 1.58e+06 & 7.13e+05 & 7.98e+05 & 9.38e+05   & 8.95e+05     \\
sx-stackoverflow  & 9.08e+05 & 3.47e+05 & 4.39e+05 & 5.18e+05   & 4.88e+05     \\
wiki-en21         & 7.19e+05 & 2.18e+05 & 2.27e+05 & 5.47e+05   & 5.36e+05     \\
wiki-talk         & 1.39e+06 & 7.76e+05 & 9.83e+05 & 6.79e+05   & 5.42e+05     \\
\bottomrule
\end{tabular}
\end{table}

\begin{table}[h]
\centering
\caption{Running times}
\label{tab:runtime_19_graphs}
\begin{tabular}{l|rrr|rr}
\toprule
Graph & {APPR} & {APPR Opt} & {LocGD} & {AESP-LocGD} & {AESP-LocAPPR} \\
\midrule
as-skitter        & 6.90e-01 & 3.18e-01 & 4.24e-01 & 5.62e+00   & 5.63e+00     \\
cit-patent        & 5.05e-01 & 2.50e-01 & 9.97e-02 & 6.08e-02   & 1.87e-01     \\
com-dblp          & 6.38e-01 & 3.28e-01 & 7.74e-02 & 2.32e-02   & 1.41e-01     \\
com-friendster    & 1.60e+01 & 7.32e+00 & 9.93e+00 & 3.14e+02   & 3.15e+02     \\
com-lj            & 1.18e-01 & 4.93e-02 & 2.70e-02 & 3.38e-02   & 6.87e-02     \\
com-orkut         & 3.22e-02 & 1.47e-02 & 1.11e-02 & 1.74e-02   & 2.77e-02     \\
com-youtube       & 2.90e-01 & 1.72e-01 & 3.55e-02 & 2.60e-02   & 1.33e-01     \\
ogb-mag240m       & 6.17e-01 & 1.75e-01 & 3.56e-01 & 2.18e-01   & 2.40e-01     \\
ogbl-ppa          & 2.00e-02 & 8.77e-03 & 5.30e-03 & 7.81e-03   & 1.78e-02     \\
ogbn-arxiv        & 1.10e+00 & 5.04e-01 & 1.35e-01 & 3.08e-02   & 1.98e-01     \\
ogbn-mag          & 4.46e-02 & 2.29e-02 & 1.09e-02 & 1.46e-02   & 4.47e-02     \\
ogbn-papers100M   & 1.30e-01 & 6.09e-02 & 5.19e-02 & 1.81e-01   & 2.17e-01     \\
ogbn-products     & 7.91e-02 & 3.69e-02 & 2.85e-02 & 2.48e-02   & 3.98e-02     \\
ogbn-proteins     & 3.84e-03 & 3.71e-03 & 2.45e-03 & 2.50e-03   & 4.55e-03     \\
soc-lj1           & 1.73e-01 & 7.59e-02 & 4.09e-02 & 4.48e-02   & 9.75e-02     \\
soc-pokec         & 9.18e-02 & 4.07e-02 & 2.04e-02 & 2.31e-02   & 5.93e-02     \\
sx-stackoverflow  & 1.37e+00 & 5.44e-01 & 7.01e-01 & 2.90e+00   & 4.39e+00     \\
wiki-en21         & 2.71e-02 & 8.66e-03 & 6.85e-03 & 2.40e-02   & 3.82e-02     \\
wiki-talk         & 2.40e-01 & 1.53e-01 & 3.08e-02 & 2.15e-02   & 1.06e-01     \\
\bottomrule
\end{tabular}
\end{table}

\subsection{Problem Settings and Baseline Methods} For solving Equation \eqref{equ:f(x)} and \eqref{equ:psi=f(x)+l1} on 19 graphs, we randomly select 5 source nodes $s$ from each graph. The damping factor $\alpha$ and convergence threshold $\eps$ were fixed at $\alpha=0.1$ and $\eps = 1\times 10^{-6}$ throughout all experiments unless otherwise specified. To compare with AESP-\textsc{LocAPPR} and AESP-\textsc{LocGD}, we primarily consider \textsc{locGD}, APPR, \textsc{LocCH}, FISTA, and ASPR methods as baselines. All our methods are implemented in Python with numba acceleration tools.  Both ASPR and FISTA use a precision of $\tilde{\eps} = 0.1$ and the parameter $\hat{\eps} = \eps/(1+ \tilde{\eps})$ as suggested in \cite{fountoulakis2019variational}.

\subsection{Additional experimental results}

\begin{figure}[htbp]
    \centering
    \begin{minipage}[t]{0.48\textwidth} 
        \centering
        \includegraphics[width=1\linewidth]{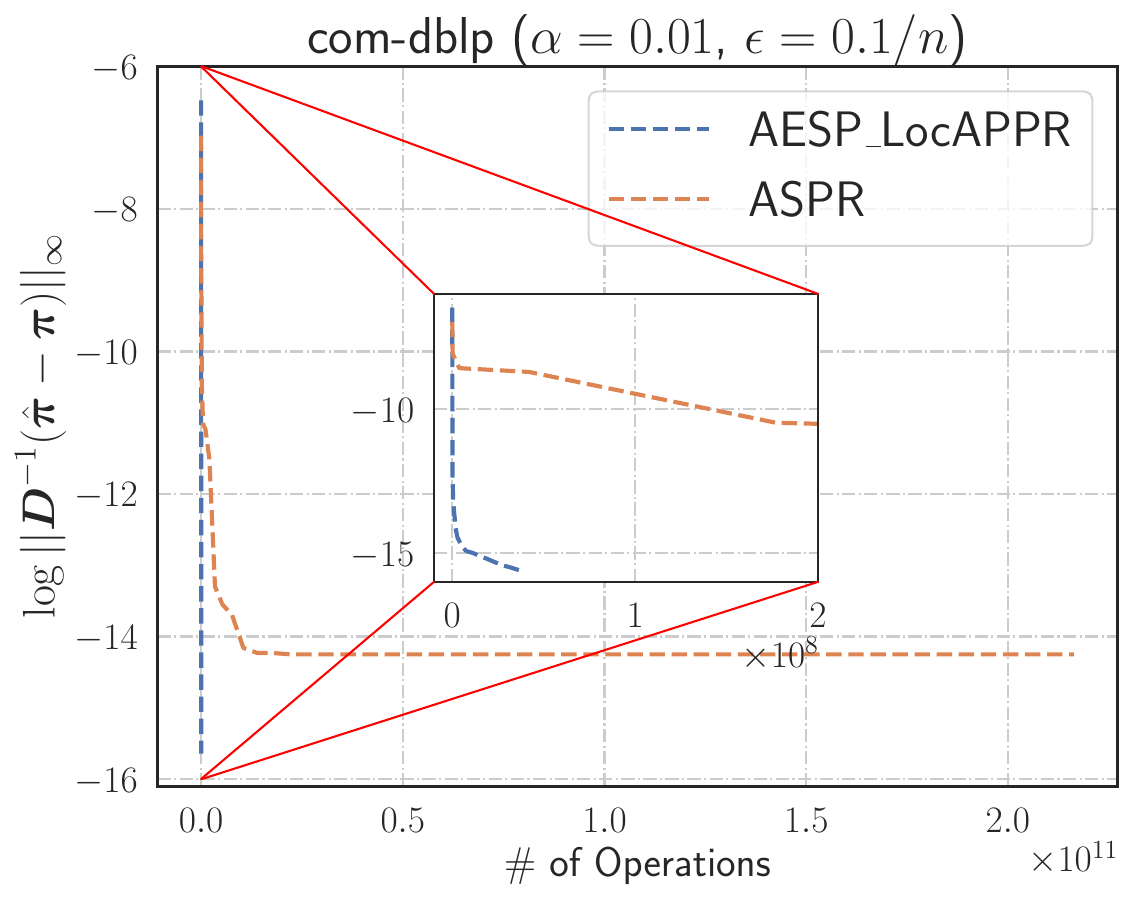}
        \caption{Comparison of of \textsc{AESP-LocAPPR} versus \textsc{ASPR}, $\log\|\mD^{-1}(\hat{\vpi} -\vpi)\|_\infty$ over the operations on the \textit{com-dblp} graph with \( \alpha = 0.01 \) and \( \eps = 0.1/n \) (Insets Show Early-stage Iteration Details)}
        \label{fig:aspr-aesp-ppr}
    \end{minipage}
    \hfill 
    \begin{minipage}[t]{0.48\textwidth} 
        \centering
        \includegraphics[width=1\linewidth]{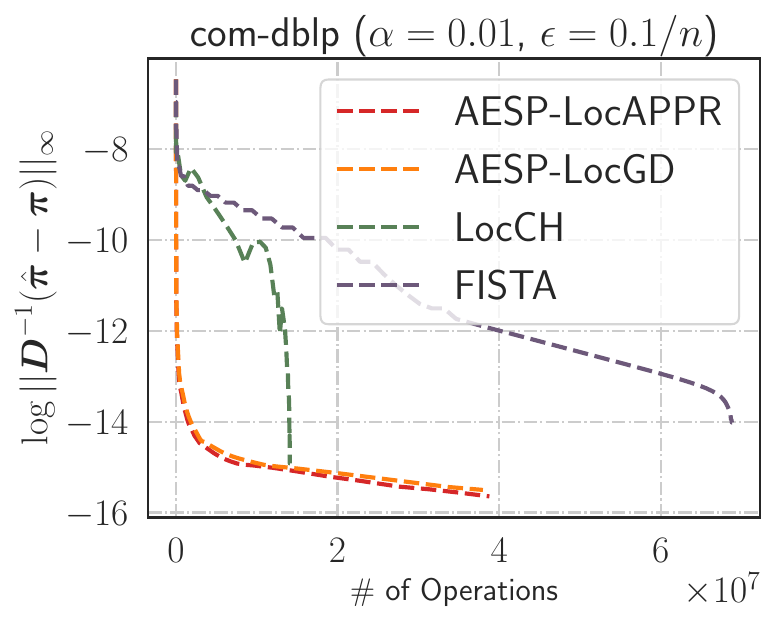}
        \caption{Comparison of $\log\|\mD^{-1}(\hat{\vpi} -\vpi)\|_\infty$ over the operations on the \textit{com-dblp} graph with \( \alpha = 0.01 \) and \( \eps = 0.1/n \), illustrating the performance of AESP-\textsc{LocAPPR}, AESP-\textsc{LocGD}, \textsc{LocCH}, and FISTA.\vspace{-3mm}}
        \label{fig:fast-method-comparison}
    \end{minipage}
\end{figure}



\textbf{Comparison of baseline methods.} Fig. \ref{fig:aspr-aesp-ppr} compares the convergence behaviors of AESP-\textsc{LocAPPR} and ASPR for the \textit{com-dblp} graph, with parameters $\alpha = 0.01$ and $\eps = 0.1/n$.  As evidenced by the early-stage iterations in the subplots, AESP-\textsc{LocAPPR} achieves significantly faster convergence compared to \textsc{ASPR}. Although \textsc{ASPR} guarantees monotonic decrease in the $\ell_1$-norm of gradients, this property comes at the expense of requiring increasingly iterative points, which consequently reduces computational efficiency.

Fig. \ref{fig:fast-method-comparison} demonstrates the superior convergence behavior of AESP-\textsc{LocAPPR}, AESP-\textsc{LocGD} compared to baseline methods (\textsc{LocCH}, and \textsc{FISTA}) on the \textit{com-dblp} graph, with parameters $\alpha = 0.01$ and $\eps = 0.1/n$. AESP-\textsc{LocAPPR} and AESP-\textsc{LocGD} has rapid error reduction within the first $1\times 10^7$ operations.

\begin{figure}[htb]
\centering
\includegraphics[width=1\linewidth]{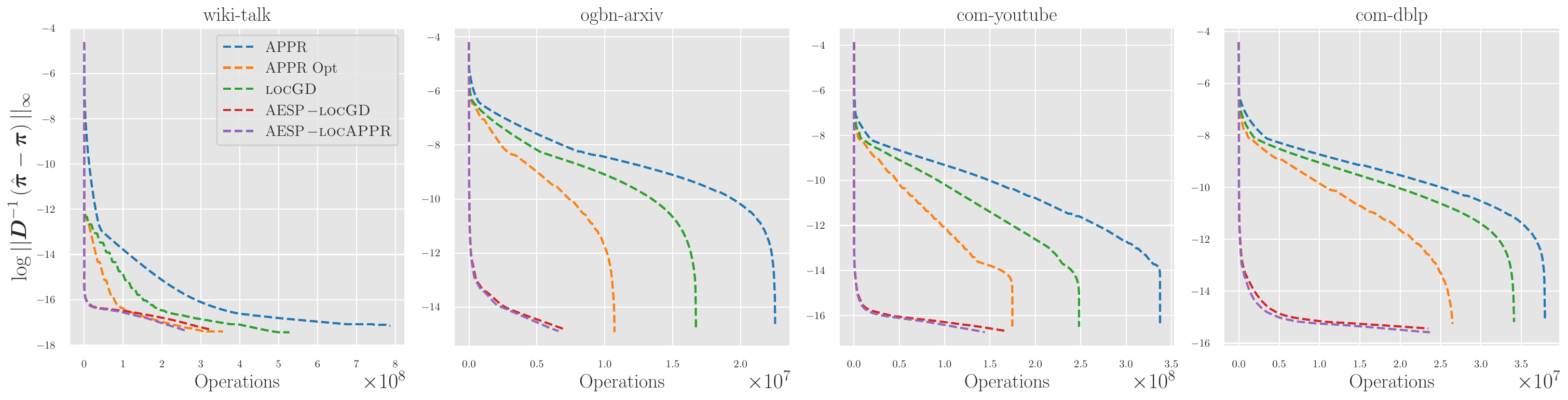}
\caption{Performance comparison of five local solvers across four graphs: \textit{wiki-talk}, \textit{ogbn-arxiv}, \textit{com-youtube}, and \textit{com-dblp} (with  parameters $\alpha=0.01$ and $\eps=0.1/n$).}
\label{fig:missing_method_comparison}
\end{figure}

Fig. \ref{fig:missing_method_comparison} presents results on the estimation error reduction for 4 datasets: \textit{wiki-talk}, \textit{ogbn-arxiv}, \textit{com-youtube}, and \textit{com-dblp}. The acceleration effect of the AESP method is particularly evident in the initial stages.

\textbf{Full results of 19 graphs.}
Fig. \ref{fig:full_experiment_19_graphs} demonstrates the performance comparison of our proposed algorithm against baseline methods (APPR, APPR Opt, and LocGD) across 19 graphs of varying scales, while Table \ref{tab:opers_19_graphs} and \ref{tab:runtime_19_graphs} present the corresponding operation counts and running times across different graphs. The results clearly show that AESP-based methods (AESP-\textsc{LocAPPR} and AESP-\textsc{LocGD}) achieve significantly faster error reduction during initial iterations, highlighting their superior convergence properties, while maintaining robust performance across all graph scales from small to extremely large graphs, which substantiates the algorithmic robustness. 

Table \ref{tab:runtime_19_graphs} reveals that our algorithm exhibits suboptimal performance on certain graphs, which can be attributed to the computational overhead introduced by the iterative parameter initialization process (particularly for $\varphi_t$ and $\epsilon_t$ in inner-loops). While this initialization overhead marginally increases runtime in some cases, it crucially enables the superior convergence rates. What's more, this trade-off between initialization overhead and convergence acceleration becomes increasingly favorable as the graph size grows.

\begin{figure}[hbt]
    \centering
    \includegraphics[width=1\linewidth]{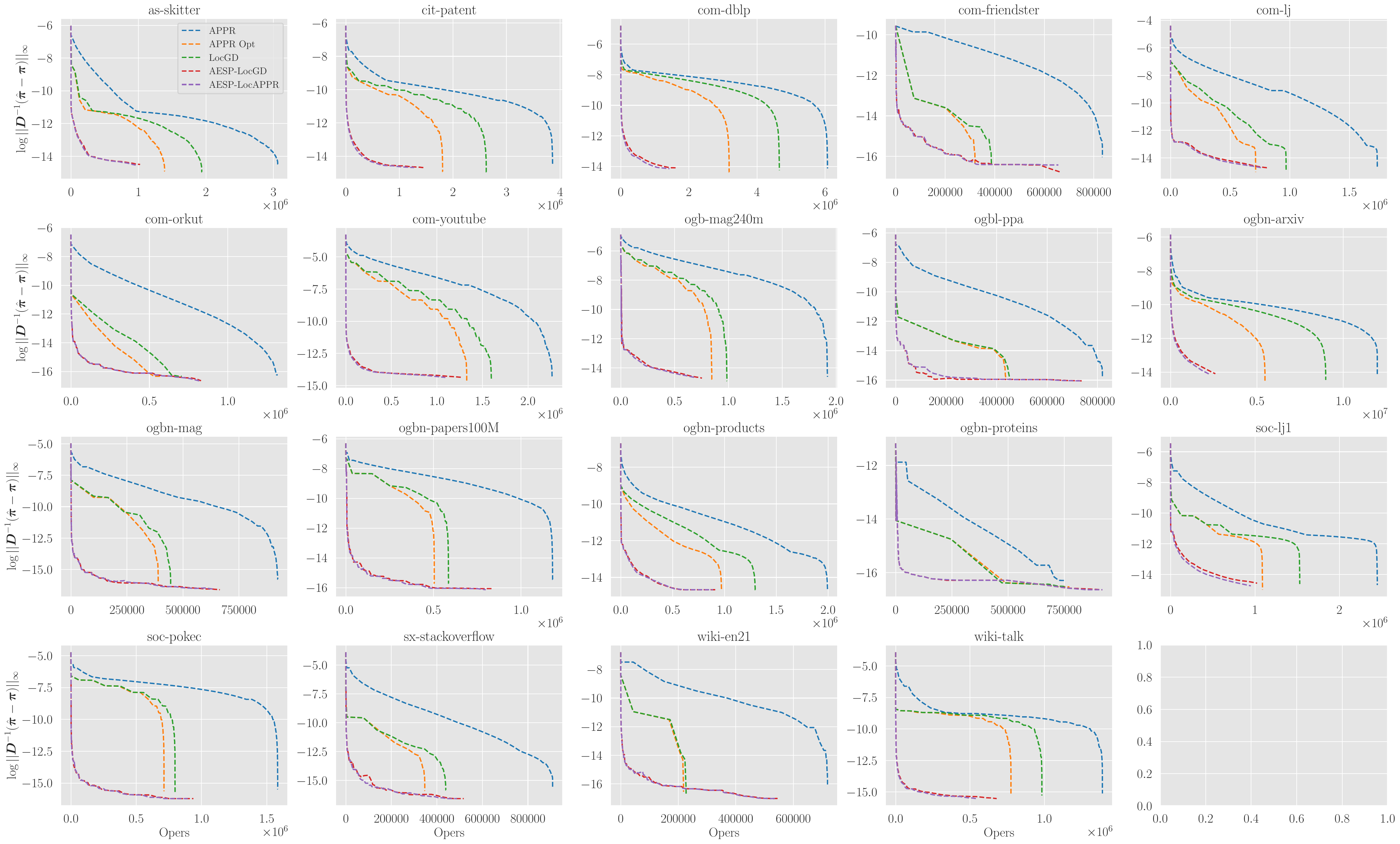}
    \caption{Comparison of five local solvers over 19 graphs}
    \label{fig:full_experiment_19_graphs}
\end{figure}

\textbf{Initialization of $\vz_{t}^{(0)}$.} Fig. \ref{fig:diff-initials} presents a comprehensive comparison of different initialization strategies for the inner-loop optimization in AESP-LocAPPR, where we identify $\vz_t^{(0)} =  \vy^{(t-1)}$ as the recommended choice based on empirical evidence. This supplementary investigation further evaluates the performance of AESP-\textsc{LocAPPR} and AESP-\textsc{LocGD} under varying initialization approaches ($\vy^{(t-1)}$ versus momentum-free $\vx^{(t-1)}$ versus zero-initialization) with fixed parameters $\alpha=0.01$ and $\epsilon=0.1/n$. Fig. \ref{fig:warmstart_plot_two_method} demonstrating that the proposed $\vy^{(t-1)}$ initialization yields significantly superior convergence characteristics compared to both the $\vx^{(t-1)}$-based and cold-start alternatives. While all three initialization strategies ($\vy^{(t-1)}$, $\vx^{(t-1)}$ and zero-initialization) exhibit comparable performance during the initial iterations, the $\vy^{(t-1)}$-based approach establishes substantial superiority in later optimization stages.

\begin{figure}[htb]
    \centering
    \includegraphics[width=1\linewidth]{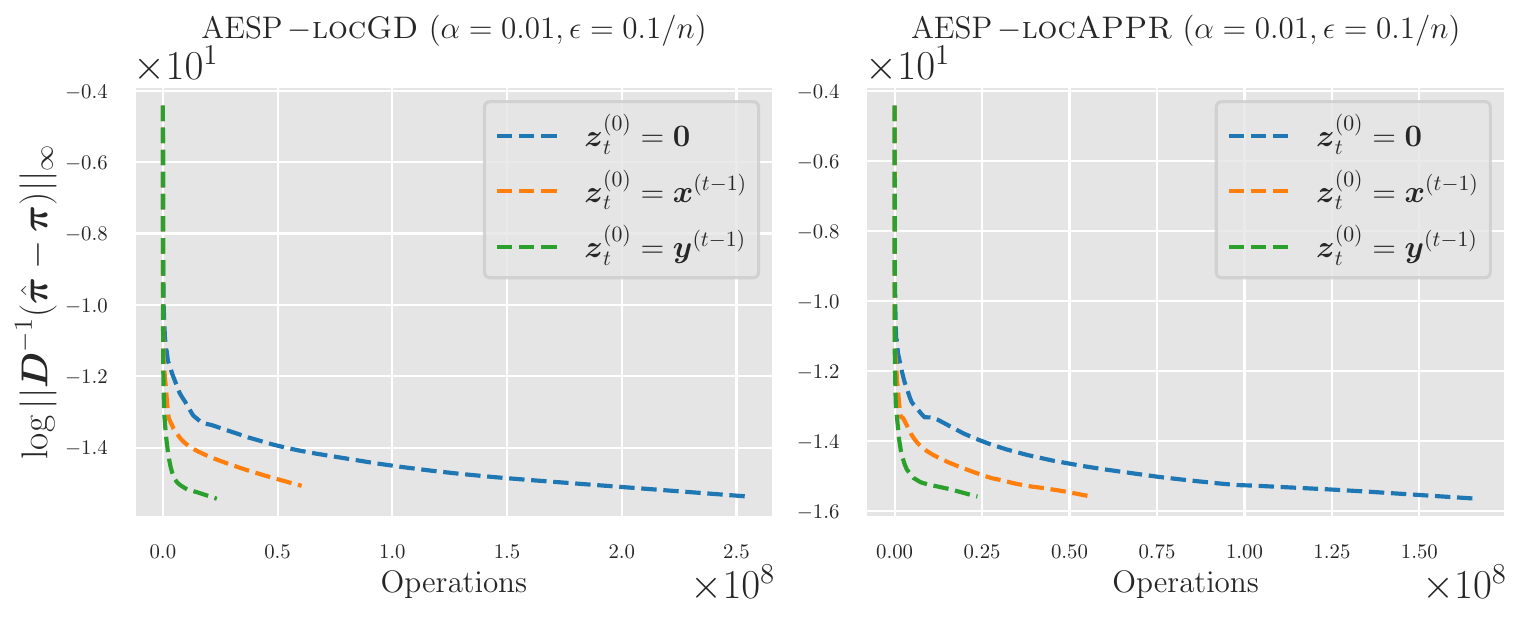}
    \caption{Comparison of AESP-\textsc{LocAPPR} and AESP-\textsc{LocGD} with three initializations on the graph \textit{com-dblp} (with parameters $\alpha=0.01$ and $\eps=0.1/n$).}
    \label{fig:warmstart_plot_two_method}
\end{figure}

\end{document}